%% file: hym_preprint.tex
\documentclass[10pt,letterpaper]{article}

\usepackage[T1]{fontenc}
\usepackage[utf8]{inputenc}
\usepackage{times}
\usepackage[letterpaper,margin=1in]{geometry}
\usepackage{microtype}
\usepackage{amsthm}
\usepackage{amsmath}
\usepackage{amssymb}
\usepackage{mathtools}
\usepackage{mathrsfs}
\usepackage{bm}
\usepackage{graphicx}
\usepackage{algorithm}
\usepackage{algpseudocode}
\usepackage{caption}
\usepackage{placeins}
\usepackage{xcolor}
\usepackage{natbib}
\usepackage{url}
\usepackage{xurl}
\usepackage{hyperref}

\graphicspath{{./}}
\input{macros.tex}

\newtheorem{proposition}{Proposition}
\newtheorem{corollary}{Corollary}
\newtheorem{theorem}{Theorem}
\newtheorem{lemma}{Lemma}

\newtheorem{assumption}{Assumption}

\newcommand{\hym}{\texttt{HyM}}
\newcommand{\hymuon}{\texttt{HyMUON}}

\newcommand{\bDelta}{\boldsymbol{\Delta}}
\newcommand{\beps}{\boldsymbol{\varepsilon}}

\DeclareRobustCommand{\critical}[1]{\textcolor{red}{\textbf{[C: #1]}}}
\DeclareRobustCommand{\recom}[1]{\textcolor{blue}{\textbf{[R: #1]}}}
\DeclareRobustCommand{\minor}[1]{\textcolor{green!50!black}{\textbf{[M: #1]}}}

\hypersetup{
  colorlinks=true,
  linkcolor=blue!55!black,
  citecolor=green!40!black,
  urlcolor=blue!60!black,
  pdftitle={Hybrid Momentum Methods for Optimizing Deep Learning Models},
  pdfauthor={Haifeng Xu, Yijin Ren, Chenyu Zhou, Qi Deng, Dongdong Ge, Yinyu Ye}
}

\makeatletter
\renewcommand{\@maketitle}{%
  \newpage
  \null
  \vskip 2em%
  \begin{center}%
    {\LARGE\@title \par}%
    \vskip 1.6em%
    {\large \@author \par}%
  \end{center}%
  \par
  \vskip 1.5em}
\makeatother

\title{Hybrid Momentum Methods for Optimizing Deep Learning Models}
\author{%
  Haifeng Xu\textsuperscript{1}\thanks{Haifeng Xu and Yijin Ren contributed equally.}\quad
  Yijin Ren\textsuperscript{2}\footnotemark[1]\quad
  Chenyu Zhou\textsuperscript{3}\\[0.3em]
  Qi Deng\textsuperscript{3}\quad
  Dongdong Ge\textsuperscript{3}\quad
  Yinyu Ye\textsuperscript{3,4,5}\\[0.9em]
  {\normalsize
  \textsuperscript{1}Shanghai University of Finance and Economics\\
  \textsuperscript{2}Columbia University\\
  \textsuperscript{3}Shanghai Jiao Tong University\\
  \textsuperscript{4}SIMIS\\
  \textsuperscript{5}Stanford University}\\[0.9em]
  {\small\ttfamily
  \href{mailto:fengs10@outlook.com}{fengs10@outlook.com}\quad
  \href{mailto:yr2549@columbia.edu}{yr2549@columbia.edu}\quad
  \href{mailto:chenyuzhou@sjtu.edu.cn}{chenyuzhou@sjtu.edu.cn}\\
  \href{mailto:qdeng24@sjtu.edu.cn}{qdeng24@sjtu.edu.cn}\quad
  \href{mailto:ddge@sjtu.edu.cn}{ddge@sjtu.edu.cn}\quad
  \href{mailto:yinyu-ye@stanford.edu}{yinyu-ye@stanford.edu}}%
}
\date{}

\begin{document}

\maketitle

\input{abstract.tex}

\input{main.tex}

\bibliographystyle{plainnat}
\bibliography{ref}

\clearpage
\input{appendix.tex}

\end{document}

%% file: macros.tex
\makeatletter
\global\long\def\pnorm#1{\@ifnextchar_{\@normwith{#1}}{\@normdefault{#1}}}%
\global\long\def\@normwith#1_#2{\lVert#1\rVert_{#2}}%
\global\long\def\@normdefault#1{\lVert#1\rVert_2}%
\makeatother

\global\long\def\vertiii#1{\left\vert\kern-0.25ex\left\vert\kern-0.25ex\left\vert#1\right\vert\kern-0.25ex\right\vert\kern-0.25ex\right\vert}%

\global\long\def\zerobf{\mathbf{0}}%
\global\long\def\and{\mathrm{and}}%
\global\long\def\Ocal{\mathcal{O}}%
\global\long\def\bbf{\mathbf{b}}

\global\long\def\gbf{\mathbf{g}}

\global\long\def\mbf{\mathbf{m}}
\global\long\def\nbf{\mathbf{n}}

\global\long\def\ubf{\mathbf{u}}
\global\long\def\vbf{\mathbf{v}}
\global\long\def\wbf{\mathbf{w}}
\global\long\def\xbf{\mathbf{x}}
\global\long\def\ybf{\mathbf{y}}
\global\long\def\zbf{\mathbf{z}}

\global\long\def\Bbf{\mathbf{B}}

\global\long\def\Gbf{\mathbf{G}}

\global\long\def\Mbf{\mathbf{M}}

\global\long\def\Xbf{\mathbf{X}}
\global\long\def\Ybf{\mathbf{Y}}

\global\long\def\deltabf{\boldsymbol{\delta}}%

%% file: abstract.tex
\begin{abstract}
Momentum is central to stochastic optimization in deep learning, including the
training of large language models. Classical momentum methods typically rely
on a fast exponential moving average (EMA) scheme, which discounts past gradients
on a fixed timescale and may underweight older gradients that remain
informative. This creates a trade-off between retaining long-term history and
adapting to recent changes. We introduce Hybrid Momentum (\hym{}), which combines
a linearly weighted average of past gradients with a geometrically discounted fast
momentum. The former assigns weights proportional to the gradients' iteration
indices and retains a growing span of history, while the latter emphasizes recent
gradients. This hybrid construction is modular and can be combined with
adaptive preconditioning and other update transformations. In particular, we
propose \hymuon{}, which applies Newton--Schulz orthogonalization to the hybrid
direction. Illustrative examples and controlled experiments demonstrate the
benefits of combining gradient information across these two timescales.
Across image classification, machine translation, and language modeling, \hym{} and \hymuon{}
match or improve upon strong baselines. \hymuon{} improves over Muon on IWSLT14,
and in both the 100k- and 600k-step GPT-2 124M OpenWebText regimes. For
LLaMA-1.3B pretraining on C4, \hymuon{} achieves a final validation loss of
$2.777$, compared with $2.903$ for Muon.

\end{abstract}

%% file: main.tex
\section{Introduction}

Momentum methods improve stochastic optimization by combining the current gradient with information from earlier updates. In deep learning, this history is usually maintained through exponential moving averaging (EMA), which assigns geometrically decaying weights to past gradients, as in Heavy-Ball momentum and commonly used Nesterov updates \citep{polyak1964some,sutskever2013importance}. EMA emphasizes recent gradients while strongly reducing the influence of older gradients on subsequent updates. When gradient contributions that reduce a particular target loss occur intermittently across mini-batches, retaining a longer history may help accumulate these contributions.

By combining momentum with iterate averaging, \citet{defazio2024road} proposed the Schedule-Free method, which achieves strong empirical performance without a prescribed schedule for learning rate decay. The empirical success of Schedule-Free optimization motivates revisiting how this history is accumulated. To see how its averaging mechanism retains past gradients, consider the Schedule-Free SGD recurrence with a constant step size
\begin{equation}
\label{eq:sf-original}
\begin{aligned}
\ybf_t&=(1-\beta)\zbf_t+\beta\xbf_t,\\
\zbf_{t+1}&=\zbf_t-\eta\gbf_t,\\
\xbf_{t+1}&=(1-c_{t+1})\xbf_t+c_{t+1}\zbf_{t+1},
\end{aligned}
\end{equation}
where $\zbf_t$ is the base iterate, $\xbf_t$ is the averaged iterate, and $\ybf_t$ is the training iterate. Here $c_{t+1}=1/(t+1)$, $\zbf_1=\xbf_1$, and $\gbf_t$ is a stochastic gradient evaluated at $\ybf_t$. Eliminating $\xbf_t$ and $\zbf_t$ gives a representation using one sequence that exposes its gradient memory~\citep{morwani2025connections}:
\begin{equation}
\label{eq:sf-single}
\begin{aligned}
\nbf_{t} &= (1 - c_{t})\nbf_{t-1} + \gbf_{t},  \\
\ybf_{t+1} &= \ybf_{t} - \eta [{\beta c_{t+1} \nbf_{t} + (1 - \beta) \gbf_{t}}],
\end{aligned}
\end{equation}
with $c_t=1/t$ and $\nbf_0=\mathbf0$; Appendix~\ref{app:sf-reformulation} gives the derivation for completeness. In this form, the update combines a running weighted sum of past gradients with the current gradient. As training proceeds, this weighted sum spans a longer history, but each new gradient contributes a smaller fraction of its total weight. The historical component therefore responds more slowly to new gradients later in training, suggesting that a separate momentum state emphasizing recent gradients can complement this long-term accumulation.

Building on this perspective, we propose Hybrid Momentum (\hym{}), which combines a linearly weighted gradient average with a geometrically discounted fast momentum. The average assigns weights proportional to the gradients' iteration indices and retains a growing span of history, while the fast momentum emphasizes recent gradients. \hym{} also maintains an averaged iterate formed from its training iterates for evaluation and checkpointing.
Our convergence analysis via cumulative gradient weights accommodates this growing gradient memory
and establishes an $\Ocal(T^{-1/2})$ bound on the expected suboptimality of the
averaged iterate for smooth convex objectives.
Moreover, this hybrid momentum scheme is modular and can be combined with
adaptive preconditioning and other update transformations.
In particular, applying Muon's Newton--Schulz (NS) orthogonalization to the combined momentum~\citep{jordan2024muon}, we develop Hybrid Momentum Orthogonalized by NS (\hymuon{}), which applies to matrix-valued parameters.

We characterize the weights assigned to past gradients and use controlled
experiments to examine why this history can remain useful. These experiments show
that a gradient contribution retained in the optimizer state can still lower a target
loss thousands of steps later, and that replacing this contribution with a gradient
recomputed on the same batch at the current parameters can lower the loss further. We
also examine how gradients from different training examples vary in their usefulness
for a given target and how \hym{}'s relative performance changes with batch size.
Across the evaluated benchmarks, \hym{} matches or improves upon Schedule-Free on
image classification, while \hymuon{} improves upon Muon on IWSLT14 and across the
evaluated language modeling settings.

\textbf{Notation.}
The training and averaged iterates are $\ybf_t$ and $\xbf_t$; uppercase letters such as $\Ybf_t$ and $\Xbf_t$ denote the matrix variables. Vector norms are Euclidean, while $\|\cdot\|_{\mathrm{op}}$ and $\|\cdot\|_*$ denote the matrix operator and nuclear norms. Probabilistic notation is introduced where it is used.

\section{Related work}

Momentum methods use past information in two standard ways: by accumulating
gradients or by introducing inertial terms based on differences between successive
iterates \citep{polyak1964some,nesterov1983method}. For smooth convex stochastic
optimization, \citet{lan2012optimal} established an optimal convergence rate for
accelerated stochastic approximation; \citet{yang2016unified} and
\citet{ghadimi2016accelerated} analyzed stochastic momentum and accelerated gradient
methods for smooth nonconvex objectives. More recently,
\citet{wang2026generalized} unified Polyak and Nesterov variants within a generalized
SGDM framework and analyzed both convex and nonconvex smooth objectives. In deep
learning, EMA-Nesterov replaces the standard lookahead direction with an exponential
moving average of parameter updates, which stabilizes the extrapolation performed by a
base optimizer \citep{yau2026emanesterov}.

Several methods combine momentum components to capture different
timescales. Aggregated Heavy-Ball studies the convergence of schemes with multiple
components \citep{danilova2022aggregated}. AdEMAMix pairs two EMAs whose decay
coefficients differ by orders of magnitude, so that one retains gradients over
short histories and the other over long ones \citep{pagliardini2024ademamix}.
\hym{} also uses a fast state and a slow state, but replaces the slow EMA with
a linearly weighted gradient average. This replacement changes not only the length of the
retained history but also the distribution of weight over that history. AdEMAMix raises its
slow decay coefficient during warmup and then holds it at a prescribed value; consequently,
the fraction of its total gradient weight assigned to the first $\lfloor at\rfloor$ gradients
converges to zero for any fixed $a\in(0,1)$. For \hym{}, the corresponding fraction converges
to the positive constant $\frac{\beta}{2-\beta}a^2$.

\citet{morwani2025connections} show that Schedule-Free optimization
\citep{defazio2024road} can be expressed as accelerated SGD followed by
weight averaging, which separates the roles of gradient memory and
the averaged iterate. ScheduleFree+ restores Adam's inner state for the first moment and
extends this construction with Polyak adaptation of the learning rate, delayed
averaging, and an annealed interpolation between its raw and averaged iterates
\citep{defazio2026schedulefreeplus}. In ScheduleFree+, the averaged iterate is
still fed into the next gradient query; hence, the sequence that will be returned
also steers training. \hym{} separates the two. Its averaged iterate never
enters the gradient or the momentum recursion, and the update direction combines
the two momentum states alone. The convergence analyses also differ in kind.
The Schedule-Free analysis uses an online-to-batch conversion for convex
optimization \citep{defazio2024road}. Ours tracks how a gradient memory that grows
with training interacts with iterate averaging, and bounds the averaged iterate
directly.

Preconditioning is also important for training large language models.
Two common approaches are diagonal preconditioning, which scales each coordinate
by its own factor as in Adam and AdamW
\citep{kingma2015adam,loshchilov2019decoupled}, and structured matrix
preconditioning, as in Shampoo, K-FAC, and SOAP
\citep{gupta2018shampoo,martens2015kfac,vyas2025soap}.
More recently, Muon has been used to train large models
by applying NS orthogonalization to matrix momentum \citep{jordan2024muon}, which
motivated subsequent work on efficiency, stability, and distributed training
\citep{li2025normuon,kimiteam2025k2,ahn2025dion}.

\section{The Hybrid Momentum Methods}
\label{sec:methods}

\subsection{Hybrid Momentum (\hym{})}
\label{sec:our-method}

We consider stochastic optimization problems of the form
\begin{equation}
 \min_{\xbf\in\mathbb R^d} f(\xbf):=\mathbb E_\xi[f(\xbf;\xi)],
 \label{eq:stochastic-objective}
\end{equation}
where $\xi$ is a random variable. For this problem, we develop \hym{}
(Algorithm~\ref{alg:hym}) to retain
long-term gradient information while remaining responsive to recent gradients.
The method maintains two momentum states. The slow state
$\mathbf m_t$ is a linearly weighted average of the
gradients and draws on an increasingly long gradient history as training proceeds,
while the fast state $\mathbf b_t$ combines the gradients using weights similar to
those of an EMA and therefore assigns little weight to older gradients. \hym{} then combines the two
states to form the update direction. As in many classical optimization algorithms,
\hym{} maintains a training iterate and an averaged iterate. \hym{} evaluates the
gradient at the training iterate $\mathbf y_t$ and uses the combined update direction to obtain
$\mathbf y_{t+1}$. The averaged iterate
$\mathbf x_t$ is an average of the training iterates, and we evaluate the model
performance at $\mathbf x_t$. The
decay factor $\mu$ determines the geometric decay of the fast state, and the mixing
weight $\beta$ determines both the relative contributions of the two states and the
weight assigned to the current training iterate in the averaged iterate.

\begin{algorithm}[H]
\renewcommand{\algorithmicrequire}{\textbf{Input:}}
\renewcommand{\algorithmicensure}{\textbf{Output:}}
\caption{Hybrid Momentum (\hym{})}
\label{alg:hym}
\begin{algorithmic}[1]
\Require $\mathbf y_1\in\mathbb R^d$, step size $\eta>0$, coefficient
$\mu\in(0,1)$ for the fast momentum, mixing weight $\beta\in[0,1)$, batch size
$B\in\mathbb N$, and iterations $T\in\mathbb N$; initialize
$\mathbf m_0=\mathbf b_0=\mathbf0$, $\mathbf x_0=\mathbf y_1$
\For{$t=1,\ldots,T$}
\State Sample a mini-batch $\mathcal B_t=(\xi_{t,j})_{j=1}^B$
\State $\mathbf g_t=\frac{1}{B}\sum_{j=1}^B\nabla f(\mathbf y_t;\xi_{t,j})$
\State $\mathbf m_t=\frac{t-1}{t+1}\mathbf m_{t-1}+\frac{2}{t+1}\mathbf g_t$
\Comment{Slow state}\label{algline:hym-slow}
\State $\mathbf b_t=\mu\left(1-\frac{1}{t}\right)\mathbf b_{t-1}+\mathbf g_t$
\Comment{Fast state}\label{algline:hym-fast}
\State $\mathbf\Delta_t=\beta r_t\mathbf m_t+(1-\beta)\mathbf b_t$, where
$r_t=\frac{t+1}{2t}\frac{1-\mu^t}{1-\mu}$
\label{algline:hym-direction}
\State $\mathbf x_t=(1-a_t)\mathbf x_{t-1}+a_t\mathbf y_t$, where $a_t=\frac{1}{1+(1-\beta)(t-1)}$
\label{algline:hym-average}
\State $\mathbf y_{t+1}=\mathbf y_t-\eta\mathbf\Delta_t$
\label{algline:hym-step}
\EndFor
\Ensure $\mathbf x_T$
\end{algorithmic}
\end{algorithm}

\paragraph{Two momentum states.}
The two states can be written as the following weighted sums of past gradients:
\begin{equation}
 \begin{aligned}
 \mathbf m_t&=\frac{\sum_{i=1}^t i\,\mathbf g_i}{\sum_{i=1}^t i}
 =\sum_{i=1}^t\frac{2i}{t(t+1)}\mathbf g_i, \ \text{and} \quad
 \mathbf b_t&=\sum_{i=1}^t\frac{i}{t}\mu^{t-i}\mathbf g_i.
 \end{aligned}
 \label{eq:hym-momentum-weights}
\end{equation}
At iteration $t$, the slow state assigns $\mathbf g_i$ the weight
$2i/[t(t+1)]$, which increases linearly with $i$ across the entire training history.
The fast state also contains the factor $i$, together with the additional factor
$\mu^{t-i}$. This discount concentrates its weight on recent gradients. The decay
coefficient $\mu(1-1/t)$ converges to $\mu$. Hence, for large $t$, the fast state
behaves like conventional EMA momentum and differs from an EMA with a fixed decay rate mainly
early in training.
Because both states are maintained recursively, neither requires storing past gradients
individually.

\hym{} uses the mixing weight $\beta$ to form the update direction as a linear
combination of the scaled slow state $r_t\mathbf m_t$ and the fast state $\mathbf b_t$.

\paragraph{Training and averaged iterates.}
Unrolling the averaging recursion gives
$\mathbf x_t=\sum_{i=1}^t\pi_{t,i}\mathbf y_i$, where
$\pi_{t,i}=a_i\prod_{j=i+1}^t(1-a_j)$ and
$\sum_{i=1}^t\pi_{t,i}=1$. The averaging coefficient $a_t$ is determined by $\beta$,
which also controls the relative contributions of the two momentum states. For fixed
$t$, a larger $\beta$ places more weight on the current training iterate
$\mathbf y_t$. The averaged iterate is a running
average of the training iterates rather than an independent sequence, and it never
enters the gradient or the momentum recursion. The training trajectory is therefore
$\{\mathbf y_t\}$ alone, and the sequence $\{\mathbf x_t\}$ is used for evaluation,
with the algorithm returning $\mathbf x_T$.

\subsection{Extension to Matrix-Valued Parameters: \hymuon{}}
\label{sec:hymuon}

In practical machine learning, many model parameters are represented as matrices,
and Muon exploits this structure effectively. Motivated by this observation, we
consider the following matrix-valued stochastic optimization problem:
\begin{equation}
 \min_{\Xbf\in\mathbb R^{m\times n}} f(\Xbf):=\mathbb E_\xi[f(\Xbf;\xi)].
 \label{eq:matrix-stochastic-objective}
\end{equation}
For hidden weight matrices, \hymuon{} applies the two momentum recurrences of \hym{} in Algorithm~\ref{alg:hym} to matrix gradients. It then applies Muon's NS orthogonalization to the combined momentum $\mathbf D_t$ \citep{jordan2024muon}:
\begin{equation}
    \Ybf_{t+1}=\Ybf_t-\eta\,\mathrm{NS}(\mathbf D_t),
    \label{eq:hymuon-matrix-update}
\end{equation}
where $\mathrm{NS}(\cdot)$ denotes the output of finitely many NS iterations approximating the polar factor. The averaged matrix iterate $\Xbf_t$ follows the same averaging rule as in \hym{}. When the optimizer is applied to the entire model, hidden weight matrices form the matrix group above; all remaining parameters form an auxiliary group that applies the same construction with slow and fast states separately to estimates of the first and second moments, followed by coordinatewise scaling as in Adam. Appendix~\ref{app:hymuon-analysis} gives both update recurrences and the implementation for the entire model.

\section{Why Past Gradients Can Remain Useful}
\label{sec:hym-understanding}

From Algorithm~\ref{alg:hym}, we can derive that, for any proportion
$a\in(0,1)$, the first $\lfloor at\rfloor$ gradients account for a fraction of
\hym{}'s total gradient weight that converges to
$\frac{\beta}{2-\beta}a^2$, which is positive, as $t\to\infty$, whereas the
corresponding fraction under EMA momentum converges to zero.
Thus, compared with EMA momentum, \hym{} asymptotically retains more weight on
older gradients. In this section, we examine why retaining such a long history can
be useful. We first
characterize how the two states distribute weight over the gradient history. We then
test whether a retained gradient can still affect training thousands of steps later.
Finally, we examine whether gradients computed from examples in the actual training
data differ substantially in their usefulness for a given target.

\subsection{Retaining history while responding to change}
\label{sec:hym-weights}

\hym{} combines two momentum states that cover different ranges of the gradient
history. The slow state draws on an increasingly long history, whereas the fast state
remains concentrated on recent gradients. We quantify this difference using the
average lag and then examine the resulting weights in the update direction.

At iteration $t$, a gradient $\mathbf g_i$ computed at iteration $i$ is $t-i$
iterations old. To summarize the age of the gradients used by a momentum state, we
normalize its weights to sum to one and define the average lag as the weighted average
of $t-i$ over $i=1,\ldots,t$. It is $(t-1)/3$ for the slow state and converges to
$\mu/(1-\mu)$ for the fast state as $t$ grows. Hence, the
average lag of the slow state increases with $t$, whereas that of the fast state
remains bounded.

Unrolling the two momentum states gives the weight $w_{t,i}$ assigned to
$\mathbf g_i$ in the update direction:
\begin{equation}
 \mathbf\Delta_t=\sum_{i=1}^t w_{t,i}\mathbf g_i,\qquad
 w_{t,i}=\frac{i}{t}
 \left(\frac{\beta(1-\mu^t)}{t(1-\mu)}+(1-\beta)\mu^{t-i}\right).
 \label{eq:hym-kernel}
\end{equation}
Figure~\ref{fig:weight_profiles} compares these weights with those of classical
EMA momentum and AdEMAMix at $t=1{,}000$, $10{,}000$, and $50{,}000$. For each
method, it plots the weight on $\mathbf g_i$ against $i$, normalized by the weight
on the most recent gradient $\mathbf g_t$. The figure therefore shows how the
relative weight on older gradients changes as training proceeds.

In the left panel of Figure~\ref{fig:weight_profiles}, \hym{}'s weights rise
approximately linearly with $i$. In the middle and right panels, the geometric term is
increasingly concentrated near $i=t$, producing a peak over the most recent gradients.
For fixed $\mu,\beta\in(0,1)$ and any $a\in(0,1)$, the gradients with $i\le at$
retain a positive fraction of the total weight in the \hym{} update direction as
$t\to\infty$ (Appendix~\ref{app:hym-history-proof}). In contrast, for an EMA with
any fixed decay rate below one, the corresponding fraction converges to zero.
Together, the two states allow the update direction to retain weight over an expanding
history while remaining responsive to recent gradients. Whether a gradient retained
in this history can still affect training after many subsequent updates is an empirical
question, which we examine next.

\begin{figure}[t]
 \centering
 \includegraphics[width=0.32\linewidth]{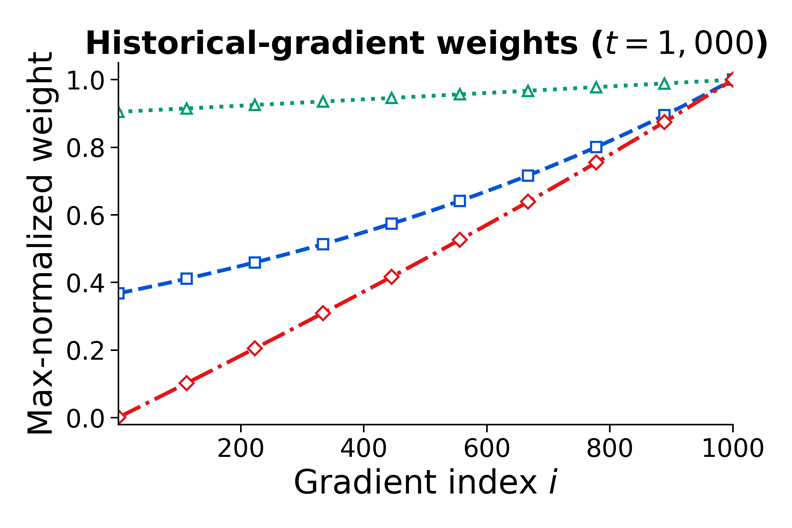}
 \includegraphics[width=0.32\linewidth]{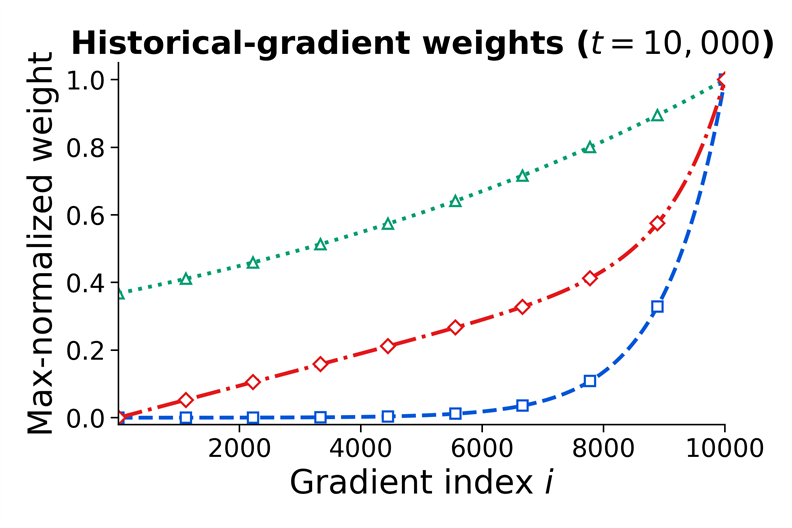}
 \includegraphics[width=0.32\linewidth]{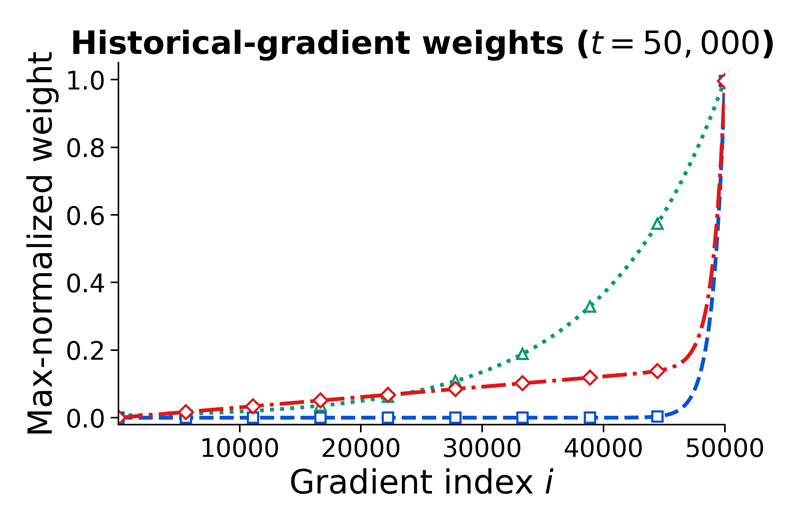}\\[-2pt]
 \includegraphics[width=0.96\linewidth]{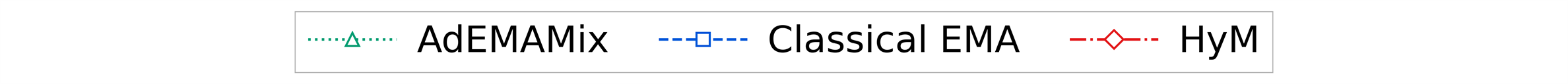}
 \caption{Gradient weight profiles for
 AdEMAMix ($\beta_3=0.9999$), classical EMA ($\mu=0.999$), and \hym{}
 ($\mu=0.999$, $\beta=0.9$) at $t\in\{1000,10000,50000\}$ (left to
 right). Each curve is normalized by the weight on the most recent gradient.}
 \label{fig:weight_profiles}
\end{figure}

\subsection{Old gradients can still affect later steps}
\label{sec:hym-forgetting}
\label{sec:hym-single-batch}

To test whether the effect of a single batch persists during subsequent
training, we adapt an experiment from AdEMAMix
\citep{pagliardini2024ademamix} to GPT-2 124M trained on OpenWebText. We exclude
a fixed batch $\mathcal B$ from the regular training sequence and use its loss
to track the lasting effect of a single update. Starting from a common checkpoint
immediately before step 5,000, we continue training along two branches. At step
5,000, the control branch uses the batch specified by the original training
sequence, whereas the inject branch replaces that batch with $\mathcal B$. Both
branches subsequently use the same sequence of training batches. At step 10,000,
the loss on $\mathcal B$ in the control branch exceeds that in the inject branch
by approximately $0.030$ for Muon and $0.016$ for \hymuon{}
(Figure~\ref{fig:forgetting}, left). Thus, the effect of a single
update based on $\mathcal B$ remains measurable after 5,000 subsequent training
steps.

\label{sec:hym-state-intervention}
We next conduct a further experiment by modifying the contribution of
$\mathcal B$ retained in the slow state. At step 7,500, we split the inject
branch from the preceding experiment into three branches: Keep, Delete, and
Refresh. At this point, the contribution introduced at step 5,000 still has
weight $5{,}000/7{,}500=2/3$ in the slow state. Keep leaves this contribution
unchanged, Delete removes it, and Refresh replaces it with a gradient computed
on $\mathcal B$ at the current model parameters and rescaled to have the same
norm. At step 7,500, the three branches differ only in this contribution to the
slow state; their model parameters, averaged iterate $\mathbf x_t$, and all
other optimizer states are otherwise identical.
Appendix~\ref{app:forgetting-state} gives the precise construction.

\begin{figure}[!t]
 \centering
 \includegraphics[width=0.49\linewidth]{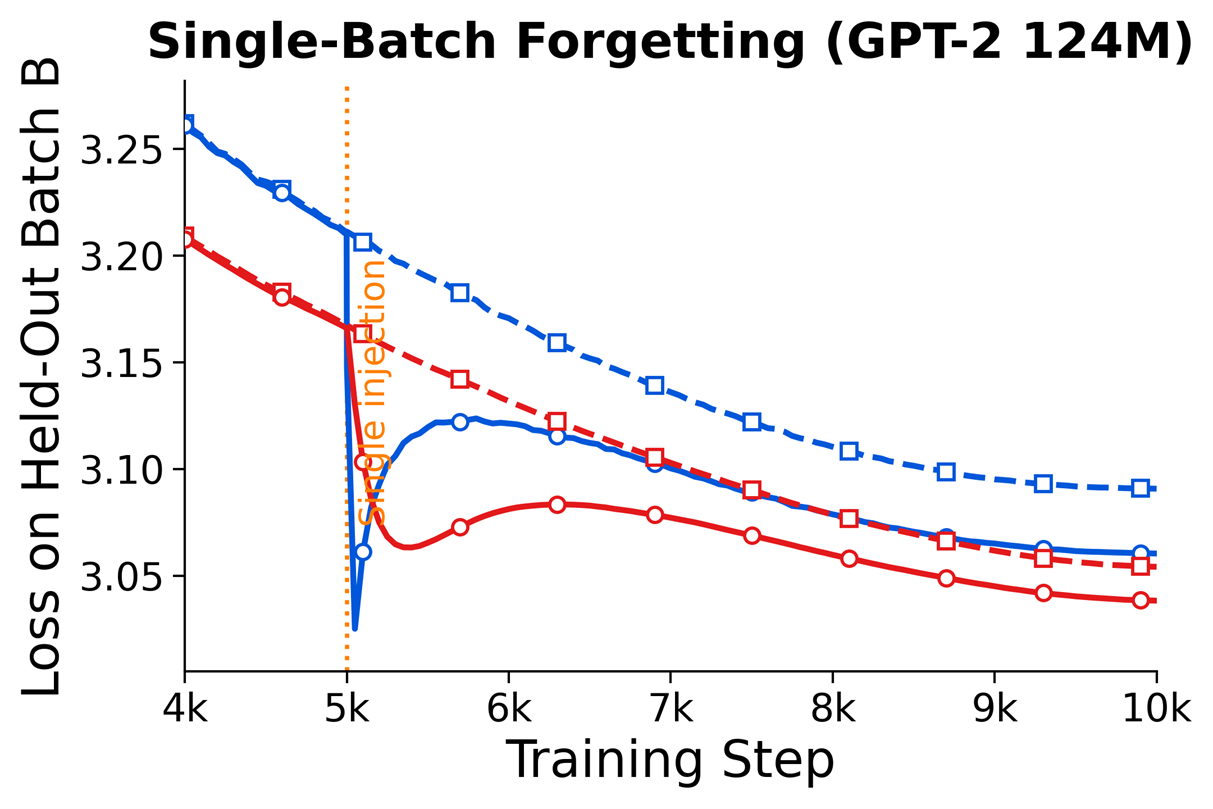}\hfill
 \includegraphics[width=0.49\linewidth]{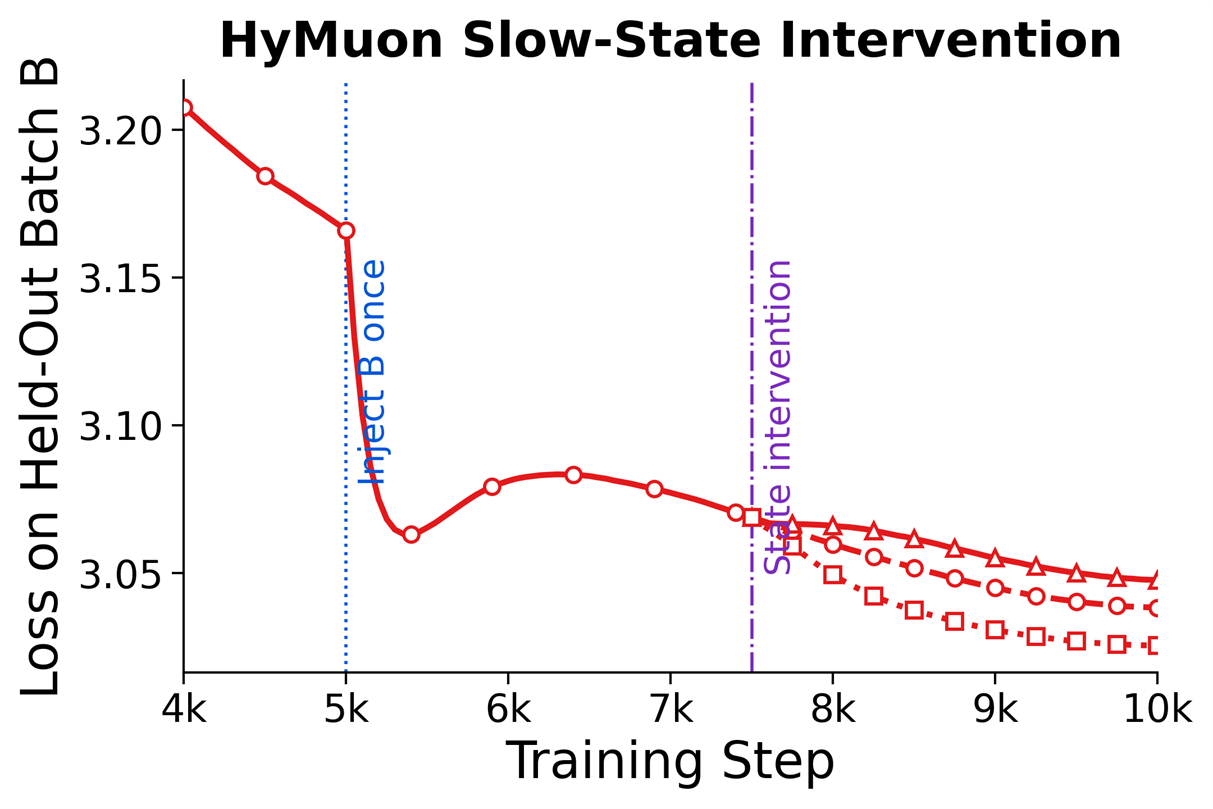}\\[-2pt]
 \includegraphics[width=0.98\linewidth]{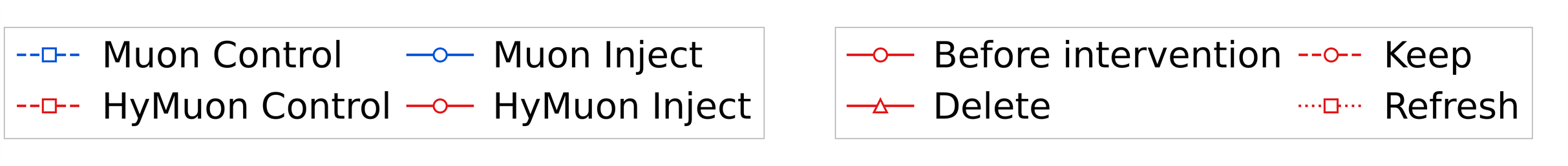}
 \caption{Persistence of the effect of a single batch and of its contribution
 retained in the slow state. Left: loss on batch $\mathcal B$ for
 the control and inject branches. The two branches differ at step 5,000, when
 the inject branch takes one optimization step using the gradient computed on
 $\mathcal B$; they then continue with the same sequence of batches. Right: at
 step 7,500, the inject branch is split into Keep, Delete, and Refresh, which
 retain, remove, or replace the contribution of $\mathcal B$ in the slow state.
 At this step, the three branches have the same model parameters, averaged
 iterate, and all other optimizer states. Refresh uses a gradient computed on
 $\mathcal B$ at the current model parameters and rescales it to match the
 groupwise norm of the original contribution. \hymuon{} curves are evaluated
 at the averaged iterate, whereas Muon curves are evaluated at the training
 iterate; all curves are unsmoothed. Full protocols are given in
 Appendices~\ref{app:forgetting} and~\ref{app:forgetting-state}.}
 \label{fig:forgetting}
 \label{fig:forgetting-state}
\end{figure}

At step 10k, \hymuon{} Delete, Keep, and Refresh give target losses of
$3.048$, $3.038$, and $3.025$, respectively. Its stored contribution
therefore still lowers the target loss, and refreshing it lowers the loss
further. Muon's Keep, Delete, and Refresh values differ by less than $0.002$
and overlap at the scale of the plot. Together, the two panels of
Figure~\ref{fig:forgetting} show that a single update on $\mathcal B$ can have
a lasting effect and that, in \hymuon{}, the contribution retained in the slow
state continues to affect the loss on $\mathcal B$.

\subsection{Useful contributions are uneven across training examples}
\label{sec:hym-narrow-signal}

The preceding experiment deliberately inserts a fixed batch $\mathcal B$ to isolate
the persistence of its effect. We next ask whether examples drawn from the training data
naturally produce gradients that differ in their usefulness for the same target.
In language model pretraining, data are drawn from a distribution that can be divided
into several domains. For a particular domain, gradients computed from different
training examples may reduce its loss by different amounts. We examine this variation
across candidate passages at fixed model parameters.

To test this idea, we use groups of semantically related passages as semantic
targets, each representing a specific part of the data distribution. We conduct
this experiment at three checkpoints of GPT-2
124M and use 2,000 candidate passages from the training data, each containing
256 tokens. For each candidate passage, we restore the model to the same
checkpoint, compute the gradient on that passage, normalize the gradient, and
take one step with a fixed length. All candidates are therefore compared from
the same model parameters, without any influence from optimizer state. We use
E5 embeddings \citep{wang2022e5} to construct ten semantic targets from
passages in a separate validation set. Each semantic target is divided into a
reference set and an evaluation set. We use the same pretrained E5 encoder to
measure the semantic similarity between each candidate and the reference set.
After taking the candidate's gradient step, we measure how much this step
reduces the loss on the evaluation set. Appendix~\ref{app:narrow-signal}
provides additional construction details and statistics.

Figure~\ref{fig:narrow-percentiles} summarizes the results of this experiment.
It groups the candidate passages by their E5 similarity to each semantic target
and reports the mean improvement within each percentile. At all three checkpoints,
the improvement generally increases with similarity and rises particularly sharply
among the most similar candidates. Moreover, at each checkpoint, the $20\%$ of
candidates with the highest similarity account for approximately one third of the
positive improvement specific to the corresponding targets.
Appendix~\ref{app:narrow-signal} provides the precise definition, exact statistics,
and results for individual targets.

\begin{figure}[!htbp]
    \centering
    \includegraphics[width=\linewidth]{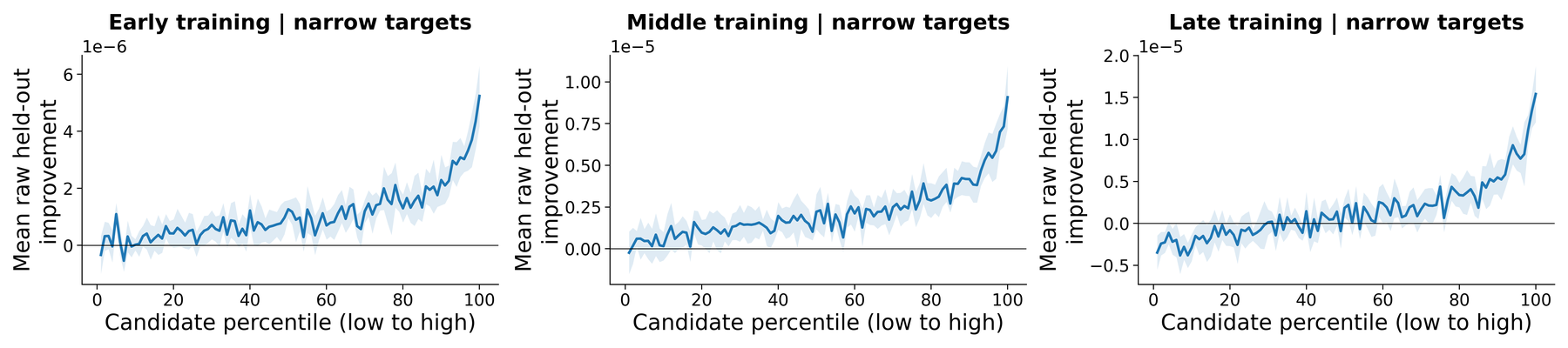}
    \caption{Loss reduction on each semantic target's evaluation set after a single
    update using the gradient of a candidate passage, at the 10k, 50k, and 100k
    checkpoints (left to right). For each target, the 2,000 candidate passages are
    ranked by their E5 similarity to the target's reference set and divided into 100
    groups of 20. The blue curves show the mean across ten semantic targets, and the
    shaded regions show error bands across targets. Positive values indicate a
    reduction in loss, and the vertical scales differ across panels.}
    \label{fig:narrow-percentiles}
\end{figure}

The experiment in Figure~\ref{fig:forgetting} shows that the effect of a single
gradient update can remain measurable thousands of steps later. The experiment in
Figure~\ref{fig:narrow-percentiles} shows that gradient usefulness varies substantially
across training passages: at each checkpoint, the candidates with the highest
similarity constitute only one fifth of all candidates but account for approximately
one third of the positive improvement specific to the corresponding targets. Thus,
gradients that are particularly useful for a target are not distributed evenly across
training examples. Together, these results support retaining gradients over a long
history. When such gradients occur intermittently, retaining their contributions
allows subsequent updates to continue using this information when later training
examples provide little useful gradient information for that target. The slow state
of \hym{} is designed to retain such contributions effectively in subsequent updates.

\section{Convergence Analysis}
\label{sec:hym-guarantees}

In this section, we establish the convergence of \hym{} for smooth convex objectives.
For EMA momentum with a fixed decay factor $\mu$, the average lag approaches the
constant $\mu/(1-\mu)$. Therefore, the average age of the gradients remains bounded as training
proceeds. In contrast, the slow state of \hym{} has average lag $(t-1)/3$, which grows
with $t$. To accommodate this growing history, we analyze a general representation in
which the cumulative weight of each stochastic gradient may vary with the iteration,
and show that the averaged iterate of \hym{} attains an $\Ocal(T^{-1/2})$ rate.

For the analysis, we introduce $\vbf_t$ through
$\ybf_t=(1-\beta)\vbf_t+\beta\xbf_t$.
From the recurrences in Algorithm~\ref{alg:hym}, we obtain
$\vbf_t=t\xbf_t-(t-1)\xbf_{t-1}
=\vbf_1-\eta\sum_{k=1}^{t-1}K_{t,k}\gbf_k$.
The first equality implies $\xbf_t=\frac1t\sum_{i=1}^t\vbf_i$, and
$-\eta K_{t,k}$ is the coefficient of $\gbf_k$ in $\vbf_t$.
For a fixed $\Gamma>0$, define $e_{t,k}=K_{t,k}-\Gamma$.
We consider the general form
\begin{equation}
 \begin{aligned}
 \xbf_t&=\frac1t\sum_{i=1}^t\vbf_i,\qquad
 \ybf_t=(1-\zeta_t)\vbf_t+\zeta_t\xbf_t,\\
 \vbf_t&=\vbf_1-\eta\sum_{k=1}^{t-1}
 (\Gamma+e_{t,k})\gbf_k.
 \end{aligned}
 \label{eq:kernel-framework}
\end{equation}
The coefficients $e_{t,k}$ are deterministic, and $\zeta_t\in[0,1]$ is chosen using
information available before the mini-batch at iteration $t$ is sampled.
For \hym{}, $\zeta_t=\beta$.
For Schedule-Free SGD, $K_{t,k}=1$ for all $1\le k<t$, and
$\zeta_t=\beta$. Choosing $\Gamma=1$ therefore gives $e_{t,k}=0$, in which case
\eqref{eq:kernel-framework} reduces to Schedule-Free SGD
\citep{defazio2024road}.

\begin{assumption}
\label{ass:convex-objective}
The objective $f:\mathbb R^d\to\mathbb R$ is convex, admits a minimizer
$\xbf^*$, and is $L$-smooth for some $L>0$: for all $\ubf,\wbf\in\mathbb R^d$,
$\|\nabla f(\ubf)-\nabla f(\wbf)\|\le L\|\ubf-\wbf\|$.
\end{assumption}

\begin{assumption}
\label{ass:minibatch-oracle}
Let $(\mathcal F_t)_{t\ge0}$ be the filtration generated by the training
history, where $\mathcal F_{t-1}$ contains all information available before
sampling $\mathcal B_t$. The iterate $\ybf_t$ is
$\mathcal F_{t-1}$-measurable, and $\gbf_t$ is $\mathcal F_t$-measurable.
Conditional on $\mathcal F_{t-1}$, the random variables
$\xi_{t,1},\ldots,\xi_{t,B}$ are independent copies of the random variable
$\xi$ in~\eqref{eq:stochastic-objective}. For some finite $\sigma\ge0$,
\[
 \mathbb E_\xi[\nabla f(\xbf;\xi)]=\nabla f(\xbf),\qquad
 \mathbb E_\xi\!\left[
 \|\nabla f(\xbf;\xi)-\nabla f(\xbf)\|^2
 \right]\le \sigma^2, \quad \forall \xbf\in\mathbb R^d.
\]
\end{assumption}

\begin{theorem}
\label{thm:kernel-square-stability}
Under Assumptions~\ref{ass:convex-objective} and~\ref{ass:minibatch-oracle},
suppose \eqref{eq:kernel-framework} holds with deterministic $\vbf_1$ and
$e_{t,k}$, and with each $\zeta_t\in[0,1]$ determined by
$\mathcal F_{t-1}$. Assume that
$\sup_{t\ge1}\sum_{k=1}^{t-1}e_{t,k}^2\le C_E<\infty$ for some $C_E>0$.
For a run of $T$ iterations, set $\eta=\eta_0/\sqrt T$,
where $\eta_0\in(0,[2L(\Gamma+2\sqrt{C_E})]^{-1}]$. Then
\begin{equation}
 \mathbb E[f(\xbf_T)-f(\xbf^*)]
 \le
 \frac{\|\vbf_1-\xbf^*\|^2/(2\eta_0\Gamma)+\eta_0\Gamma\varsigma^2/2}{\sqrt T}
 +\frac{2L\eta_0^2 C_E\varsigma^2}{T}.
 \label{eq:kernel-square-rate}
\end{equation}
where $\varsigma=\sigma/\sqrt B$.
\end{theorem}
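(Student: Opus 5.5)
The plan is to reduce the general form \eqref{eq:kernel-framework} to a perturbation of Schedule-Free SGD with step size $\eta\Gamma$, and then control the perturbation using the square-summability assumption on $e_{t,k}$.

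\textbf{Reference sequence.} Define
\[
\zbf_t=\vbf_1-\eta\Gamma\sum_{k<t}\gbf_k,\qquad
\deltabf_t=\vbf_t-\zbf_t=-\eta\sum_{k<t}e_{t,k}\gbf_k .
\]
The sequence $\zbf_t$ is exactly plain SGD with step $\eta\Gamma$ started at $\vbf_1$.

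\textbf{Online-to-batch step.} Since $\xbf_t=\frac1t\sum_{i\le t}\vbf_i$, we have $(t-1)(\xbf_t-\xbf_{t-1})=\vbf_t-\xbf_t$. Combining convexity at $\xbf^*$ with convexity between $\xbf_t$ and $\xbf_{t-1}$ should give the telescoping bound
\[
t\,[f(\xbf_t)-f^*]-(t-1)\,[f(\xbf_{t-1})-f^*]\le\langle\nabla f(\cdot),\vbf_t-\xbf^*\rangle .
\]
Following the Schedule-Free analysis of \citet{defazio2024road}, I would evaluate the gradient at $\ybf_t$ rather than at $\xbf_t$. Because $\ybf_t$ lies on the segment between $\vbf_t$ and $\xbf_t$, I would write the convexity steps with $\nabla f(\ybf_t)$ and keep the exact Bregman remainders. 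I would lower-bound these remainders using the $L$-smooth convex inequality
\[
f(\ubf)\ge f(\wbf)+\langle\nabla f(\wbf),\ubf-\wbf\rangle+\tfrac1{2L}\|\nabla f(\ubf)-\nabla f(\wbf)\|^2 .
\]
Applied with $\wbf=\ybf_t$ and $\ubf=\xbf^*$ (where $\nabla f(\xbf^*)=0$), this yields a negative term $-\frac1{2L}\|\nabla f(\ybf_t)\|^2$ at every step. Summing, the target inequality is
\[
T\,[f(\xbf_T)-f^*]\le\sum_{t=1}^T\Bigl(\langle\nabla f(\ybf_t),\zbf_t-\xbf^*\rangle+\langle\nabla f(\ybf_t),\deltabf_t\rangle-\tfrac1{2L}\|\nabla f(\ybf_t)\|^2\Bigr).
\]

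\textbf{SGD term.} Replacing $\nabla f(\ybf_t)$ by $\gbf_t$ in the first inner product costs nothing in expectation, because $\zbf_t$ is $\mathcal F_{t-1}$-measurable and the oracle is unbiased. The standard SGD regret argument then gives
\[
\mathbb E\sum_t\langle\gbf_t,\zbf_t-\xbf^*\rangle\le\frac{\|\vbf_1-\xbf^*\|^2}{2\eta\Gamma}+\frac{\eta\Gamma}2\sum_t\bigl(\mathbb E\|\nabla f(\ybf_t)\|^2+\varsigma^2\bigr).
\]
After dividing by $T$ and setting $\eta=\eta_0/\sqrt T$, the non-gradient parts produce exactly the first fraction in \eqref{eq:kernel-square-rate}. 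The remaining $\frac{\eta\Gamma}{2}\|\nabla f(\ybf_t)\|^2$ terms must be absorbed by the negative $\frac1{2L}$ terms.

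\textbf{Perturbation term (main obstacle).} The term $\langle\nabla f(\ybf_t),\deltabf_t\rangle$ is where I expect the difficulty. A naive Cauchy--Schwarz bound on $\|\sum_k e_{t,k}\gbf_k\|$ introduces a factor of $t$ and destroys the rate. Instead I would split each gradient as $\gbf_k=\nabla f(\ybf_k)+\beps_k$.

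For the noise part, the $\beps_k$ are martingale differences, so $\mathbb E\|\sum_k e_{t,k}\beps_k\|^2\le C_E\varsigma^2$ with no growth in $t$. I would bound $\langle\nabla f(\ybf_t),\cdot\rangle$ by Young's inequality with weight $\frac{1}{8L}$ on the gradient side. This gives a per-step cost of order $L\eta^2C_E\varsigma^2$, which sums to $2L\eta_0^2C_E\varsigma^2$ over $T$ steps and yields the $1/T$ term after dividing by $T$.

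For the deterministic part, I would apply Cauchy--Schwarz in the form $|\sum_k e_{t,k}\langle\nabla f(\ybf_t),\nabla f(\ybf_k)\rangle|\le\sqrt{C_E}\,\|\nabla f(\ybf_t)\|\,(\sum_k\|\nabla f(\ybf_k)\|^2)^{1/2}$. This needs to be turned into a bound by $\eta\sqrt{C_E}$ times diagonal squared-gradient sums. I expect this to be the delicate point, and I would first try a weighted Young/Schur-type argument on the matrix $(e_{t,k})$, since its rows have squared norms at most $C_E$.

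\textbf{Closing the argument.} The coefficient collected on each $\|\nabla f(\ybf_t)\|^2$ will then be of order $\eta(\Gamma/2+\sqrt{C_E})$. The step-size condition $\eta_0\le[2L(\Gamma+2\sqrt{C_E})]^{-1}$, together with $\eta\le\eta_0$, is designed to keep this coefficient below the budget $\frac1{2L}$ minus the Young weights already spent. With that, all gradient-norm terms cancel and \eqref{eq:kernel-square-rate} follows after dividing by $T$.
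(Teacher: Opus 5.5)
Your architecture is the paper's: the reference SGD sequence with step $\eta\Gamma$, the online-to-batch inequality strengthened by the Bregman term so that $-Q_T/(2L)$ is available (with $Q_T=\sum_t\mathbb E\|\nabla f(\ybf_t)\|^2$), the split $\gbf_k=\nabla f(\ybf_k)+\beps_k$, and martingale orthogonality plus Young with weight $1/(8L)$ for the noise part, which gives exactly the $2L\eta_0^2C_E\varsigma^2/T$ term.

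There is a gap in the deterministic interaction term $\eta\sum_t\sum_{k<t}(-e_{t,k})\langle\mathbf h_t,\mathbf h_k\rangle$, where $\mathbf h_t=\nabla f(\ybf_t)$. You try to bound it by a $T$-independent multiple of $\eta\sqrt{C_E}\sum_t\|\mathbf h_t\|^2$ through a Schur or weighted-Young argument. The row condition $\sum_{k<t}e_{t,k}^2\le C_E$ cannot give this, because Schur's test needs bounded $\ell^1$ row and column sums and the hypothesis controls neither. Concretely, $e_{t,k}=-\sqrt{C_E/t}$ for $k<t$ satisfies the row condition. Yet for constant $\mathbf h_t\equiv\mathbf h$ the term equals $\eta\sqrt{C_E}\,\|\mathbf h\|^2\sum_{t\le T}(t-1)/\sqrt t\approx\frac23\eta\sqrt{C_ET}\cdot T\|\mathbf h\|^2$. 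Any argument that treats the gradient sequence as arbitrary therefore loses a factor $\sqrt T$. Your closing step, which uses the step-size condition only through $\eta\le\eta_0$, misreads how that condition enters.

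The missing step is to accept this $\sqrt T$ and let the step size cancel it. Start from your own row-wise Cauchy--Schwarz bound and add $\sum_t\|\mathbf h_t\|\le\sqrt T\,(\sum_t\|\mathbf h_t\|^2)^{1/2}$. Pathwise, this gives $\sum_t\sum_{k<t}|e_{t,k}|\,\|\mathbf h_t\|\,\|\mathbf h_k\|\le\sqrt{C_ET}\sum_t\|\mathbf h_t\|^2$. The paper phrases the same fact as $\chi_T\le\|E_T\|_{\mathrm{op}}\le\|E_T\|_{\mathrm F}\le\sqrt{C_ET}$ for the lower-triangular matrix $E_T=(e_{t,k})$.

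With $\eta=\eta_0/\sqrt T$, the term costs $\eta_0\sqrt{C_E}\,Q_T$. Then $\frac{\eta\Gamma}{2}+\eta_0\sqrt{C_E}\le\frac{\eta_0}{2}(\Gamma+2\sqrt{C_E})\le\frac1{4L}$. Adding the $\frac1{8L}$ you spend on the noise keeps the total within the $\frac1{2L}$ budget from the Bregman term. This is the one place where the scaling $\eta\propto T^{-1/2}$ is structurally necessary, not just a regret trade-off. With this replacement, the rest of your plan goes through and yields exactly the stated bound.
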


For fixed momentum parameters, \hym{} satisfies the condition on $e_{t,k}$
in Theorem~\ref{thm:kernel-square-stability}, yielding the following convergence
guarantee.

\begin{corollary}
\label{cor:hym-convex-x}
Under Assumptions~\ref{ass:convex-objective} and~\ref{ass:minibatch-oracle},
consider Algorithm~\ref{alg:hym} with deterministic $\ybf_1$ and fixed
$\mu\in(0,1)$ and $\beta\in[0,1)$.
Choose $\eta=\eta_0/\sqrt T$, where $\eta_0>0$ is sufficiently small and
depends only on $L$, $\mu$, and $\beta$, not on $T$.
Then $\mathbb E[f(\xbf_T)-f(\xbf^*)]=\Ocal(T^{-1/2})$.
\end{corollary}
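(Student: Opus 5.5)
The plan is to obtain the corollary from Theorem~\ref{thm:kernel-square-stability} by checking that Algorithm~\ref{alg:hym} is an instance of \eqref{eq:kernel-framework}. We take $\zeta_t=\beta$ and $\vbf_1=\ybf_1$; the latter holds because $\xbf_1=\ybf_1$ when $a_1=1$. We then identify a constant $\Gamma>0$ depending only on $(\mu,\beta)$ for which the deviations $e_{t,k}=K_{t,k}-\Gamma$ have uniformly bounded squared sums. All coefficients $K_{t,k}$ are deterministic functions of $(t,k,\mu,\beta)$, since the recursions for $\mbf_t$, $\bbf_t$, $\xbf_t$ are linear with deterministic coefficients. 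Hence the hypotheses on $\vbf_1$, $e_{t,k}$ and $\zeta_t$ are immediate. The conclusion then follows by choosing $\eta_0\le[2L(\Gamma+2\sqrt{C_E})]^{-1}$, which depends only on $L,\mu,\beta$, and reading off \eqref{eq:kernel-square-rate}, whose right-hand side is $\Ocal(T^{-1/2})$ for fixed $\eta_0$, $\Gamma$, $C_E$, $\|\ybf_1-\xbf^*\|$ and $\varsigma$.

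\textbf{Computing $K_{t,k}$.} Using $\ybf_s=(1-\beta)\vbf_s+\beta\xbf_s$ and $\xbf_s=\frac1s\sum_{i\le s}\vbf_i$, I would first verify the identity $\vbf_t=t\xbf_t-(t-1)\xbf_{t-1}$ claimed before \eqref{eq:kernel-framework}. For this, note that $a_t=1/(1+(1-\beta)(t-1))$ is exactly the coefficient making the two averaging rules agree. Next I would solve the linear relation between $\ybf_{s+1}-\ybf_s=-\eta\bDelta_s$ and the increments of $\vbf$. Since $\xbf_{s+1}-\xbf_s=(\vbf_{s+1}-\xbf_s)/(s+1)$, one can express $\vbf_{s+1}-\vbf_s$ as $-\eta\bDelta_s$ times a factor of order $1/(1-\beta)$, plus a correction involving $\vbf_s-\xbf_s$. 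Unrolling this gives $\vbf_t=\vbf_1-\eta\sum_k K_{t,k}\gbf_k$ with
\[
K_{t,k}=\sum_{s=k}^{t-1}\tilde w_{s,k},
\]
where $\tilde w_{s,k}$ is a reweighting of the kernel $w_{s,k}$ in \eqref{eq:hym-kernel}. The role of passing from $\ybf$ to $\vbf$ is visible already in the raw kernel. In $\ybf_t$ itself, the slow part contributes $\beta(1-k/t)/(1-\mu)$ to the cumulative weight of $\gbf_k$, a deviation of order $k/t$ that is not square-summable. The $\vbf$-reparametrization absorbs exactly this $1-k/t$ profile into the iterate average, as in Schedule-Free, where $K_{t,k}\equiv1$.

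\textbf{Bounding $e_{t,k}$.} I expect $\Gamma=\frac{1}{(1-\beta)(1-\mu)}$, up to the precise normalization produced by the previous step. The goal is then a bound of the form
\[
|e_{t,k}|\le C\Bigl(\mu^{t-1-k}+\mu^{k}+\frac1k+\frac{k}{t^2}\Bigr)
\]
with $C=C(\mu,\beta)$. Here $\mu^{t-1-k}$ is the truncated geometric tail of the fast state, $\mu^k$ comes from the factors $1-\mu^s$ in $r_s$, and $1/k$ comes from the $i/s$ factors in $\bbf_s$ and from the discrepancy between sums and integrals for the slow state. Each of these is square-summable in $k$ uniformly in $t$: the first three give geometric series or $\sum_k 1/k^2$, and the last gives $\sum_{k<t}k^2/t^4=\Ocal(1/t)$. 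This yields $C_E<\infty$.

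\textbf{Main obstacle.} The hard part is the exact algebra in the computation of $K_{t,k}$. One must confirm that the slow-state contribution, after the change of variables to $\vbf$, becomes constant in $k$ up to $\Ocal(1/k)$ errors, rather than retaining a non-summable $k/t$ term. I would first do this separately for the slow state alone, i.e.\ the case $\mu$-part removed, comparing with the Schedule-Free computation in Appendix~\ref{app:sf-reformulation}. Only afterwards would I add the fast-state terms, which are local and easy to control.
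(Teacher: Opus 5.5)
Your reduction to Theorem~\ref{thm:kernel-square-stability} is the same as the paper's: $\zeta_t=\beta$, $\vbf_1=\ybf_1$, deterministic coefficients, and $\eta_0\le[2L(\Gamma+2\sqrt{C_E})]^{-1}$. The gap is the only substantive step, a $T$-independent bound on $\sum_{k<t}e_{t,k}^2$ (the paper's Lemma~\ref{lem:hym-convex-kernel}). You leave it open, and both heuristics you give for it are wrong in ways that would stop the plan. First, uniform square-summability forces $\Gamma$ to be the exact bulk value of $K_{t,k}$. That value is $(1-\mu)^{-1}$, not $[(1-\beta)(1-\mu)]^{-1}$. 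To see this, take a constant gradient. Asymptotically $\ybf$ advances by $\eta(1-\beta/2)/(1-\mu)$ per step. Since $\ybf_t=(1-\beta)\vbf_t+\beta\xbf_t$ with $\xbf_t=\frac1t\sum_{i\le t}\vbf_i$, $\ybf$ also advances at $(1-\beta/2)$ times the speed of $\vbf$, so $\vbf$ moves at rate $\eta/(1-\mu)$. With your guess, $e_{t,k}\approx-\beta/[(1-\beta)(1-\mu)]$ whenever $k$ and $t-k$ are both large, so the row sums grow linearly in $t$. The hedge ``up to normalization'' is doing all the work.

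Second, the cancellation does not happen inside the slow state, and the fast terms are not local after the change of variables. With $\deltabf_t=\ybf_t-\xbf_t$, unrolling $\deltabf_{t+1}=(1-a_{t+1})(\deltabf_t-\eta\bDelta_t)$ gives $\vbf_t=\ybf_1-\eta\sum_{i<t}\omega_{i,t}\bDelta_i$. Here $\omega_{i,t}=1+\frac{\beta}{1-\beta}\Pi_{i+1,t}\in[1,\frac1{1-\beta}]$, and $\Pi_{i+1,t}=\prod_{j=i+1}^t(1-a_j)$ behaves like $(i/t)^{1/(1-\beta)}$. Up to $\Ocal(1/k)+\Ocal(\mu^{t-k})$:
\begin{itemize}
\item the slow state contributes $\beta\Gamma(1-\Pi_{k+1,t})$ to $K_{t,k}$;
\item the fast state contributes $\Gamma(1-\beta+\beta\Pi_{k+1,t})$.
\end{itemize}
For $\beta>0$, each piece deviates from a constant by $\beta\Gamma\Pi_{k+1,t}$, and these squares sum to order $t$. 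So checking ``the slow state alone'' first fails, and treating the fast state as a local correction misses the cancellation.

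The missing idea is the exact identity $\beta\sum_{i=k}^{t-1}\omega_{i,t}\frac{k}{i(i+1)}+(1-\beta)\omega_{k,t}=1$. It follows by telescoping $\Pi_{i+1,t}/i$ and relies on the specific coupling of $a_t=[1+(1-\beta)(t-1)]^{-1}$ to the mixing weight. Subtract $\Gamma$ times this identity. The slow remainder is then $\Ocal(1/k)$, coming from the factor $1-\mu^i$ in $r_i$ and from $k/i^2$ versus $k/(i(i+1))$. The fast remainder is $\Ocal(1/k)+\Gamma\mu^{t-k}$, using $|\omega_{i,t}k/i-\omega_{k,t}|\le(i-k)/[(1-\beta)^2k]$. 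Together these give the paper's bound $|e_{t,k}|\le C_H/k+\Gamma\mu^{t-k}$.

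If by ``the $\mu$-part removed'' you meant $\mu=0$, the fast state is just the current gradient, as in Schedule-Free. That check is exactly this identity and does go through. But for $\mu>0$ the fast terms must then be compared with $(1-\beta)\Gamma\omega_{k,t}$, not with the constant $(1-\beta)\Gamma$.
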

For fixed $\eta_0$, $\mu$, and $\beta$, the constants are independent of $T$.
They grow, however, as $\mu$ or $\beta$ approaches $1$.
The result requires neither strong convexity nor uniformly bounded gradients.

\section{Empirical studies}
\label{sec:experiments}

We evaluate \hym{} and \hymuon{} on image classification, translation, and
language modeling. Figures~\ref{fig:image-classification} and
\ref{fig:language-tasks} show the main learning curves. Appendix~\ref{app:full-exp}
gives the complete configurations and evaluation protocols, and
Table~\ref{tab:main-results} summarizes the numerical results. CIFAR and SVHN use
their test splits, IWSLT reports test metrics, and ImageNet uses its official
validation split. The vision and IWSLT curves show means across repeated runs, and the shaded
regions show one standard error. The values in the vision panel legends are endpoint
means with sample standard deviations.

\begin{figure}[!htb]
    \centering
    \includegraphics[width=0.325\linewidth]{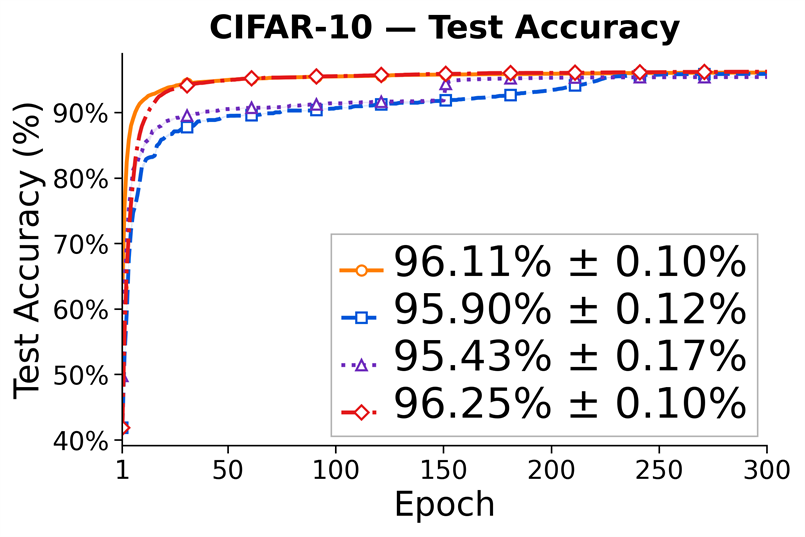}\hfill
    \includegraphics[width=0.325\linewidth]{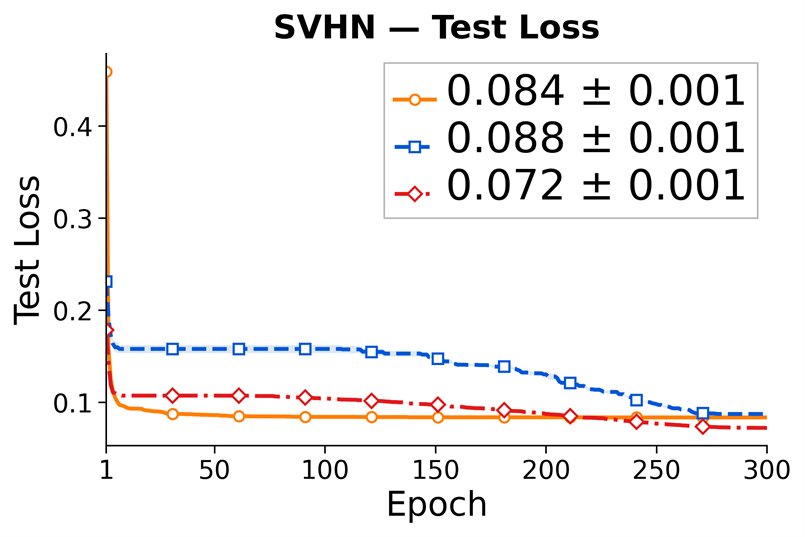}\hfill
    \includegraphics[width=0.325\linewidth]{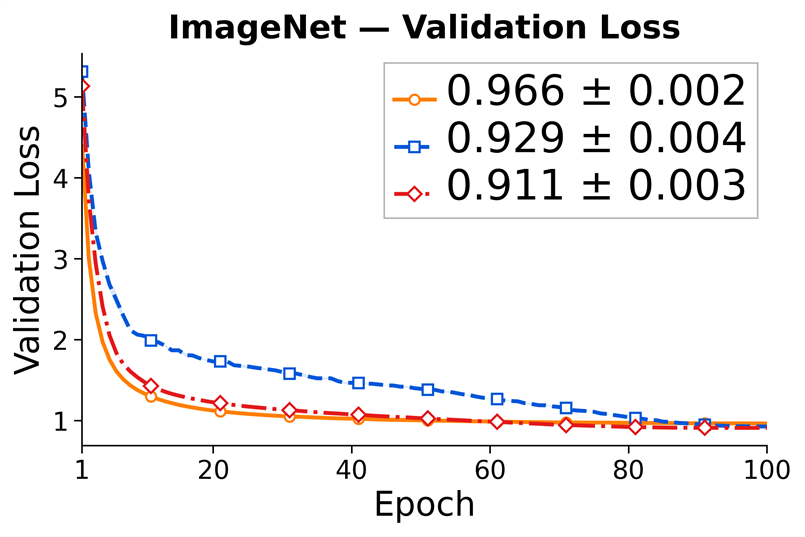}\\[4pt]
    \includegraphics[width=0.325\linewidth]{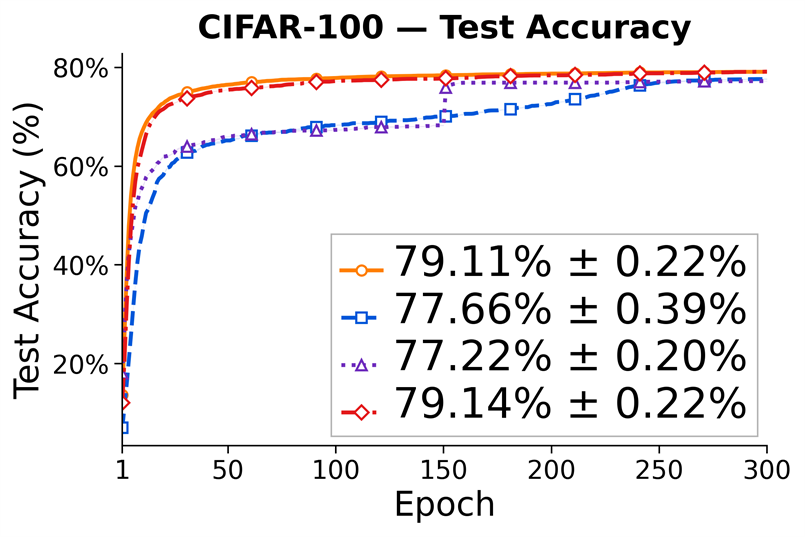}\hfill
    \includegraphics[width=0.325\linewidth]{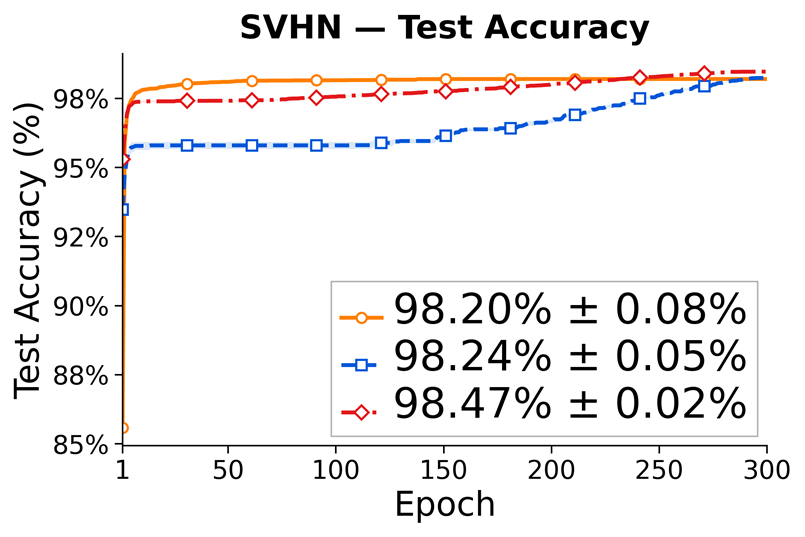}\hfill
    \includegraphics[width=0.325\linewidth]{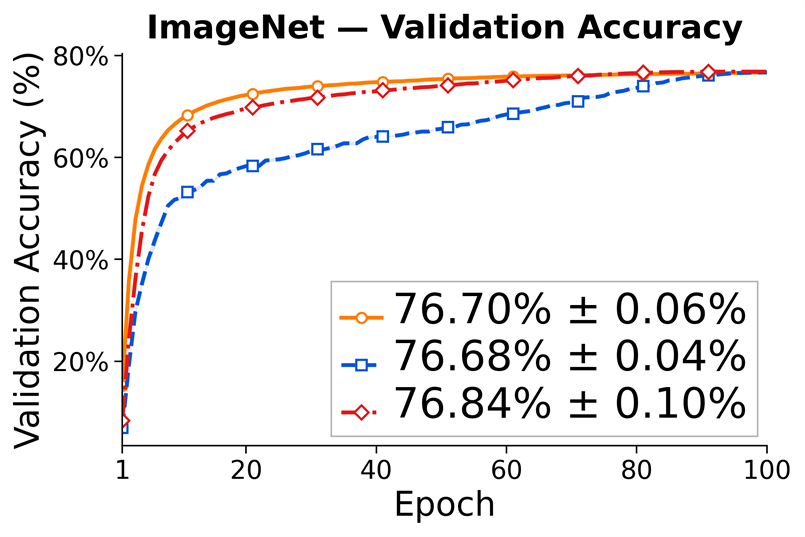}\\[-2pt]
    \includegraphics[width=0.975\linewidth]{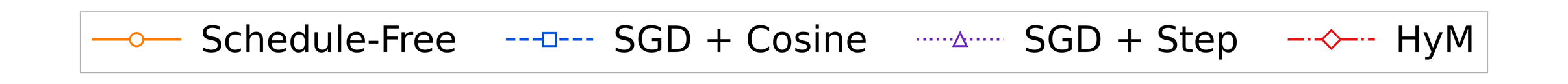}
    \caption{Image classification. Left: CIFAR-10 (top) and CIFAR-100 (bottom)
    test accuracy. Middle: SVHN test loss and accuracy. Right: ImageNet-1K
    validation loss and accuracy. Curves show means across repeated runs, and the
    shaded regions show one standard error. Values in the panel legends are endpoint
    means with sample standard deviations. Appendix
    Figure~\ref{fig:cifar-test-loss} adds the CIFAR test loss curves.}
    \label{fig:image-classification}
    \label{fig:small-scale}
    \label{fig:large-scale}
\end{figure}

\begin{figure}[!htb]
    \centering
    \includegraphics[width=0.325\linewidth]{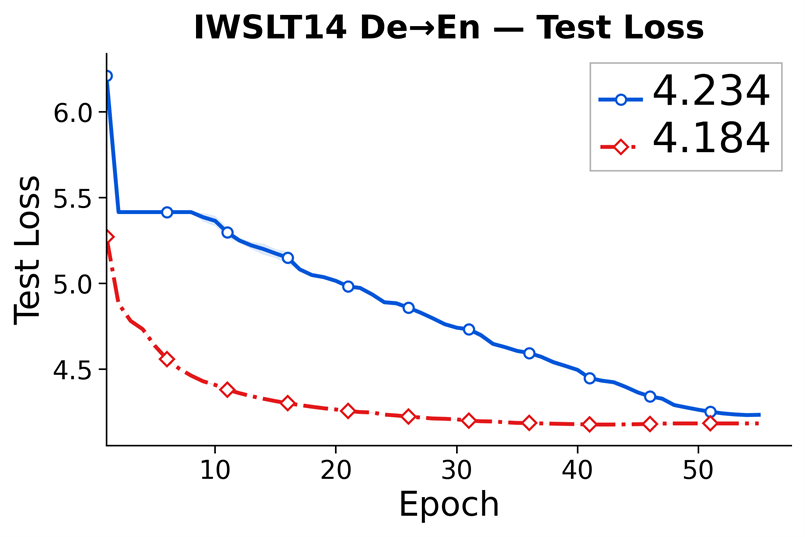}\hfill
    \includegraphics[width=0.325\linewidth]{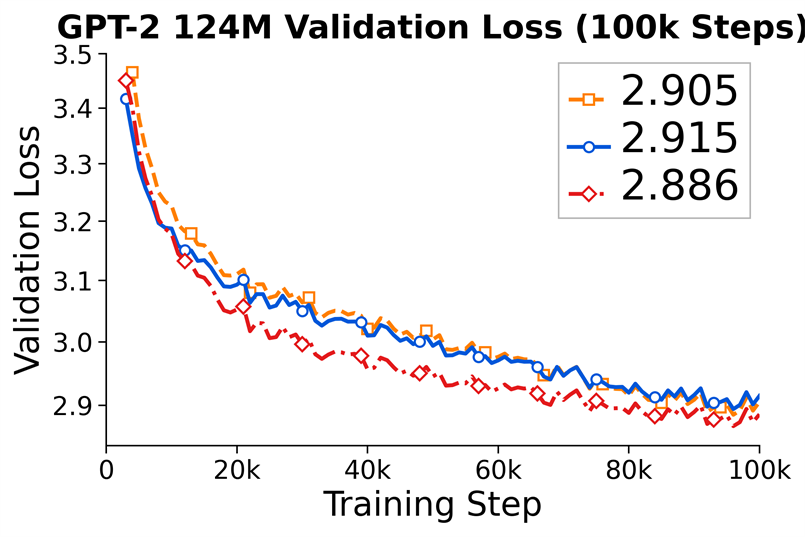}\hfill
    \includegraphics[width=0.325\linewidth]{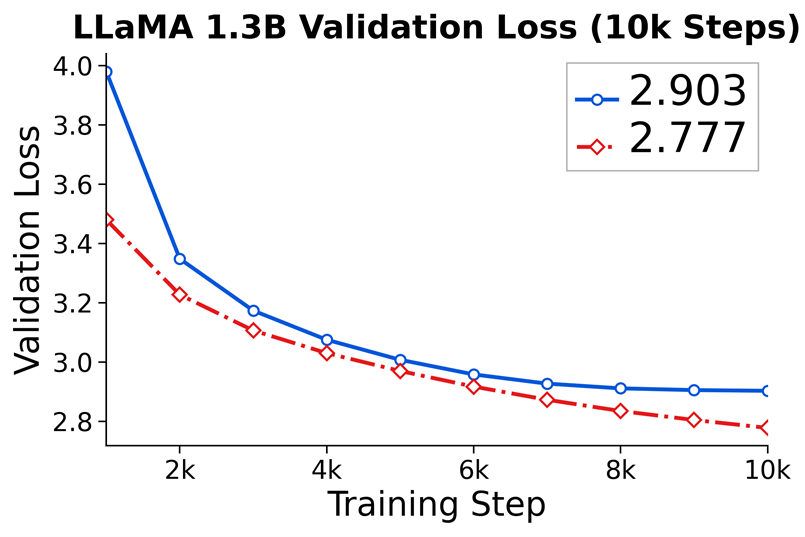}\\[4pt]
    \includegraphics[width=0.325\linewidth]{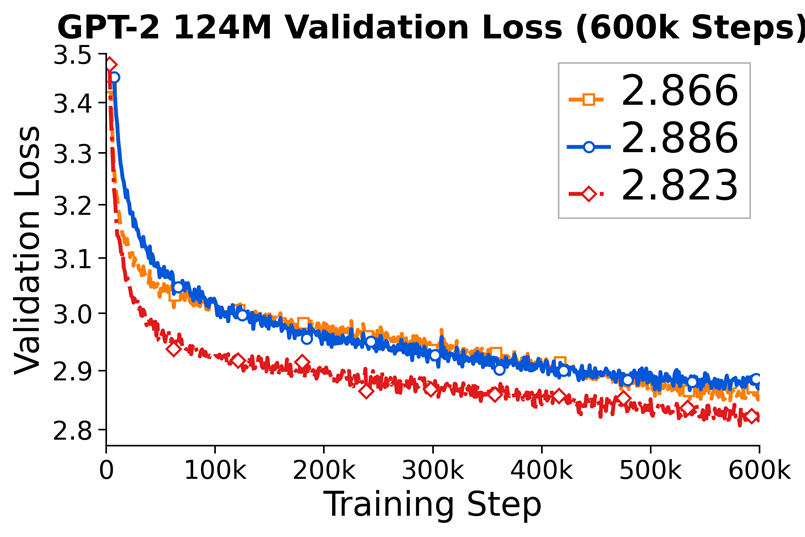}\hfill
    \includegraphics[width=0.325\linewidth]{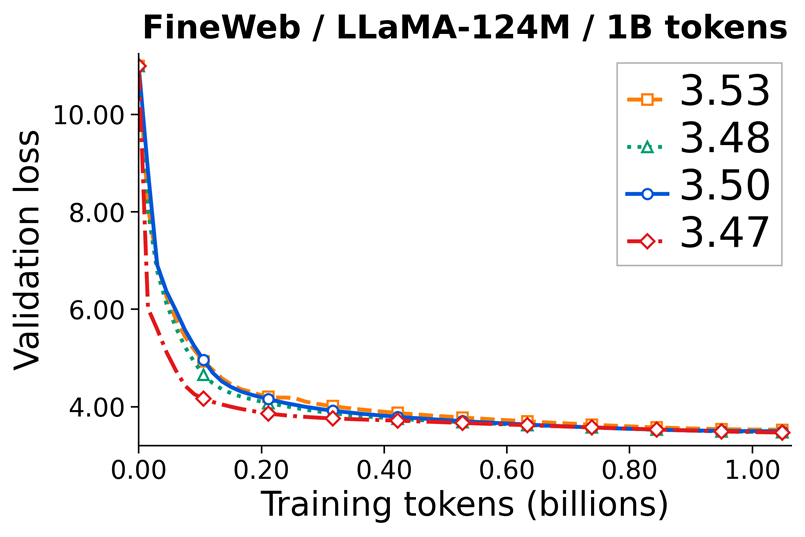}\hfill
    \includegraphics[width=0.325\linewidth]{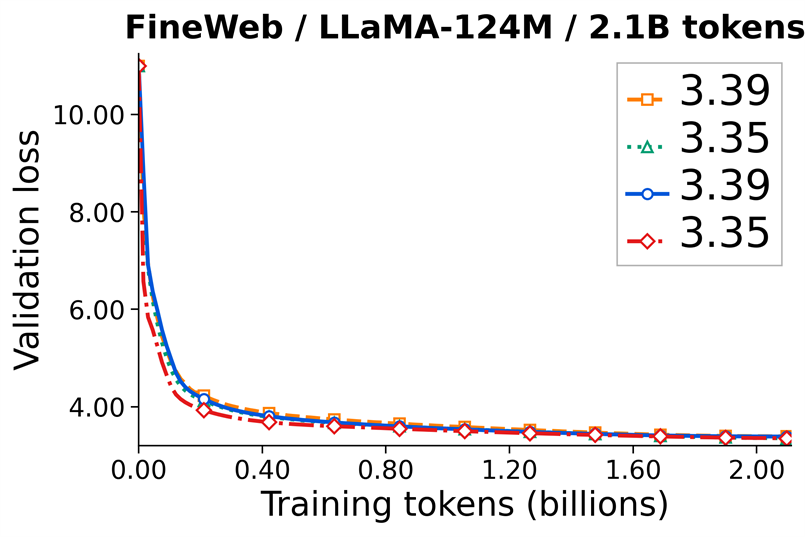}\\[-2pt]
    \includegraphics[width=0.975\linewidth]{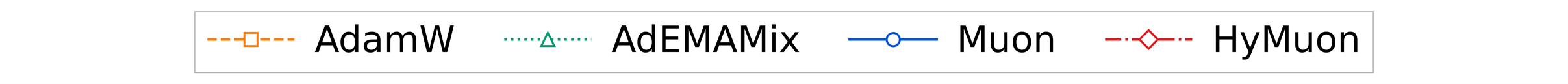}
    \caption{Language tasks. Top row, left to right: IWSLT14 German--English test
    loss for the LSTM model; GPT-2 124M validation loss on OpenWebText through
    100k steps; and LLaMA-1.3B validation loss on C4 through 10k steps, compared
    with the reproduced Muon baseline. Bottom row, left to right: GPT-2 124M
    validation loss on OpenWebText through 600k steps, followed by LLaMA-124M
    validation loss on FineWeb with token budgets of 1B and 2.1B. The IWSLT
    curves show means across repeated runs, and the shaded regions show one
    standard error. Appendix
    Figures~\ref{fig:iwslt-bleu} and~\ref{fig:c4-perplexity} give BLEU and C4
    perplexity.}
    \label{fig:language-tasks}
    \label{fig:iwslt-repeated}
    \label{fig:llm}
    \label{fig:mano-llama-1p3b}
    \label{fig:fineweb-124m}
\end{figure}

\subsection{Image classification}
\label{sec:image-classification}

We compare \hym{} with momentum SGD using cosine or step schedules for the learning rate
and with Schedule-Free SGD on CIFAR-10 (WRN-16-8), CIFAR-100 (DenseNet-BC), SVHN
(ResNet-3-96), and ImageNet-1K (ResNet-50)
\citep{krizhevsky2009learning,netzer2011reading,russakovsky2015imagenet}.
\hym{} is evaluated at its averaged iterate, as described in Appendix~\ref{app:full-exp}.

\hym{} reaches mean test accuracy
of $96.25\%$ on CIFAR-10
and $98.47\%$ on SVHN, compared with $96.11\%$ for Schedule-Free SGD on
CIFAR-10 and $98.24\%$ for cosine SGD on SVHN
(Table~\ref{tab:main-results}). On CIFAR-100, the mean test accuracies are
$79.14\%$ for \hym{} and $79.11\%$ for Schedule-Free SGD. ImageNet mean
validation top-1 accuracies are $76.84\%$ for \hym{}, $76.70\%$ for
Schedule-Free SGD, and $76.68\%$ for cosine SGD. Schedule-Free SGD has higher
validation accuracy during early training on ImageNet, while its curve and that
of \hym{} become close later in training
(Figure~\ref{fig:image-classification}, right panels).

\subsection{Language tasks}

\paragraph{IWSLT14 machine translation.}
On IWSLT14 German--English \citep{cettolo2014iwslt}, we train Fairseq's
LSTM preset with attention \citep{ott2019fairseq} for 55 epochs. We tune each
optimizer separately and evaluate the selected configurations over repeated runs.
At each epoch, the reported test metrics come from the checkpoint with the highest
validation BLEU observed up to that epoch. Averaged over runs, \hymuon{} reaches test loss
$4.184\pm0.004$ and BLEU $33.874\pm0.072$, against $4.234\pm0.006$ and
$33.424\pm0.048$ for Muon, where the values are means with sample standard
deviations (Table~\ref{tab:main-results};
Figure~\ref{fig:language-tasks}, top left;
Figure~\ref{fig:iwslt-bleu} in Appendix~\ref{app:iwslt-runs}).

\paragraph{GPT-2 124M on OpenWebText.}
We pretrain GPT-2 124M \citep{radford2019language} from scratch on
OpenWebText \citep{gokaslan2019openwebtext} in two separate experiments, one
for 100k steps and the other for 600k steps. \hymuon{} reaches final validation losses of $2.886$
and $2.823$, respectively, lower than AdamW and Muon at both training lengths
(Figure~\ref{fig:language-tasks}, top middle and bottom left;
full results in Table~\ref{tab:main-results}).

\paragraph{LLaMA-124M on FineWeb: comparison with AdEMAMix.}
We compare AdamW, AdEMAMix, Muon, and \hymuon{} using the same FineWeb subset
and the 124M protocol for large batches from \citet{semenov2025benchmarking},
with separate runs using 1B and 2.1B tokens.
At 1B tokens, \hymuon{} reaches final validation loss $3.47$, compared with
$3.48$ for AdEMAMix. At 2.1B tokens, \hymuon{} and AdEMAMix both have validation
loss $3.35$ after rounding to two decimal places, while Muon and AdamW both have
$3.39$ (Figure~\ref{fig:language-tasks}, bottom middle and bottom right).
Protocol details are in Appendix~\ref{app:fineweb-124m}.

\paragraph{LLaMA-1.3B on C4.}
We compare \hymuon{} against Muon reproduced with the hyperparameters selected by
\citet{gu2026mano} for LLaMA-1.3B/C4. We tune the \hymuon{} optimizer
hyperparameters; Appendix~\ref{app:c4-runs} gives the selected configuration.
Training uses C4/en with document packing for 10k steps.
\hymuon{} reaches validation loss $2.777$, compared with $2.903$ for Muon
(Figure~\ref{fig:language-tasks}, top right). The corresponding perplexity
curves are reported in Figure~\ref{fig:c4-perplexity} in Appendix~\ref{app:c4-runs}.
Appendix~\ref{app:track3} reports an additional evaluation of \hymuon{} on the
modded-nanoGPT Track~3 benchmark.

\FloatBarrier
\section{Conclusion}

EMA momentum reduces a gradient's weight geometrically according to the number
of iterations that have passed since the gradient was computed. The controlled
experiments show that a gradient contribution retained in the optimizer state
can still reduce a target loss thousands of steps later and that gradients
computed from different training examples can make substantially different
contributions to the same loss. \hym{} separates long-term retention from
short-term adaptation by using a slow state that draws on a growing gradient
history and a fast state that emphasizes recent gradients. For smooth convex
objectives, \hym{} achieves an $\Ocal(T^{-1/2})$ bound at its final averaged
iterate, while \hym{} and \hymuon{} match or improve upon the baselines
considered across image classification, machine translation, and language
modeling. The broader lesson is that the age of a gradient need not determine
its usefulness. Momentum need not serve only as a short-term smoother; it can
also provide long-term memory for useful gradient information that appears
intermittently during training.

%% file: appendix.tex
\appendix

\numberwithin{theorem}{section}
\numberwithin{lemma}{section}
\numberwithin{proposition}{section}
\numberwithin{corollary}{section}
\numberwithin{definition}{section}
\numberwithin{remark}{section}

{\LARGE\bfseries Appendix\par}
\vspace{1.2em}

\begingroup
\small
\setlength{\parindent}{0pt}
\newcommand{\approadmapsection}[2]{%
  \textbf{\hyperref[#1]{\ref*{#1}\quad #2}}\dotfill%
  & \hyperref[#1]{\pageref*{#1}}\\[2pt]}
\newcommand{\approadmapsubsection}[2]{%
  \hspace{1.6em}\hyperref[#1]{\ref*{#1}\quad #2}\dotfill%
  & \hyperref[#1]{\pageref*{#1}}\\}

\noindent\rule{\textwidth}{0.8pt}
\vspace{0.35em}

\begin{tabular*}{\textwidth}{@{}p{0.91\textwidth}@{\extracolsep{\fill}}r@{}}
\approadmapsection{app:sf-reformulation}{Reduction of Schedule-Free to a representation using one sequence}
\approadmapsection{app:hym-mechanisms}{Gradient weighting and averaging along a curved valley}
\approadmapsubsection{app:hym-history-proof}{Weights on early and recent gradients}
\approadmapsubsection{app:hym-filter-variance}{Asymptotic covariance under a fixed gradient mean}
\approadmapsubsection{app:hym-filter-proof}{Weights after a persistent change in gradient mean}
\approadmapsubsection{app:hym-average-geometry}{Geometry of averaging along a curved valley}
\approadmapsection{app:hym-convex-x}{Convergence analysis via cumulative gradient weights}
\approadmapsubsection{app:kernel-geometry}{The averaged iterate and the Bregman term}
\approadmapsubsection{app:kernel-refined}{A theorem for cumulative gradient weights over a finite horizon}
\approadmapsubsection{app:kernel-hym-certificate}{Bounding cumulative gradient weight deviations for \hym{}}
\approadmapsubsection{app:kernel-hym-returned}{Convergence of the averaged iterate for \hym{}}
\approadmapsubsection{app:kernel-related}{Finite mixtures of EMAs with fixed decay rates}
\approadmapsection{app:hymuon-analysis}{\hymuon{} matrix updates and implementation}
\approadmapsection{app:mechanistic-experiments}{Diagnostic experiment details}
\approadmapsubsection{app:hym-batch-size}{Dependence of the \hym{} advantage on batch size}
\approadmapsubsection{app:narrow-signal}{Semantic relevance and improvement specific to each target}
\approadmapsubsection{app:forgetting}{Experiment on the lasting effect of one batch}
\approadmapsection{app:full-exp}{Benchmark experiment details}
\approadmapsubsection{app:vision-runs}{Image classification runs}
\approadmapsubsection{app:cifar-readout-comparison}{\hym{} at the training and averaged iterates}
\approadmapsubsection{app:cifar10-component-ablation}{Component ablation of \hym{} on CIFAR-10}
\approadmapsubsection{app:iwslt-runs}{IWSLT14 German--English}
\approadmapsubsection{app:openwebtext-runs}{NanoGPT on OpenWebText}
\approadmapsubsection{app:c4-runs}{LLaMA-1.3B pretraining on C4}
\approadmapsubsection{app:fineweb-124m}{LLaMA-124M FineWeb benchmark}
\approadmapsection{app:track3}{Optimization efficiency on modded-nanoGPT Track 3}
\end{tabular*}

\vspace{0.25em}
\noindent\rule{\textwidth}{0.8pt}
\medskip
\endgroup

\clearpage

\section{Reduction of Schedule-Free to a representation using one sequence}
\label{app:sf-reformulation}

The Schedule-Free method of \citet{defazio2024road} is originally stated with three
sequences: the averaged iterate $\xbf_t$, the base iterate $\zbf_t$, and the
training iterate $\ybf_t$,
\begin{align}
\ybf_t &= (1-\beta)\zbf_t + \beta\xbf_t, \label{eq:sf-original-y} \\
\zbf_{t+1} &= \zbf_t-\eta\,\gbf_t,\quad
\gbf_t=\frac1B\sum_{j=1}^B\nabla f(\ybf_t;\xi_{t,j}), \label{eq:sf-original-z} \\
\xbf_{t+1} &= (1-c_{t+1})\xbf_t + c_{t+1}\zbf_{t+1}, \label{eq:sf-original-x}
\end{align}
initialized at $\zbf_1=\xbf_1$, with averaging weights $c_{t+1}$ (for a constant step size,
$c_{t+1}=1/(t+1)$). The representation in \eqref{eq:sf-single}, which uses one sequence,
follows from the change of
variable
\begin{equation}
\label{eq:sf-mdef}
\nbf_t := \frac{\xbf_t-\zbf_{t+1}}{\eta},
\end{equation}
which is the rescaled difference between the averaged and base iterates and
serves as the state variable in this representation.

Using $\xbf_t=(1-c_t)\xbf_{t-1}+c_t\zbf_t$ and $\zbf_{t+1}=\zbf_t-\eta\,\gbf_t$, we have
\begin{equation}
\nbf_t
= \frac{\xbf_t-\zbf_{t+1}}{\eta}
= \frac{(1-c_t)(\xbf_{t-1}-\zbf_t)+\eta\,\gbf_t}{\eta}
= (1-c_t)\nbf_{t-1}+\gbf_t .
\end{equation}
Since $\zbf_1=\xbf_1$, \eqref{eq:sf-mdef} gives
$\nbf_1=(\xbf_1-\zbf_2)/\eta=\gbf_1$, consistent with $c_1=1$.

Differencing \eqref{eq:sf-original-y} and using
$\zbf_{t+1}-\zbf_t=-\eta\,\gbf_t$,
$\xbf_{t+1}-\xbf_t=c_{t+1}(\zbf_{t+1}-\xbf_t)$, and
$\zbf_{t+1}-\xbf_t=-\eta\,\nbf_t$ from \eqref{eq:sf-mdef}, we obtain
\begin{equation}
\begin{aligned}
\ybf_{t+1}-\ybf_t
&= (1-\beta)(\zbf_{t+1}-\zbf_t)+\beta(\xbf_{t+1}-\xbf_t)\\
&= -(1-\beta)\eta\,\gbf_t+\beta\,c_{t+1}(\zbf_{t+1}-\xbf_t)\\
&= -\eta\big[(1-\beta)\gbf_t+\beta\,c_{t+1}\nbf_t\big],
\end{aligned}
\end{equation}
which is exactly \eqref{eq:sf-single}.

For $c_t=1/t$, the recurrence has the closed form $\nbf_t=\frac{t-1}{t}\nbf_{t-1}+\gbf_t=\sum_{i=1}^t\frac{i}{t}\gbf_i$, whose weights grow linearly with $i$. This is the same linear weighting pattern used in Nesterov's construction based on estimate sequences \citep{nesterov1983method,nesterov2005smooth}.
\hym{} uses the same recurrence after the rescaling
$\nbf_t=(t+1)\mbf_t/2$, but scales its contribution by the geometric sum
$(1-\mu^t)/(1-\mu)$ and replaces the term that directly uses the current gradient
with the fast state. Its direction is
$\beta(1-\mu^t)\nbf_t/[t(1-\mu)]+(1-\beta)\bbf_t$, where $\mbf_t$ denotes the
slow state in Algorithm~\ref{alg:hym}.

\section{Gradient weighting and averaging along a curved valley}
\label{app:hym-mechanisms}

\subsection{Weights on early and recent gradients}
\label{app:hym-history-proof}

We give the calculations underlying Section~\ref{sec:hym-weights}, for
$c_t=1/t$, fixed $\mu\in(0,1)$, and $\beta\in[0,1)$. Throughout we write
$q_t=(1-\mu^t)/(1-\mu)$ for the recurring geometric sum.

\paragraph{Decomposition and normalization of gradient weights.}
Unrolling the two momentum states gives
\begin{equation}
 \mathbf m_t=\sum_{i=1}^t\frac{2i}{t(t+1)}\mathbf g_i,
 \qquad
 \mathbf b_t=\sum_{i=1}^t\frac{i}{t}\mu^{t-i}\mathbf g_i.
 \label{eq:hym-accumulator-kernels}
\end{equation}
Substitution into Algorithm~\ref{alg:hym}, line~\ref{algline:hym-direction},
yields the gradient weights
\eqref{eq:hym-kernel}. Define the coefficient sum $Z_t$ for the fast state and
the two component coefficient sums by
\begin{equation}
 Z_t=\sum_{i=1}^t\frac{i}{t}\mu^{t-i},\qquad
 A_t=\beta r_t,\qquad C_t=(1-\beta)Z_t.
 \label{eq:hym-component-masses}
\end{equation}
The total coefficient sum is $S_t=\sum_{i=1}^tw_{t,i}=A_t+C_t$, with
\begin{equation}
 S_1=1,\qquad \lim_{t\to\infty}S_t=
 \frac{1-\beta/2}{1-\mu}.
 \label{eq:hym-total-gain}
\end{equation}
To separate the distribution over past gradients from this total coefficient sum,
we normalize the coefficients as
\begin{equation}
 p_{t,i}=\frac{w_{t,i}}{S_t}
 =\frac{w_{t,i}}{\sum_{j=1}^t w_{t,j}},
 \qquad \sum_{i=1}^t p_{t,i}=1.
 \label{eq:hym-sum-normalization}
\end{equation}
Thus $p_{t,i}$ is the fraction of the direction's total scalar gradient
weight assigned to $\mathbf g_i$. It is an exact mixture of two distributions:
\begin{equation}
 p_{t,i}=\lambda_t\underbrace{\frac{2i}{t(t+1)}}_{U_{t,i}}
 +(1-\lambda_t)\underbrace{\frac{(i/t)\mu^{t-i}}{Z_t}}_{V_{t,i}},
 \qquad \lambda_t=\frac{A_t}{A_t+C_t}.
 \label{eq:hym-kernel-mixture}
\end{equation}
Both $U_{t,i}$ and $V_{t,i}$ sum to one. The proposition below gives the aggregate
weight on an early fraction of history and on a fixed recent window.

\begin{proposition}
\label{prop:hym-history}
Fix $\mu\in(0,1)$ and $\beta\in(0,1)$. Then
\[
 \lim_{t\to\infty}\lambda_t=\frac{\beta}{2-\beta},
\qquad
 \lim_{t\to\infty}(1-\lambda_t)=
 \rho:=\frac{2(1-\beta)}{2-\beta}.
\]
For every fixed $a\in(0,1)$ and positive integer $K$,
\begin{align}
 \lim_{t\to\infty}\sum_{i=1}^{\lfloor at\rfloor}p_{t,i}
 &=\frac{\beta}{2-\beta}a^2,
 \label{eq:hym-old-history-mass}\\
 \lim_{t\to\infty}\sum_{i=t-K+1}^t p_{t,i}
 &=\rho(1-\mu^K).
 \label{eq:hym-recent-history-mass}
\end{align}
\end{proposition}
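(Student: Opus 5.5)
The plan is to work directly with the mixture representation \eqref{eq:hym-kernel-mixture}: first compute the limits of the two component masses $A_t$ and $C_t$, then compute the limiting masses that $U_{t,\cdot}$ and $V_{t,\cdot}$ place on the two index ranges, and finally combine them with the weights $\lambda_t$ and $1-\lambda_t$.

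\textbf{Step 1: limit of $\lambda_t$.} Since $A_t=\beta r_t=\beta\frac{t+1}{2t}q_t$ and $q_t\to 1/(1-\mu)$, we get $A_t\to \frac{\beta}{2(1-\mu)}$.

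For $C_t=(1-\beta)Z_t$, substitute $j=t-i$:
\[
 Z_t=\sum_{j=0}^{t-1}\Bigl(1-\frac{j}{t}\Bigr)\mu^j=q_t-\frac1t\sum_{j=0}^{t-1}j\mu^j .
\]
The last sum is bounded by $\mu/(1-\mu)^2$, so $Z_t\to 1/(1-\mu)$. Hence $C_t\to\frac{1-\beta}{1-\mu}$ and
\[
 \lambda_t\to\frac{\beta/2}{\beta/2+1-\beta}=\frac{\beta}{2-\beta}.
\]
This also recovers \eqref{eq:hym-total-gain}, and it gives $1-\lambda_t\to\rho$.

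\textbf{Step 2: early history.} The $U$-component gives
\[
 \sum_{i\le\lfloor at\rfloor}U_{t,i}=\frac{\lfloor at\rfloor(\lfloor at\rfloor+1)}{t(t+1)}\to a^2 .
\]
For the $V$-component, every term with $i\le at$ has $\mu^{t-i}\le\mu^{(1-a)t}$. So
\[
 \sum_{i\le at}V_{t,i}\le \frac{t\,\mu^{(1-a)t}}{Z_t}\to0,
\]
because $Z_t\ge Z_1=1$; in fact $Z_t$ stays bounded away from zero. Combining the two components yields \eqref{eq:hym-old-history-mass}.

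\textbf{Step 3: recent window.} For fixed $K$, the $U$-component satisfies
\[
 \sum_{i>t-K}U_{t,i}\le\frac{2Kt}{t(t+1)}\to0 .
\]
For the $V$-component, with $j=t-i\in\{0,\dots,K-1\}$,
\[
 \sum_{j<K}\Bigl(1-\frac jt\Bigr)\mu^j\to\frac{1-\mu^K}{1-\mu}.
\]
Dividing by $Z_t\to1/(1-\mu)$ gives the limit $1-\mu^K$. Multiplying by $1-\lambda_t\to\rho$ yields \eqref{eq:hym-recent-history-mass}.

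\textbf{Main obstacle.} The only delicate point is controlling $Z_t$: we need both its limit and a uniform lower bound. The lower bound is what makes the $V$-mass on early indices vanish, since the geometric factor $\mu^{(1-a)t}$ must dominate the linear prefactor $t$. The bound $Z_t\ge1$ holds because the $i=t$ term of $Z_t$ equals $1$ and all other terms are nonnegative, which handles this directly. Everything else is an elementary limit of finite sums.
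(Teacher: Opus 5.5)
Your proof is correct and follows essentially the same route as the paper's. Both arguments take the limits of $A_t$ and $C_t$ via $0\le q_t-Z_t\le \mu/[t(1-\mu)^2]$, split $p_{t,i}$ into the $U$/$V$ mixture, and use $Z_t\ge 1$ to kill the early fast-state mass. The only cosmetic difference is that you bound that early mass by $t\mu^{(1-a)t}$, whereas the paper bounds it by the geometric tail $\mu^{t-\lfloor at\rfloor}/(1-\mu)$; both vanish.
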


\begin{proof}
Recall that
\[
 r_t=\frac{t+1}{2t}q_t,
 \qquad
 Z_t=\sum_{i=1}^t\frac{i}{t}\mu^{t-i}.
\]
 After setting $k=t-i$, we have
 \[
  Z_t=\sum_{k=0}^{t-1}\left(1-\frac{k}{t}\right)\mu^k,
  \qquad
  0\le q_t-Z_t
  =\frac1t\sum_{k=0}^{t-1}k\mu^k
  \le\frac{\mu}{t(1-\mu)^2}.
 \]
 Since $\lim_{t\to\infty}q_t=(1-\mu)^{-1}$, it follows that
 \[
  \lim_{t\to\infty}Z_t=\frac{1}{1-\mu}.
 \]
 Consequently,
 \[
  \lim_{t\to\infty}A_t=\frac{\beta}{2(1-\mu)},\qquad
  \lim_{t\to\infty}C_t=\frac{1-\beta}{1-\mu}.
 \]
 Substituting these limits into $\lambda_t=A_t/(A_t+C_t)$ proves the
 two limits for $\lambda_t$ stated in the proposition.

Let $k_t=\lfloor at\rfloor$. The slow component satisfies
\[
 \lim_{t\to\infty}\sum_{i=1}^{k_t}U_{t,i}
 =\lim_{t\to\infty}\frac{k_t(k_t+1)}{t(t+1)}=a^2.
\]
For the fast component, $Z_t\ge1$ and
\[
 0\le\sum_{i=1}^{k_t}V_{t,i}
 \le\sum_{k=t-k_t}^{\infty}\mu^k
 =\frac{\mu^{t-k_t}}{1-\mu}.
 \]
 Because $t-k_t$ tends to infinity,
 $\lim_{t\to\infty}\sum_{i=1}^{k_t}V_{t,i}=0$.
Combining these limits with the mixture representation
\eqref{eq:hym-kernel-mixture} proves
\eqref{eq:hym-old-history-mass}.

For a fixed positive integer $K$,
 \[
 \lim_{t\to\infty}\sum_{i=t-K+1}^tU_{t,i}=0,
 \qquad
 \sum_{i=t-K+1}^tV_{t,i}
 =\frac{\sum_{k=0}^{K-1}(1-k/t)\mu^k}{Z_t}.
 \]
 Hence
 $\lim_{t\to\infty}\sum_{i=t-K+1}^tV_{t,i}=1-\mu^K$.
The same mixture representation therefore gives
\eqref{eq:hym-recent-history-mass}.
\end{proof}

\paragraph{Peak near recent gradients for large $t$.}
When $t(1-\mu)\ll1$, the two component profiles are similar, and their combined
weight increases approximately linearly with $i$. For larger $t$, the weights of the
fast state become concentrated within $\Ocal((1-\mu)^{-1})$ iterations of the current
step, whereas the weights of the slow state extend over $\Ocal(t)$ iterations.
For $0<\beta<1$, the ratio of the fast and slow coefficients at lag $k=t-i$ is
$\frac{(1-\beta)t\mu^k}{\beta q_t}$.
Using $q_t\to(1-\mu)^{-1}$, the two coefficients on the newest gradient are equal at
\begin{equation}
 t_{\mathrm{cross}}:=
 \frac{\beta}{(1-\beta)(1-\mu)}.
 \label{eq:hym-head-scale}
\end{equation}
When $t$ is large relative to $t_{\mathrm{cross}}$, the fast term dominates the
coefficient on the newest gradients and produces the peak over recent gradients, while
the slow component retains the limiting share $\beta/(2-\beta)$ of the total weight.

A finite mixture of EMAs with fixed decay coefficients below one behaves differently:
for fixed $a\in(0,1)$, the share it places on $i\le at$ vanishes as $t$ grows, since
each EMA tail is bounded by a geometrically decreasing term. In contrast, \hym{}
retains the positive aggregate weight given in \eqref{eq:hym-old-history-mass}.

\subsection{Asymptotic covariance under a fixed gradient mean}
\label{app:hym-filter-variance}

\begin{proposition}
\label{prop:hym-variance}
Fix $\mu\in(0,1)$ and $\beta\in[0,1)$.
Suppose $\gbf_i=\gbf+\boldsymbol\varepsilon_i$, where the
noises are independent, zero mean, and have covariance $\Sigma/B$.
Let $\widetilde{\bDelta}_t=\sum_{i=1}^t p_{t,i}\gbf_i$,
$\rho=2(1-\beta)/(2-\beta)$, and $\chi=\rho(1-\mu)$.
Then $\widetilde{\bDelta}_t$ is unbiased,
$\lim_{t\to\infty}p_{t,t}=\chi$, and
\begin{equation}
 \lim_{t\to\infty}\operatorname{Cov}(\widetilde{\bDelta}_t)
 =
 \rho^2\frac{1-\mu}{1+\mu}\frac{\Sigma}{B}.
 \label{eq:hym-asymptotic-covariance}
\end{equation}
For $0<\beta<1$, its limiting scalar covariance factor is strictly
below that of a normalized EMA with weight $\chi$ on the newest gradient:
\[
 \rho^2\frac{1-\mu}{1+\mu}<\frac{\chi}{2-\chi}.
\]
\end{proposition}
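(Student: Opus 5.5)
The plan is to express the covariance through the normalized weights $p_{t,i}$ from \eqref{eq:hym-sum-normalization} and then pass to the limit using the mixture representation \eqref{eq:hym-kernel-mixture}.

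\textbf{Unbiasedness and the covariance formula.} Since $\sum_i p_{t,i}=1$ and the $p_{t,i}$ are deterministic, $\mathbb E\widetilde{\bDelta}_t=\gbf$, so $\widetilde{\bDelta}_t$ is unbiased. Independence of the noises gives
\[
\operatorname{Cov}(\widetilde{\bDelta}_t)=\Big(\sum_{i=1}^t p_{t,i}^2\Big)\frac{\Sigma}{B}.
\]
Everything therefore reduces to the scalar limit of $\sum_i p_{t,i}^2$.

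\textbf{The newest weight.} Write $p_{t,t}=\lambda_tU_{t,t}+(1-\lambda_t)V_{t,t}$. Here $U_{t,t}=2/(t+1)\to0$ and $V_{t,t}=1/Z_t\to1-\mu$, using $Z_t\to(1-\mu)^{-1}$ from the proof of Proposition~\ref{prop:hym-history}. Combining this with $1-\lambda_t\to\rho$ gives $p_{t,t}\to\rho(1-\mu)=\chi$. The same proposition covers $\beta>0$; for $\beta=0$ we have $\lambda_t=0$ and $\rho=1$ trivially.

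\textbf{The sum of squares.} Expand
\[
\sum_i p_{t,i}^2=\lambda_t^2\sum_i U_{t,i}^2+2\lambda_t(1-\lambda_t)\sum_i U_{t,i}V_{t,i}+(1-\lambda_t)^2\sum_i V_{t,i}^2 .
\]
\begin{itemize}
\item Since $\max_i U_{t,i}=2/(t+1)$, both $\sum_i U_{t,i}^2$ and $\sum_i U_{t,i}V_{t,i}$ are at most $2/(t+1)$, so the first two terms vanish.
\item For the fast part, index by lag $k=t-i$: $V_{t,t-k}=(1-k/t)\mu^k/Z_t\to(1-\mu)\mu^k$ pointwise in $k$. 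These terms are dominated by $\mu^{2k}$, because $Z_t\ge1$. Dominated convergence for series then gives $\sum_k V_{t,t-k}^2\to(1-\mu)^2/(1-\mu^2)=(1-\mu)/(1+\mu)$.
\end{itemize}
Multiplying by $(1-\lambda_t)^2\to\rho^2$ yields \eqref{eq:hym-asymptotic-covariance}. This dominated-convergence step is the only non-mechanical point, and the uniform bound $Z_t\ge1$ handles it.

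\textbf{The strict inequality.} For $0<\beta<1$ we have $\rho\in(0,1)$. Divide both sides by $\rho(1-\mu)>0$ and clear the positive denominators $(1+\mu)$ and $2-\chi$. The claim becomes $\rho(2-\rho+\rho\mu)<1+\mu$. Rearranging,
\[
2\rho-\rho^2+\rho^2\mu-1-\mu=-(1-\rho)^2-\mu(1-\rho^2)=-(1-\rho)\big[(1-\rho)+\mu(1+\rho)\big],
\]
which is strictly negative because $\rho<1$ and $\mu>0$. This is a routine algebraic verification, so I expect no real obstacle beyond the limit interchange above.
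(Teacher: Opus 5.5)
Your proposal is correct and follows essentially the same route as the paper. The shared steps are the mixture decomposition into $U_{t,i}$ and $V_{t,i}$, the vanishing of the slow and cross terms, dominated convergence for $\sum_i V_{t,i}^2$, and the same factorization $(1-\rho)\bigl[(1-\rho)+\mu(1+\rho)\bigr]$ for the strict inequality. The differences are cosmetic: you bound the cross term by $\max_i U_{t,i}$ instead of Cauchy--Schwarz, and you obtain $\lim_{t\to\infty} p_{t,t}$ through the mixture rather than directly from $w_{t,t}/S_t$.
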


\begin{proof}
Define
\[
Z_t:=\sum_{i=1}^t\frac{i}{t}\mu^{t-i},\qquad
S_t:=\sum_{i=1}^tw_{t,i}
=\frac{\beta q_t(t+1)}{2t}+(1-\beta)Z_t.
\]
The normalized weights can be decomposed as
\[
\frac{w_{t,i}}{S_t}=\lambda_t U_{t,i}+(1-\lambda_t)V_{t,i},
\]
where
\[
U_{t,i}=\frac{2i}{t(t+1)},\qquad
V_{t,i}=\frac{(i/t)\mu^{t-i}}{Z_t},\qquad
\lambda_t=\frac{\beta q_t(t+1)/(2t)}{S_t},\qquad
1-\lambda_t=\frac{(1-\beta) Z_t}{S_t}.
\]
Both \(U_t\) and \(V_t\) are probability vectors. The calculations in
the proof of Proposition~\ref{prop:hym-history} give
\[
\lim_{t\to\infty}q_t=\lim_{t\to\infty}Z_t=\frac1{1-\mu},
\qquad
\lim_{t\to\infty}(1-\lambda_t)
=\rho=\frac{2(1-\beta)}{2-\beta},
\qquad
\lim_{t\to\infty}\lambda_t=1-\rho.
\]
For the squared weights,
\[
\lim_{t\to\infty}\sum_{i=1}^t U_{t,i}^2
=\lim_{t\to\infty}\frac{2(2t+1)}{3t(t+1)}=0.
\]
Moreover,
\[
\sum_{i=1}^t V_{t,i}^2
=\frac{1}{Z_t^2}\sum_{k=0}^{t-1}\left(1-\frac{k}{t}\right)^2\mu^{2k}.
\]
Dominated convergence and the limit of $Z_t$ therefore yield
\[
\lim_{t\to\infty}\sum_{i=1}^t V_{t,i}^2
=\frac{(1-\mu)^2}{1-\mu^2}
=\frac{1-\mu}{1+\mu}.
\]
By Cauchy--Schwarz,
\[
\left|\sum_{i=1}^t U_{t,i}V_{t,i}\right|
\le
\left(\sum_{i=1}^tU_{t,i}^2\right)^{1/2}
\left(\sum_{i=1}^tV_{t,i}^2\right)^{1/2},
\]
which implies that the cross term has limit zero. Independence of the noise now gives
\[
\operatorname{Cov}(\widetilde{\bDelta}_t)
=\left(\sum_{i=1}^t\frac{w_{t,i}^2}{S_t^2}\right)\frac{\Sigma}{B}
\]
and hence
\[
\lim_{t\to\infty}\operatorname{Cov}(\widetilde{\bDelta}_t)
=\rho^2\frac{1-\mu}{1+\mu}\frac{\Sigma}{B}.
\]
The newest normalized weight is
\[
\lim_{t\to\infty}\frac{\beta q_t/t+1-\beta}{S_t}
=\rho(1-\mu)=\chi.
\]
A normalized EMA with newest weight \(\chi\) has weights
\(\chi(1-\chi)^k\).  Its scalar covariance factor is
\[
 \chi^2\sum_{k=0}^{\infty}(1-\chi)^{2k}=\frac{\chi}{2-\chi}.
\]
The difference between the factors for the corresponding EMA and \hym{} is
\[
\frac{\rho(1-\mu)}{2-\rho(1-\mu)}
-\frac{\rho^2(1-\mu)}{1+\mu}
=\frac{\rho(1-\mu)(1-\rho)
 [2-(1-\mu)(1+\rho)]}
 {[2-\rho(1-\mu)](1+\mu)}>0
\]
for \(0<\rho<1\) and \(0<\mu<1\), proving the proposition.
\end{proof}

\subsection{Weights after a persistent change in gradient mean}
\label{app:hym-filter-proof}

The comparison in Proposition~\ref{prop:hym-variance} matches the coefficient on
the newest gradient and therefore describes the weight assigned immediately after
a change in mean. To examine a
persistent change, suppose the deterministic gradient mean is
$\mathbf g^-$ through step $\tau$ and $\mathbf g^+=\mathbf g^-+\mathbf d$
afterward. At $t=\tau+k$, the fraction of the normalized \hym{} direction
assigned to gradients with mean $\mathbf g^+$ is
\begin{equation}
 H_{t,\tau}
 =\sum_{i=\tau+1}^t p_{t,i}
 =\lambda_t\!\left(1-\frac{\tau(\tau+1)}{t(t+1)}\right)
 +(1-\lambda_t)
 \frac{\sum_{j=0}^{k-1}(1-j/t)\mu^j}{Z_t}.
 \label{eq:hym-persistent-shift-response}
\end{equation}
For fixed $k$, the limiting weight assigned after the change is
\[
 \lim_{\tau\to\infty}H_{\tau+k,\tau}=\rho(1-\mu^k).
\]
A normalized EMA with constant newest weight
$\chi=\rho(1-\mu)$, after its initialization bias has vanished, assigns
the fraction $1-(1-\chi)^k$ to gradients after the change. Its fraction at
$k=1$ matches \hym{}'s, while for $k>1$, $0<\rho<1$, and
$0<\mu<1$,
\[
 1-\bigl(1-\rho(1-\mu)\bigr)^k
 >\rho(1-\mu^k).
\]
The inequality follows from strict concavity of $h(a)=1-(1-a)^k$ on
$(0,1)$. The two methods match the weight on the newest gradient but assign
different total weights after a persistent change: over a fixed recent window,
the EMA assigns a larger fraction to gradients after the change, whereas
\hym{} assigns the remaining share to its slow component. Under this comparison,
\hym{} has lower limiting covariance, while the EMA assigns weight to gradients
after the change more quickly.

\subsection{Geometry of averaging along a curved valley}
\label{app:hym-average-geometry}

\subsubsection{Rosenbrock illustration and the role of the averaged iterate}
\label{sec:hym-valley}

The preceding subsections analyze how the two momentum states distribute weight
over the gradient history and respond to changes in the gradient mean. We now turn
to a separate component of \hym{}: the averaged iterate. The following example
illustrates how averaging the training iterates interacts with motion along a curved
valley.

Consider gradients whose dominant component changes sign across iterations while a
smaller component remains consistently aligned. Averaging attenuates the component
whose sign varies relative to the consistently aligned component. For the Rosenbrock function
$f_{\rm RB}$ defined below, the component of $\nabla f_{\rm RB}$ normal to the curve
$v=u^2$ changes sign across iterates, while the tangential component can retain its sign.
This geometric motivation also appears in AdEMAMix and Schedule-Free
analyses~\citep{pagliardini2024ademamix,song2025river}.

The slow state averages gradients over the trajectory, allowing normal components that
change sign to cancel while tangential components that retain their sign accumulate. The
fast state places greater weight on gradients computed near the current part of the valley.
The two states therefore capture complementary parts of the gradient history.

We visualize this behavior on
$f_{\rm RB}(u,v)=(1-u)^2+100(v-u^2)^2$. Figure~\ref{fig:trajectory} shows the
training iterates; Figure~\ref{fig:trajectory-x} shows the corresponding
averaged iterates.

\begin{figure}[htbp]
 \centering
 \includegraphics[width=0.99\linewidth]{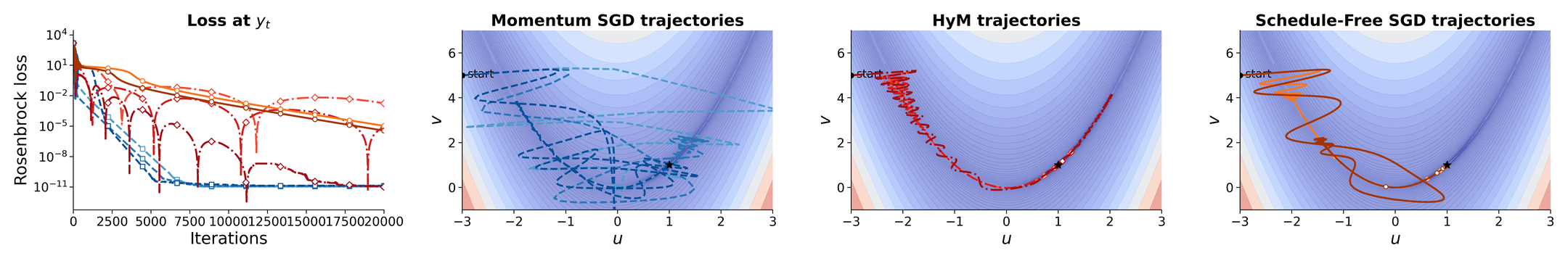}\\[-2pt]
 \includegraphics[width=0.99\linewidth]{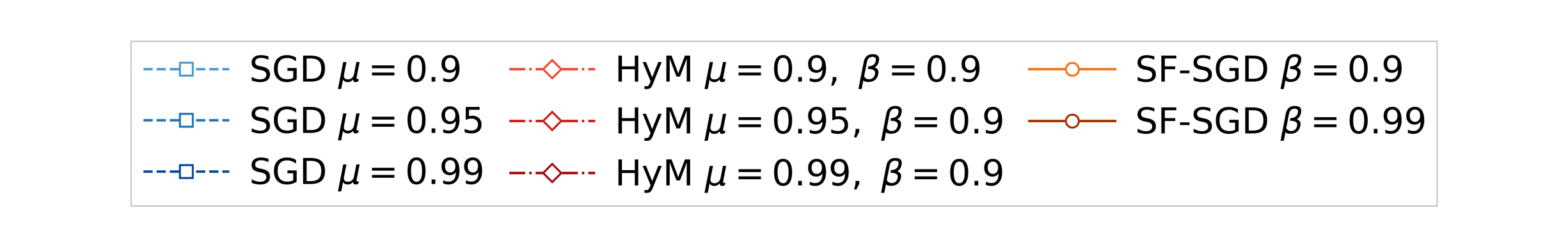}
 \caption{Behavior of the Rosenbrock training iterates from $(-3,5)$ over 20,000
 steps using exact gradients. Momentum SGD uses $\eta=4\times10^{-3}$ and
 $\mu\in\{0.9,0.95,0.99\}$; \hym{} uses $\eta=3\times10^{-4}$,
 $\beta=0.9$, and $\mu\in\{0.9,0.95,0.99\}$; Schedule-Free SGD uses
 $\eta=7\times10^{-4}$ and $\beta\in\{0.9,0.99\}$. The left panel shows
 objective values at $\mathbf y_t$; the remaining panels show the trajectories
 over objective contours. The star marks the minimizer $(1,1)$.}
\label{fig:trajectory}
\end{figure}

Gradient memory determines the training trajectory, whereas iterate averaging forms a
separate sequence from the training iterates. In \hymuon{}, the two momentum states are
combined before the nonlinear NS transformation. Consequently, averaging gradients is
distinct from averaging the transformed update directions.

\paragraph{Effect of iterate averaging in a curved valley.}

Let $v=\phi(u)$ denote a valley centerline and let
$\{(u_i,v_i)\}_{i=1}^N$ be $N$ training iterates with
$\pi_i\ge0$ and $\sum_{i=1}^N\pi_i=1$. Write
\[
 \bar u=\sum_{i=1}^N\pi_i u_i,
 \qquad
 \bar v=\sum_{i=1}^N\pi_i v_i.
\]
The averaged iterate has the exact centerline residual
\begin{equation}
 \bar v-\phi(\bar u)
 =\underbrace{\sum_{i=1}^N\pi_i\bigl(v_i-\phi(u_i)\bigr)}_{\text{averaged tracking residual}}
 +\underbrace{\left(\sum_{i=1}^N\pi_i\phi(u_i)
                  -\phi\!\left(\sum_{i=1}^N\pi_i u_i\right)\right)}
             _{\text{averaging gap due to curvature}}.
 \label{eq:curved-average-decomposition}
\end{equation}
Thus even points lying exactly on the centerline can average to a point away
from it.  If $\phi$ is twice differentiable and
$|\phi''(u)|\le M$ on the convex hull of the $u_i$, Taylor's theorem gives
\begin{equation}
 \left|\sum_{i=1}^N\pi_i\phi(u_i)-\phi(\bar u)\right|
 \le\frac{M}{2}\sum_{i=1}^N\pi_i(u_i-\bar u)^2.
 \label{eq:curved-average-gap-bound}
\end{equation}
For the Rosenbrock valley centerline $\phi(u)=u^2$, the bound holds with $M=2$.

This decomposition separates two effects of iterate averaging. Averaging can reduce
tracking residuals that change sign across iterates, while curvature can move the
averaged point away from the valley centerline.

\begin{figure}[htbp]
 \centering
 \includegraphics[width=0.99\linewidth]{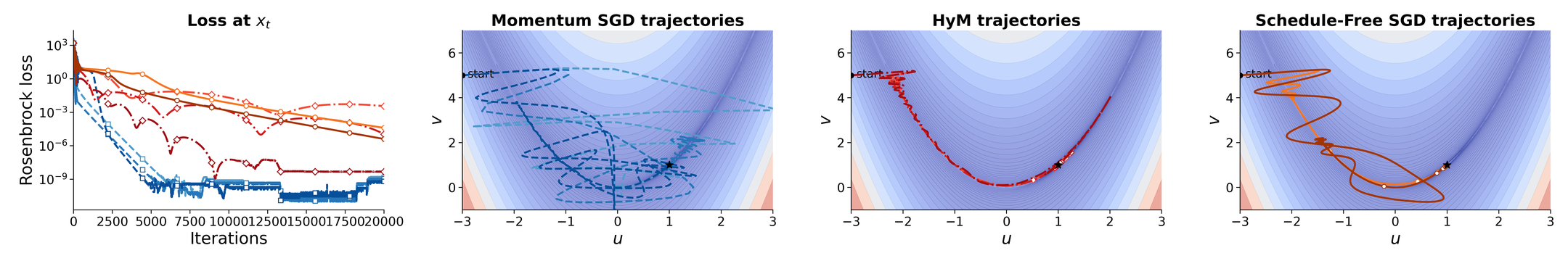}\\[-2pt]
 \includegraphics[width=0.99\linewidth]{figures/rosenbrock_shared_legend.png}
 \caption{Averaged iterates for the Rosenbrock runs in
 Figure~\ref{fig:trajectory}, using the same initialization, optimizer
 settings, and 20,000 steps using exact gradients. \hym{} uses $\beta=0.9$ for
 iterate averaging; Momentum SGD is evaluated at an averaged iterate formed with
 $\beta=0.99$, and this average does not enter its updates; Schedule-Free SGD uses its respective
 $\beta\in\{0.9,0.99\}$. The left panel shows objective values at
 $\mathbf x_t$; the remaining panels show the averaged trajectories over
 objective contours. The star marks the minimizer $(1,1)$.}
 \label{fig:trajectory-x}
\end{figure}

The CIFAR-10 component ablation evaluates these roles separately by measuring each
training trajectory at both the training and averaged iterates
(Appendix~\ref{app:cifar10-component-ablation}).
\FloatBarrier

\section{Convergence analysis via cumulative gradient weights}
\label{app:hym-convex-x}

Sections~\ref{app:kernel-geometry}--\ref{app:kernel-hym-returned} develop the convex
analysis of the averaged iterate through the Bregman identity, a general theorem over
a finite horizon, a bound on \hym{}'s cumulative gradient weight deviations, and the
resulting convergence rate. Section~\ref{app:kernel-related} shows that the same uniform bound
holds for finite EMA mixtures.

We prove the general result of Theorem~\ref{thm:kernel-square-stability}
and specialize it to \hym{}. Throughout this appendix, $\eta>0$ is constant within
a run, $\Gamma>0$, and $\varsigma=\sigma/\sqrt B\ge0$.
The initial point is deterministic. All iterates and interpolation
coefficients are $\mathcal F_{t-1}$-measurable before $\gbf_t$ is sampled at
iteration $t$; a sequence with this property is predictable.
Because the samples in $\mathcal B_t$ are conditionally independent given
$\mathcal F_{t-1}$, Assumption~\ref{ass:minibatch-oracle} implies that the
mini-batch gradient at the training iterate $\ybf_t$ satisfies
\begin{equation}
 \mathbb E[\gbf_t\mid\mathcal F_{t-1}]=\nabla f(\ybf_t),\qquad
 \mathbb E[\|\gbf_t-\nabla f(\ybf_t)\|^2\mid\mathcal F_{t-1}]
 \le\varsigma^2.
 \label{eq:hym-convex-oracle}
\end{equation}
We write $\beps_t=\gbf_t-\nabla f(\ybf_t)$ and
$Q_T=\sum_{t=1}^T\mathbb E\|\nabla f(\ybf_t)\|^2$.
For the recursion in \eqref{eq:kernel-framework}, all iterates and gradients
have finite second moments at each finite horizon. This follows by induction
from the deterministic initial point and cumulative gradient weights, the bounded
interpolation coefficients, the Lipschitz gradient, and the oracle variance bound.
The representation of the derived base iterates through cumulative
gradient weights is exact; weight decay, nonlinear preconditioning, and NS
orthogonalization are outside this analysis with deterministic scalar weights.

\subsection{The averaged iterate and the Bregman term}
\label{app:kernel-geometry}

For \hym{}, let $\lambda=1-\beta>0$ and define
\begin{equation}
 \vbf_t=\frac{\ybf_t-\beta\xbf_t}{\lambda}.
 \label{eq:convex-base}
\end{equation}
The averaging recurrence in Algorithm~\ref{alg:hym} gives
\begin{equation}
 \vbf_t=t\xbf_t-(t-1)\xbf_{t-1},\qquad
 \xbf_t=\frac1t\sum_{i=1}^t\vbf_i,\qquad
 \ybf_t=\lambda\vbf_t+\beta\xbf_t.
 \label{eq:convex-template}
\end{equation}
Thus the algorithm's final averaged iterate is the arithmetic mean of its
derived base sequence. The following conversion also permits predictable interpolation
that varies with time.

\begin{lemma}
\label{lem:hym-convex-conversion}
Let $f$ be convex and $L$-smooth on $\mathbb R^d$, $L>0$, with minimizer
$\xbf^*$. Suppose
$\xbf_t=t^{-1}\sum_{i=1}^t\vbf_i$ and
$\ybf_t=(1-\zeta_t)\vbf_t+\zeta_t\xbf_t$ with predictable
$\zeta_t\in[0,1]$ and predictable, square-integrable $\vbf_t$.
Under \eqref{eq:hym-convex-oracle},
\begin{equation}
 T\mathbb E[f(\xbf_T)-f(\xbf^*)]+\frac{Q_T}{2L}
 \le\sum_{t=1}^T\mathbb E\langle\gbf_t,\vbf_t-\xbf^*\rangle.
 \label{eq:convex-conversion}
\end{equation}
\end{lemma}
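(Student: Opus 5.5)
The plan is to follow the standard Schedule-Free / online-to-batch argument, strengthened by the cocoercivity inequality for smooth convex functions. That inequality is what produces the extra $Q_T/(2L)$ term.

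\textbf{Step 1: reduce to gradients at $\ybf_t$.} Since $\vbf_t$ is predictable and $\mathbb E[\gbf_t\mid\mathcal F_{t-1}]=\nabla f(\ybf_t)$, the tower property (with square integrability) gives
\[
\mathbb E\langle\gbf_t,\vbf_t-\xbf^*\rangle=\mathbb E\langle\nabla f(\ybf_t),\vbf_t-\xbf^*\rangle .
\]
It therefore suffices to prove the deterministic inequality
\[
T\bigl(f(\xbf_T)-f(\xbf^*)\bigr)+\frac{1}{2L}\sum_{t=1}^T\|\nabla f(\ybf_t)\|^2\le\sum_{t=1}^T\langle\nabla f(\ybf_t),\vbf_t-\xbf^*\rangle
\]
pathwise, and then take expectations.

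\textbf{Step 2: averaging identity.} From $\xbf_t=t^{-1}\sum_{i\le t}\vbf_i$ we get $t\xbf_t-(t-1)\xbf_{t-1}=\vbf_t$, which is equivalent to
\[
\vbf_t-\xbf_t=(t-1)(\xbf_t-\xbf_{t-1}).
\]
So I will split
\[
\vbf_t-\xbf^*=(\xbf_t-\xbf^*)+(t-1)(\xbf_t-\xbf_{t-1}).
\]

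\textbf{Step 3: bound each piece by Bregman inequalities at $\ybf_t$.} For smooth convex $f$, for all $\ubf$,
\[
f(\ubf)\ge f(\ybf_t)+\langle\nabla f(\ybf_t),\ubf-\ybf_t\rangle+\frac1{2L}\|\nabla f(\ubf)-\nabla f(\ybf_t)\|^2 .
\]
\begin{itemize}
\item With $\ubf=\xbf^*$, using $\nabla f(\xbf^*)=0$: $\langle\nabla f(\ybf_t),\ybf_t-\xbf^*\rangle\ge f(\ybf_t)-f(\xbf^*)+\frac1{2L}\|\nabla f(\ybf_t)\|^2$.
\item With $\ubf=\xbf_t$ and $\ubf=\xbf_{t-1}$ (dropping the nonnegative gradient-difference terms), I obtain lower bounds on $\langle\nabla f(\ybf_t),\ybf_t-\xbf_t\rangle$ and $\langle\nabla f(\ybf_t),\ybf_t-\xbf_{t-1}\rangle$ in terms of $f(\ybf_t)-f(\xbf_t)$ and $f(\ybf_t)-f(\xbf_{t-1})$.
\end{itemize}

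\textbf{Step 4: recombine.} Write
\[
\vbf_t-\xbf^*=(\ybf_t-\xbf^*)+t(\ybf_t-\xbf_t)\cdot(\text{coeff})+\dots
\]
More precisely, I want nonnegative coefficients $\alpha_t,\gamma_t$ with
\[
\vbf_t-\xbf^*=(\ybf_t-\xbf^*)+\alpha_t(\xbf_t-\ybf_t)-\gamma_t(\xbf_{t-1}-\ybf_t)\quad\text{and}\quad\alpha_t-\gamma_t=\dots
\]
Using $\xbf_t=\frac{t-1}{t}\xbf_{t-1}+\frac1t\vbf_t$ and $\ybf_t=(1-\zeta_t)\vbf_t+\zeta_t\xbf_t$, I can solve for $\vbf_t-\ybf_t$ as a combination $t(\xbf_t-\ybf_t)-(t-1)(\xbf_{t-1}-\ybf_t)$. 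Indeed, since $t\xbf_t-(t-1)\xbf_{t-1}=\vbf_t$, this holds exactly, independently of $\zeta_t$:
\[
\vbf_t-\xbf^*=(\ybf_t-\xbf^*)+t(\xbf_t-\ybf_t)-(t-1)(\xbf_{t-1}-\ybf_t).
\]
The term $-(t-1)\langle\nabla f(\ybf_t),\xbf_{t-1}-\ybf_t\rangle$ is bounded below by $(t-1)\bigl(f(\ybf_t)-f(\xbf_{t-1})\bigr)$ via convexity, which has the right sign. The term $t\langle\nabla f(\ybf_t),\xbf_t-\ybf_t\rangle$ has the wrong sign, because convexity only gives an upper bound $\le t\bigl(f(\xbf_t)-f(\ybf_t)\bigr)$ minus a Bregman term.

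This is the main obstacle. I expect to handle it by using that $\xbf_t-\ybf_t=(1-\zeta_t)(\xbf_t-\vbf_t)$ is itself proportional to $\xbf_{t-1}-\vbf_t$, so both terms are multiples of the single direction $\xbf_{t-1}-\vbf_t$. The net coefficient then becomes a single convex-combination statement. Namely, $\ybf_t$ lies on the segment between $\xbf_{t-1}$ and $\vbf_t$ (as $\xbf_t$ does), and one applies the convexity inequality at the point $\ybf_t$ toward $\xbf_{t-1}$ with net coefficient $(t-1)\bigl(1-\tfrac{1-\zeta_t}{1}\cdots\bigr)\ge0$, obtaining
\[
\langle\nabla f(\ybf_t),\vbf_t-\xbf^*\rangle\ge f(\ybf_t)-f(\xbf^*)+\tfrac1{2L}\|\nabla f(\ybf_t)\|^2+c_t\bigl(f(\ybf_t)-f(\xbf_{t-1})\bigr)
\]
with $c_t\ge0$ chosen so that $(1+c_t)f(\ybf_t)-c_tf(\xbf_{t-1})\ge t f(\xbf_t)-(t-1)f(\xbf_{t-1})$. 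I will verify this last inequality by convexity along the segment: $\xbf_t$ is a convex combination of $\xbf_{t-1}$ and $\ybf_t$ with weight on $\ybf_t$ at least $1/t$, precisely because $\zeta_t\in[0,1]$.

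\textbf{Step 5: telescope.} Summing over $t$, the right-hand sides telescope to $T f(\xbf_T)-Tf(\xbf^*)+Q_T/(2L)$, which yields \eqref{eq:convex-conversion} after taking expectations.
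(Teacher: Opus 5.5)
Your proposal is correct and is essentially the paper's proof. The paper starts from the same decomposition of $\vbf_t-\xbf^*$, written as the identity $\langle\nabla f(\ybf_t),\vbf_t-\xbf^*\rangle-(\Psi_t-\Psi_{t-1})=\mathcal D_t(\xbf^*)+(t-1)\mathcal D_t(\xbf_{t-1})-t\mathcal D_t(\xbf_t)$, where $\Psi_t=t(f(\xbf_t)-f(\xbf^*))$ and $\mathcal D_t$ is the Bregman gap of $f$ at $\ybf_t$. It handles your wrong-sign term with the same segment fact, $\xbf_t=(1-\nu_t)\ybf_t+\nu_t\xbf_{t-1}$ with $t\nu_t\le t-1$, via convexity of $\mathcal D_t$ and $\mathcal D_t(\ybf_t)=0$.

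Tighten Step 4, which is garbled as written, by stating three points explicitly:
\begin{itemize}
\item $c_t=\frac{(t-1)\zeta_t}{1+(1-\zeta_t)(t-1)}=t-1-t\nu_t$ is forced by collinearity, not chosen.
\item Your first-order step toward $\xbf_{t-1}$ is exactly the paper's bound $c_t\mathcal D_t(\xbf_{t-1})\ge0$.
\item Your final check is precisely Jensen, because $t\xbf_t=(1+c_t)\ybf_t+(t-1-c_t)\xbf_{t-1}$ holds identically and $\zeta_t\in[0,1]$ gives $0\le c_t\le t-1$.
\end{itemize}
The fact that the weight on $\ybf_t$ is at least $1/t$ only gives $c_t\ge0$. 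Jensen closes the argument because that weight equals $(1+c_t)/t$ exactly.
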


\begin{proof}
Set $\mathcal D_t(\ubf)=f(\ubf)-f(\ybf_t)
-\langle\nabla f(\ybf_t),\ubf-\ybf_t\rangle$,
$\Psi_t=t(f(\xbf_t)-f(\xbf^*))$, and $\Psi_0=0$.
For $t\ge2$, the identity $\vbf_t=t\xbf_t-(t-1)\xbf_{t-1}$ yields
\begin{equation}
 \langle\nabla f(\ybf_t),\vbf_t-\xbf^*\rangle-(\Psi_t-\Psi_{t-1})
 =\mathcal D_t(\xbf^*)+(t-1)\mathcal D_t(\xbf_{t-1})
 -t\mathcal D_t(\xbf_t).
 \label{eq:kernel-bregman-identity}
\end{equation}
Moreover,
\[
 \xbf_t=(1-\nu_t)\ybf_t+\nu_t\xbf_{t-1},
 \qquad
 \nu_t=\frac{(1-\zeta_t)(t-1)}{1+(1-\zeta_t)(t-1)},
 \qquad t\nu_t\le t-1.
\]
Convexity of $\mathcal D_t$, together with $\mathcal D_t(\ybf_t)=0$, implies
$t\mathcal D_t(\xbf_t)\le(t-1)\mathcal D_t(\xbf_{t-1})$.
At $t=1$, $\vbf_1=\xbf_1=\ybf_1$; hence, the same conclusion holds with
the term involving $\xbf_{t-1}$ omitted because its coefficient is zero.
For an unconstrained minimizer $\nabla f(\xbf^*)=0$; the standard inequality
for convex $L$-smooth functions
\[
 f(\ubf)\ge f(\vbf)+\langle\nabla f(\vbf),\ubf-\vbf\rangle
 +\frac1{2L}\|\nabla f(\ubf)-\nabla f(\vbf)\|^2
\]
therefore gives $\mathcal D_t(\xbf^*)\ge\|\nabla f(\ybf_t)\|^2/(2L)$.
Combining these inequalities with
\eqref{eq:kernel-bregman-identity}, summing over $t=1,\ldots,T$, and
using $\Psi_0=0$ yields
\[
 T\mathbb E[f(\xbf_T)-f(\xbf^*)]+\frac{Q_T}{2L}
 \le \sum_{t=1}^T
 \mathbb E\langle\nabla f(\ybf_t),\vbf_t-\xbf^*\rangle.
\]
Finally, since $\vbf_t$ is $\mathcal F_{t-1}$-measurable, the conditional
unbiasedness in \eqref{eq:hym-convex-oracle} gives
\[
 \mathbb E\langle\gbf_t,\vbf_t-\xbf^*\rangle
 =\mathbb E\langle\nabla f(\ybf_t),\vbf_t-\xbf^*\rangle.
\]
Substitution proves \eqref{eq:convex-conversion}.
\end{proof}

The term $Q_T/(2L)$ retained on the left side of
\eqref{eq:convex-conversion} is the Bregman contribution used below to control
the terms involving deviations of the cumulative gradient weights. This strengthens
the online-to-batch inequality with equal weights used in the Schedule-Free analysis
\citep{defazio2024road}.

\subsection{A theorem for cumulative gradient weights over a finite horizon}
\label{app:kernel-refined}

Suppose the base sequence in \eqref{eq:kernel-framework} has deterministic
weight deviations $e_{t,k}=K_{t,k}-\Gamma$. Define $E_T\in\mathbb R^{T\times T}$ by
$(E_T)_{t,k}=e_{t,k}$ for $1\le k<t\le T$, and zero otherwise. Set
\begin{equation}
 \mathcal E_T=\|E_T\|_{\mathrm F}^2
 =\sum_{t=1}^T\sum_{k=1}^{t-1}e_{t,k}^2,\qquad
 \chi_T=\left[-\lambda_{\min}\!\left(\frac{E_T+E_T^\top}{2}\right)\right]_+.
 \label{eq:kernel-certificates}
\end{equation}
Here $\|\cdot\|_{\mathrm F}$ is the Frobenius norm,
$\lambda_{\min}$ denotes the smallest eigenvalue of a symmetric matrix, and
$[r]_+=\max\{r,0\}$. Acting on a stacked vector sequence, $E_T$ means
$E_T\otimes I_d$, where $\otimes$ is the Kronecker product and $I_d$ is the
$d\times d$ identity matrix. Its norms are those of the scalar time matrix.

\begin{theorem}
\label{thm:kernel-refined}
Under the assumptions of
Lemma~\ref{lem:hym-convex-conversion}, suppose
\eqref{eq:kernel-framework} holds with deterministic $\vbf_1$ and
$0<\eta\le[2L(\Gamma+2\chi_T)]^{-1}$. Then
\begin{equation}
 T\mathbb E[f(\xbf_T)-f(\xbf^*)]+\frac{Q_T}{8L}
 \le \frac{\|\vbf_1-\xbf^*\|^2}{2\eta\Gamma}
 +\frac{\eta\Gamma\varsigma^2T}{2}
 +2L\eta^2\varsigma^2\mathcal E_T.
 \label{eq:kernel-refined-bound}
\end{equation}
\end{theorem}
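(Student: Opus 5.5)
The plan is to start from Lemma~\ref{lem:hym-convex-conversion}, which bounds $T\mathbb E[f(\xbf_T)-f(\xbf^*)]+Q_T/(2L)$ by $\sum_{t}\mathbb E\langle\gbf_t,\vbf_t-\xbf^*\rangle$. I would then split $\vbf_t$ according to \eqref{eq:kernel-framework} into a plain SGD part and a deviation part:
\[
\vbf_t=\zbf_t-\eta\sum_{k<t}e_{t,k}\gbf_k,\qquad \zbf_t:=\vbf_1-\eta\Gamma\sum_{k<t}\gbf_k .
\]
Since $\zbf_{t+1}=\zbf_t-\eta\Gamma\gbf_t$, expanding $\|\zbf_{t+1}-\xbf^*\|^2$ and telescoping gives the standard online bound
\[
\sum_t\langle\gbf_t,\zbf_t-\xbf^*\rangle\le\frac{\|\vbf_1-\xbf^*\|^2}{2\eta\Gamma}+\frac{\eta\Gamma}{2}\sum_t\|\gbf_t\|^2 .
\]
Taking expectations and using \eqref{eq:hym-convex-oracle} gives $\mathbb E\|\gbf_t\|^2\le\mathbb E\|\nabla f(\ybf_t)\|^2+\varsigma^2$, so this part contributes $\frac{\eta\Gamma}{2}(Q_T+\varsigma^2T)$ plus the distance term.

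The deviation part is $-\eta\sum_t\sum_{k<t}e_{t,k}\langle\gbf_t,\gbf_k\rangle$. I would substitute $\gbf=\nabla f(\ybf)+\beps$ and handle the four resulting pieces separately.

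\textbf{Noise--noise.} For $k<t$, $\beps_k$ is $\mathcal F_{t-1}$-measurable and $\mathbb E[\beps_t\mid\mathcal F_{t-1}]=0$. Hence $\mathbb E\langle\beps_t,\beps_k\rangle=0$, and this piece vanishes.

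\textbf{Noise at $t$, gradient at $k$.} The terms $\langle\beps_t,\nabla f(\ybf_k)\rangle$ with $k<t$ vanish for the same reason, because $\ybf_k$ is predictable.

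\textbf{Gradient--gradient.} Writing the stacked vector $\mathbf h=(\nabla f(\ybf_t))_t$, this piece equals
\[
-\eta\,\mathbf h^\top(E_T\otimes I_d)\mathbf h=-\eta\,\mathbf h^\top\Big(\tfrac{E_T+E_T^\top}{2}\otimes I_d\Big)\mathbf h\le\eta\chi_T\|\mathbf h\|^2 .
\]
In expectation it is at most $\eta\chi_T Q_T$.

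\textbf{Gradient at $t$, noise at $k$.} This piece, $-\eta\sum_t\langle\nabla f(\ybf_t),\mathbf w_t\rangle$ with $\mathbf w_t:=\sum_{k<t}e_{t,k}\beps_k$, is the main obstacle, because $\ybf_t$ depends on the earlier noises and the expectation does not vanish. I would apply Young's inequality termwise:
\[
\eta|\langle\nabla f(\ybf_t),\mathbf w_t\rangle|\le\frac{\|\nabla f(\ybf_t)\|^2}{8L}+2L\eta^2\|\mathbf w_t\|^2 .
\]
Because the $e_{t,k}$ are deterministic and the $\beps_k$ are martingale differences, the cross terms in $\mathbb E\|\mathbf w_t\|^2$ vanish. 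This leaves $\mathbb E\|\mathbf w_t\|^2\le\varsigma^2\sum_{k<t}e_{t,k}^2$, and summing over $t$ gives the term $2L\eta^2\varsigma^2\mathcal E_T$. The finite second moments noted at the start of the appendix justify all of these expectations.

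Collecting everything, the right-hand side is at most
\[
\frac{\|\vbf_1-\xbf^*\|^2}{2\eta\Gamma}+\frac{\eta\Gamma\varsigma^2T}{2}+2L\eta^2\varsigma^2\mathcal E_T+\Big[\eta\big(\tfrac{\Gamma}{2}+\chi_T\big)+\tfrac1{8L}\Big]Q_T .
\]
The step-size condition $\eta\le[2L(\Gamma+2\chi_T)]^{-1}$ gives $\eta(\Gamma+2\chi_T)/2\le 1/(4L)$, so the bracket is at most $3/(8L)$. Moving this multiple of $Q_T$ to the left side leaves $Q_T/(2L)-3Q_T/(8L)=Q_T/(8L)$, which is exactly \eqref{eq:kernel-refined-bound}.
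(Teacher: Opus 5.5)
Your proposal is correct and follows essentially the same route as the paper: the same reference sequence $\ubf_{t+1}=\ubf_t-\eta\Gamma\gbf_t$, the same use of predictability to remove the current-noise terms, the same $\chi_T$ bound via the symmetric part of $E_T$, and the same martingale-difference bound $\mathbb E\|E_T\beps\|^2\le\varsigma^2\mathcal E_T$. The only cosmetic difference is in the gradient--history-noise term: you apply Young's inequality separately for each $t$, whereas the paper first applies Cauchy--Schwarz to the stacked inner product $\mathbb E\langle\mathbf h,E_T\beps\rangle$ and then Young's inequality once. Both yield exactly $Q_T/(8L)+2L\eta^2\varsigma^2\mathcal E_T$ and the same final accounting of the $Q_T$ coefficients.
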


\begin{proof}
\emph{Reference sequence.}
Define $\ubf_1=\vbf_1$ and
$\ubf_{t+1}=\ubf_t-\eta\Gamma\gbf_t$. Expanding
$\|\ubf_{t+1}-\xbf^*\|^2$ and summing over $t$ gives
\begin{equation}
 \sum_{t=1}^T\mathbb E\langle\gbf_t,\ubf_t-\xbf^*\rangle
 \le\frac{\|\vbf_1-\xbf^*\|^2}{2\eta\Gamma}
 +\frac{\eta\Gamma}{2}(Q_T+\varsigma^2T).
 \label{eq:convex-ogd}
\end{equation}
Here we used
$\mathbb E\|\gbf_t\|^2=\mathbb E\|\nabla f(\ybf_t)\|^2+
\mathbb E\|\beps_t\|^2\le\mathbb E\|\nabla f(\ybf_t)\|^2+\varsigma^2$,
which follows from $\mathbb E[\beps_t\mid\mathcal F_{t-1}]=0$.
\emph{Deviation from the reference sequence.}
The predictable difference between base and reference sequences is
\begin{equation}
 \deltabf_t^v=\vbf_t-\ubf_t
 =-\eta\sum_{k=1}^{t-1}e_{t,k}\gbf_k.
 \label{eq:convex-comparison-sequence}
\end{equation}
For compact notation in the following matrix calculation, write
$\mathbf h_t=\nabla f(\ybf_t)$ and define the stacked vectors
$\mathbf h=(\mathbf h_1^\top,\ldots,\mathbf h_T^\top)^\top$ and
$\beps=(\beps_1^\top,\ldots,\beps_T^\top)^\top$ in $\mathbb R^{Td}$.
Since $\deltabf_t^v$ is $\mathcal F_{t-1}$-measurable,
$\mathbb E\langle\beps_t,\deltabf_t^v\rangle=0$, and hence
\begin{equation}
 \sum_{t=1}^T\mathbb E\langle\gbf_t,\deltabf_t^v\rangle
 =-\eta\mathbb E\langle\mathbf h,E_T\mathbf h\rangle
  -\eta\mathbb E\langle\mathbf h,E_T\beps\rangle.
 \label{eq:kernel-error-interaction}
\end{equation}
The symmetric part of $E_T$ gives
$-\mathbb E\langle\mathbf h,E_T\mathbf h\rangle\le\chi_TQ_T$.
For $k<\ell$, conditional mean zero gives
$\mathbb E\langle\beps_k,\beps_\ell\rangle=0$.
Since the coefficients are deterministic,
\begin{equation}
 \mathbb E\|E_T\beps\|^2
 =\sum_{t=1}^T\sum_{k=1}^{t-1}e_{t,k}^2
       \mathbb E\|\beps_k\|^2
 \le\varsigma^2\mathcal E_T.
 \label{eq:kernel-noise-energy}
\end{equation}
Historical noise can be correlated with later gradients. Hence its interaction
is bounded by Cauchy--Schwarz:
\[
 \left|\mathbb E\langle\mathbf h,E_T\beps\rangle\right|
 \le
 \sqrt{\mathbb E\|\mathbf h\|^2}
 \sqrt{\mathbb E\|E_T\beps\|^2}
 \le\varsigma\sqrt{\mathcal E_TQ_T}.
\]
Together with \eqref{eq:kernel-error-interaction}, this gives
\[
 \sum_{t=1}^T\mathbb E\langle\gbf_t,\deltabf_t^v\rangle
 \le\eta\chi_TQ_T+\eta\varsigma\sqrt{\mathcal E_TQ_T}.
\]

\emph{Conclusion.}
Combining with \eqref{eq:convex-conversion} and \eqref{eq:convex-ogd},
\begin{align*}
 T\mathbb E[f(\xbf_T)-f(\xbf^*)]
 +\left(\frac1{2L}-\frac{\eta\Gamma}{2}-\eta\chi_T\right)Q_T
 \le \frac{\|\vbf_1-\xbf^*\|^2}{2\eta\Gamma}
 +\frac{\eta\Gamma\varsigma^2T}{2}
 +\eta\varsigma\sqrt{\mathcal E_TQ_T}.
\end{align*}
The condition on the step size leaves at least $Q_T/(4L)$ on the left. Young's inequality
$\eta\varsigma\sqrt{\mathcal E_TQ_T}
\le Q_T/(8L)+2L\eta^2\varsigma^2\mathcal E_T$
proves \eqref{eq:kernel-refined-bound}.
\end{proof}

\begin{proof}[Proof of Theorem~\ref{thm:kernel-square-stability}]
The row bound implies $\mathcal E_T\le C_E T$ and
$\chi_T\le\|E_T\|_{\mathrm{op}}\le\|E_T\|_{\mathrm F}
\le\sqrt{C_E T}$.
For $T\ge1$, the stated $\eta_0$ ensures
$\eta(\Gamma+2\chi_T)
\le\eta_0(\Gamma/\sqrt T+2\sqrt{C_E})
\le\eta_0(\Gamma+2\sqrt{C_E})\le1/(2L)$.
Substitute $\eta=\eta_0/\sqrt T$ into
\eqref{eq:kernel-refined-bound}, divide by $T$, and use
$\mathcal E_T\le C_E T$ in the noise term.
Dropping the nonnegative term $Q_T/(8LT)$ gives the stated
bound on the objective value.
\end{proof}

The two quantities control different terms: $\chi_T$ bounds
$-\mathbb E\langle\mathbf h,E_T\mathbf h\rangle$, whereas $\mathcal E_T$
bounds $\mathbb E\|E_T\beps\|^2$ through
\eqref{eq:kernel-noise-energy}.
For Schedule-Free SGD with equal weights, $E_T=0$, $\Gamma=1$, and both
penalties vanish.

\subsection{Bounding cumulative gradient weight deviations for \hym{}}
\label{app:kernel-hym-certificate}

Fix $\mu\in(0,1)$, $\beta\in[0,1)$, $\lambda=1-\beta$, and
$\Gamma=(1-\mu)^{-1}$.
Write $q_t=(1-\mu^t)/(1-\mu)$ and
$a_t=[1+\lambda(t-1)]^{-1}$, as in Algorithm~\ref{alg:hym}.
For $1\le i\le t$, set
\[
 \Pi_{i+1,t}=\prod_{j=i+1}^t(1-a_j),\qquad
 \omega_{i,t}=1+\frac{\beta}{\lambda}\Pi_{i+1,t},
\]
where empty products equal one.  Define
\begin{equation}
 C_H=\frac{\beta\Gamma(1+\Gamma)+\mu\Gamma^2}{\lambda},
 \qquad
 C_E^{\hym{}}=\frac{\pi^2C_H^2}{3}
 +\frac{2\Gamma^2\mu^2}{1-\mu^2}.
 \label{eq:convex-kernel-constants}
\end{equation}

\begin{lemma}
\label{lem:hym-convex-kernel}
The base sequence $\{\vbf_t\}$ has the exact representation
\begin{align}
 \vbf_t&=\ybf_1-\eta\sum_{k=1}^{t-1}K_{t,k}\gbf_k,
 \label{eq:convex-kernel}\\
 K_{t,k}
 &=\sum_{i=k}^{t-1}\omega_{i,t}
 \left(\frac{\beta q_i k}{i^2}
 +\lambda\frac{k}{i}\mu^{i-k}\right).
 \label{eq:convex-kernel-coeff}
\end{align}
If $e_{t,k}=K_{t,k}-\Gamma$, then
\begin{equation}
 |e_{t,k}|\le\frac{C_H}{k}+\Gamma\mu^{t-k},\qquad
 \sum_{k=1}^{t-1}e_{t,k}^2\le C_E^{\hym{}}.
 \label{eq:convex-kernel-error}
\end{equation}
The coefficients and bounds are deterministic.
\end{lemma}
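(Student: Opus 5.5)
The plan is to first derive the representation \eqref{eq:convex-kernel}--\eqref{eq:convex-kernel-coeff} by unrolling the averaging recursion. Then, instead of estimating $K_{t,k}$ directly, I compare it with the Schedule-Free weight pattern, for which the cumulative weight is exactly constant. Estimating $K_{t,k}$ directly would lose an $\Ocal(k/t)$ truncation term that is not of the form $C_H/k+\Gamma\mu^{t-k}$.

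\emph{Representation.} Substituting \eqref{eq:hym-momentum-weights} into line~\ref{algline:hym-direction} gives $\bDelta_i=\sum_{k\le i}w_{i,k}\gbf_k$ with $w_{i,k}=\beta q_i k/i^2+\lambda (k/i)\mu^{i-k}$, since $\beta r_i\cdot 2k/(i(i+1))=\beta q_ik/i^2$. From $\xbf_t=(1-a_t)\xbf_{t-1}+a_t\ybf_t$,
\[
\ybf_t-\xbf_t=(1-a_t)(\ybf_{t-1}-\xbf_{t-1})+(1-a_t)(\ybf_t-\ybf_{t-1}).
\]
Since $\xbf_1=\ybf_1$, unrolling yields $\ybf_t-\xbf_t=\sum_{i<t}\Pi_{i+1,t}(\ybf_{i+1}-\ybf_i)$. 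Then $\vbf_t=\ybf_t+(\beta/\lambda)(\ybf_t-\xbf_t)$ becomes $\ybf_1+\sum_{i<t}\omega_{i,t}(\ybf_{i+1}-\ybf_i)=\ybf_1-\eta\sum_{i<t}\omega_{i,t}\bDelta_i$. Exchanging the order of summation gives \eqref{eq:convex-kernel-coeff}. All quantities are deterministic because $a_j$, $q_i$, $\mu$, $\beta$ are.

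\emph{Reference identity.} The same computation applied to Schedule-Free (Appendix~\ref{app:sf-reformulation}) has direction weights $\beta k/i^2+\lambda\,\mathbf 1[i=k]$ with the same averaging, and there $K_{t,k}=1$ exactly (the text after \eqref{eq:kernel-framework}). Hence
\[
\sum_{i=k}^{t-1}\omega_{i,t}\Bigl(\tfrac{\beta k}{i^2}+\lambda\mathbf 1[i=k]\Bigr)=1 \quad\text{for all } k<t.
\]
I would verify this identity directly, either via the same unrolling or by induction on $t$. Multiplying by $\Gamma$ and subtracting gives
\[
e_{t,k}=\sum_{i=k}^{t-1}\omega_{i,t}\,\frac{\beta(q_i-\Gamma)k}{i^2}+\lambda\Bigl(\sum_{i=k}^{t-1}\omega_{i,t}\frac{k}{i}\mu^{i-k}-\Gamma\omega_{k,t}\Bigr).
\]
Using $1\le\omega_{i,t}\le 1/\lambda$ and $q_i-\Gamma=-\Gamma\mu^i$, the first sum is at most $(\beta\Gamma/\lambda)\,k\sum_{i\ge k}\mu^i/i^2\le\beta\Gamma/(\lambda k)$. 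For the second, write $k/i=1-(i-k)/i$. The part $\sum_i\omega_{i,t}\mu^{i-k}(i-k)/i$ is at most $\mu\Gamma^2/(\lambda k)$. What remains is $\lambda\bigl[\sum_{i=k}^{t-1}\omega_{i,t}\mu^{i-k}-\Gamma\omega_{k,t}\bigr]$, which splits into:
\begin{itemize}
\item a truncation term $-\lambda\omega_{k,t}\Gamma\mu^{t-k}$, of size at most $\Gamma\mu^{t-k}$ since $\lambda\omega_{k,t}\le1$;
\item a variation term $\lambda\sum_i(\omega_{i,t}-\omega_{k,t})\mu^{i-k}$.
\end{itemize}
For the variation term, $0\le\omega_{i,t}-\omega_{k,t}=(\beta/\lambda)\Pi_{i+1,t}(1-\Pi_{k+1,i})$ and $1-\Pi_{k+1,i}\le\sum_{j=k+1}^i a_j\le (i-k)/(\lambda k)$. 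This gives a bound $\beta\Gamma^2\mu/(\lambda k)$, up to a factor of $\lambda$ that I expect to absorb. Collecting terms should give $|e_{t,k}|\le C_H/k+\Gamma\mu^{t-k}$ with $C_H$ as in \eqref{eq:convex-kernel-constants}; I may have to rebalance the constants slightly.

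\emph{Square sum and main obstacle.} The second claim then follows from $(a+b)^2\le2a^2+2b^2$, $\sum_k k^{-2}\le\pi^2/6$, and $\sum_{k\le t-1}\mu^{2(t-k)}\le\mu^2/(1-\mu^2)$, giving exactly $C_E^{\hym{}}$. The main obstacle is the reference identity: the cancellation of the $\Ocal(k/t)$ truncation of $k\sum_{i<t}i^{-2}$ against the $\Pi_{i+1,t}$-weighted terms is what makes the deviations summable. I would first try to prove it by checking that $\omega_{i,t}$ satisfies the backward recursion implied by $\xbf_t=t^{-1}\sum\vbf_i$ for the Schedule-Free weights.
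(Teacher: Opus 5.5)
\textbf{The reference identity is false as stated.} The error sits at the step you yourself call the main obstacle. The identity
\[
\sum_{i=k}^{t-1}\omega_{i,t}\Bigl(\frac{\beta k}{i^2}+\lambda\mathbf 1[i=k]\Bigr)=1
\]
fails for every $\beta>0$ and every $k<t$. Already at $t=2$, $k=1$ the left side equals $\omega_{1,2}(\beta+\lambda)=\omega_{1,2}=1+\beta/(1+\lambda)=2/(2-\beta)$.

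\textbf{Where it comes from.} You misread Schedule-Free's direction. In \eqref{eq:sf-single} the slow term is $\beta c_{i+1}\nbf_i$, with $c_{i+1}=1/(i+1)$ and $\nbf_i=\sum_{k\le i}(k/i)\gbf_k$. So Schedule-Free puts weight $\beta k/(i(i+1))$ on $\gbf_k$, not $\beta k/i^2$; the latter corresponds to using $c_i$ in place of $c_{i+1}$.

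\textbf{Your derivation works once the weights are corrected.} Schedule-Free's averaged iterate obeys the same recursion with $a_t=[1+\lambda(t-1)]^{-1}$. This is a one-line check from $\xbf_t=\frac{t-1}{t}\xbf_{t-1}+\frac1t\zbf_t$ and $\zbf_t=(\ybf_t-\beta\xbf_t)/\lambda$. Hence it has the same $\omega_{i,t}$, and $\vbf_t=\zbf_t$ has cumulative weight exactly $1$. This yields
\[
\beta\sum_{i=k}^{t-1}\omega_{i,t}\frac{k}{i(i+1)}+\lambda\omega_{k,t}=1 .
\]
That is exactly the identity the paper uses. The paper proves it by telescoping $\Pi_{i+1,t}/i$; reading it off from Schedule-Free's constant weights is a genuinely nicer route.

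\textbf{Consequence for your decomposition of $e_{t,k}$.} The slow difference becomes $\beta\sum_{i=k}^{t-1}\omega_{i,t}k\bigl(q_i/i^2-\Gamma/(i(i+1))\bigr)$. Since $q_i/i^2-\Gamma/(i(i+1))=\Gamma/(i^2(i+1))-\Gamma\mu^i/i^2$, you pick up an extra term. It is bounded by $(\beta\Gamma/\lambda)\,k\sum_{i\ge k}[i^2(i+1)]^{-1}\le\beta\Gamma/(\lambda k)$.

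\textbf{A second slip in the $\mu^i$ part.} You bound it using $k\sum_{i\ge k}\mu^i/i^2\le 1/k$, which fails for $\mu$ near $1$: at $k=1$ the sum tends to $\pi^2/6>1$. The correct estimate is $\sum_{i\ge k}\mu^i/i^2\le\Gamma\mu^k/k^2$, so this part is at most $\beta\Gamma^2\mu^k/(\lambda k)$. With both corrections the slow difference is at most $\beta\Gamma(1+\Gamma)/(\lambda k)$.

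\textbf{The fast difference is correct as you have it.} No factor of $\lambda$ needs absorbing. The pieces $\mu\Gamma^2/k$ and $\beta\mu\Gamma^2/(\lambda k)$ sum to $\mu\Gamma^2/(\lambda k)$ because $\beta+\lambda=1$, and the truncation term adds $\Gamma\mu^{t-k}$. Adding slow and fast gives exactly $C_H/k+\Gamma\mu^{t-k}$ with the paper's $C_H$.

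Your representation step, your square-sum step, and (after these fixes) the slow/fast splitting all coincide with the paper's argument.
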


\begin{proof}
\emph{Representation.}
Unrolling the two momentum states gives
\begin{equation}
 r_t\mathbf m_t=\frac{q_t}{t^2}\sum_{k=1}^t k\gbf_k,
 \qquad
 \bbf_t=\sum_{k=1}^t\frac{k}{t}\mu^{t-k}\gbf_k.
 \label{eq:convex-stream-kernels}
\end{equation}
Let $\deltabf_t=\ybf_t-\xbf_t$. The recurrence for iterate averaging implies
$\deltabf_{t+1}=(1-a_{t+1})(\deltabf_t-\eta\mathbf\Delta_t)$.
Since $\deltabf_1=0$,
\[
 \vbf_t=\ybf_t+\frac{\beta}{\lambda}\deltabf_t
 =\ybf_1-\eta\sum_{i=1}^{t-1}\omega_{i,t}\mathbf\Delta_i.
\]
Substitution of \eqref{eq:convex-stream-kernels} proves
\eqref{eq:convex-kernel}--\eqref{eq:convex-kernel-coeff}.

\emph{Constant reference weights.}
The constant reference weight $\Gamma$ admits the identity
\begin{equation}
 \Gamma\beta\sum_{i=k}^{t-1}
 \omega_{i,t}\frac{k}{i(i+1)}
 +\Gamma\lambda\omega_{k,t}=\Gamma.
 \label{eq:convex-ideal-kernel}
\end{equation}
For $\beta>0$, this follows from
\begin{equation}
 \sum_{i=k}^{t-1}\frac{k\Pi_{i+1,t}}{i(i+1)}
 =\frac{\lambda}{\beta}
 \left(\frac kt-\Pi_{k+1,t}\right)
 \label{eq:convex-product-telescope}
\end{equation}
and $\sum_{i=k}^{t-1}k/(i(i+1))=1-k/t$.
To verify \eqref{eq:convex-product-telescope}, let
$r_i=\Pi_{i+1,t}/i$. From
$\Pi_{i+2,t}=\Pi_{i+1,t}(i+\lambda^{-1})/i$, one obtains
$r_{i+1}-r_i=(\lambda^{-1}-1)\Pi_{i+1,t}/(i(i+1))$, and the result follows by telescoping.
For $\beta=0$, \eqref{eq:convex-ideal-kernel} is immediate.

\emph{Deviation bound.}
Subtracting \eqref{eq:convex-ideal-kernel} from
\eqref{eq:convex-kernel-coeff} expresses $e_{t,k}$ as the sum of the differences
in the slow and fast terms. Since
$1\le\omega_{i,t}\le1/\lambda$ and $q_i=\Gamma(1-\mu^i)$, the slow difference satisfies
\[
\begin{aligned}
 &\left|\beta\sum_{i=k}^{t-1}\omega_{i,t}k
 \left(\frac{q_i}{i^2}-\frac{\Gamma}{i(i+1)}\right)\right|\\
 &\quad\le\frac{\beta\Gamma k}{\lambda}
 \sum_{i=k}^{t-1}\left(\frac{1}{i^2(i+1)}+\frac{\mu^i}{i^2}\right)\\
 &\quad\le\frac{\beta\Gamma}{\lambda k}(1+\Gamma\mu^k)
 \le\frac{\beta\Gamma(1+\Gamma)}{\lambda k}.
\end{aligned}
\]
Here we used
$\sum_{i=k}^{\infty}[i^2(i+1)]^{-1}
\le k^{-1}\sum_{i=k}^{\infty}[i(i+1)]^{-1}=k^{-2}$ and
$\sum_{i=k}^{\infty}\mu^i/i^2\le\Gamma\mu^k/k^2$.

For the fast difference, the product recurrence gives
$\omega_{j+1,t}-\omega_{j,t}=(\beta/\lambda)a_{j+1}\Pi_{j+2,t}$.
Since $\omega_{i,t}\ge1$, $\omega_{i,t}-\omega_{k,t}\le\beta(i-k)/(\lambda^2k)$, and
$\omega_{k,t}\le1/\lambda$ for $k\le i<t$,
\[
0\le\omega_{i,t}-\omega_{k,t}
 =\frac{\beta}{\lambda}\sum_{j=k}^{i-1}a_{j+1}\Pi_{j+2,t}
 \le\frac{\beta(i-k)}{\lambda^2 k}.
\]
Consequently, since $\beta+\lambda=1$,
\[
\begin{aligned}
 \left|\omega_{i,t}\frac ki-\omega_{k,t}\right|
 &\le\frac ki|\omega_{i,t}-\omega_{k,t}|
   +\omega_{k,t}\left(1-\frac ki\right)\\
 &\le\frac{\beta(i-k)}{\lambda^2 k}+\frac{i-k}{\lambda k}
 =\frac{i-k}{\lambda^2 k}.
\end{aligned}
\]
Separating the finite geometric sum from its tail therefore gives
\[
\begin{aligned}
 &\left|\lambda\sum_{i=k}^{t-1}\omega_{i,t}\frac ki\mu^{i-k}
 -\lambda\Gamma\omega_{k,t}\right|\\
 &\quad\le\frac1{\lambda k}\sum_{j=0}^{t-k-1}j\mu^j
 +\lambda\omega_{k,t}\sum_{j=t-k}^{\infty}\mu^j\\
 &\quad\le\frac{\mu\Gamma^2}{\lambda k}+\Gamma\mu^{t-k},
\end{aligned}
\]
where $\sum_{j=0}^{\infty}j\mu^j=\mu\Gamma^2$ and
$\lambda\omega_{k,t}\le1$. Adding the two differences proves
$|e_{t,k}|\le C_H/k+\Gamma\mu^{t-k}$. Finally, using
$(a+b)^2\le2a^2+2b^2$,
\[
\begin{aligned}
 \sum_{k=1}^{t-1}e_{t,k}^2
 &\le2C_H^2\sum_{k=1}^{t-1}\frac1{k^2}
 +2\Gamma^2\sum_{j=1}^{t-1}\mu^{2j}\\
 &\le\frac{\pi^2 C_H^2}{3}
 +\frac{2\Gamma^2\mu^2}{1-\mu^2}
 =C_E^{\hym{}}.
\end{aligned}
\]
This also holds at $t=1$, when the sum is empty, and completes the
uniform bound independently of the training horizon.
\end{proof}

\subsection{Convergence of the averaged iterate for \hym{}}
\label{app:kernel-hym-returned}

\begin{proof}[Proof of Corollary~\ref{cor:hym-convex-x}]
Equations~\eqref{eq:convex-template} and \eqref{eq:convex-kernel}
give \eqref{eq:kernel-framework} with $\vbf_1=\ybf_1$, $\zeta_t=\beta$,
and $\Gamma=(1-\mu)^{-1}$. Lemma~\ref{lem:hym-convex-kernel} gives
\begin{equation}
 |e_{t,k}|\le\frac{C_H}{k}+\Gamma\mu^{t-k},\qquad 1\le k<t,
 \label{eq:hym-kernel-certificate-main}
\end{equation}
where $C_H=[\beta\Gamma(1+\Gamma)+\mu\Gamma^2]/(1-\beta)$.
Using $(a+b)^2\le2a^2+2b^2$ and summing over $k$ yields
\begin{equation}
 \sum_{k=1}^{t-1}e_{t,k}^2
 \le \frac{\pi^2 C_H^2}{3}
 +\frac{2\Gamma^2\mu^2}{1-\mu^2}
 =C_E^{\hym{}}<\infty.
 \label{eq:hym-square-sum-main}
\end{equation}
Thus, the bound in \eqref{eq:hym-square-sum-main} holds uniformly in $t$.
Since $C_E^{\hym{}}$ depends only on $\mu$ and $\beta$, we may choose any
$0<\eta_0\le[2L(\Gamma+2\sqrt{C_E^{\hym{}}})]^{-1}$ independently of $T$.
Theorem~\ref{thm:kernel-square-stability} therefore yields
\begin{equation}
 \mathbb E[f(\xbf_T)-f(\xbf^*)]
 \le
 \frac{\|\ybf_1-\xbf^*\|^2/(2\eta_0\Gamma)+\eta_0\Gamma\varsigma^2/2}
      {\sqrt T}
 +\frac{2L\eta_0^2C_E^{\hym{}}\varsigma^2}{T}.
 \label{eq:convex-explicit-rate}
\end{equation}
The right-hand side is $\Ocal(T^{-1/2})$, which proves the claim.
\end{proof}

All constants are independent of $T$ for fixed $\mu,\beta$. They can
deteriorate as either parameter approaches $1$.
The proof uses convexity and the stochastic oracle, without
strong convexity, bounded iterates, or uniformly bounded gradients. The analysis
allows the effective history of the slow state to grow with $T$.

\subsection{Finite mixtures of EMAs with fixed decay rates}
\label{app:kernel-related}

As an illustrative specialization, consider normalized EMA states initialized at zero,
$\mbf_t^{(r)}=\rho_r\mbf_{t-1}^{(r)}+(1-\rho_r)\gbf_t$ for
$r=1,\ldots,R$, where $0\le\rho_r<1$, and
$\vbf_{t+1}=\vbf_t-\eta\sum_{r=1}^R a_r\mbf_t^{(r)}$ with
$a_r\ge0$ and $\Gamma=\sum_{r=1}^R a_r>0$.
Taking $\zeta_t=0$ and $\xbf_t=t^{-1}\sum_{i=1}^t\vbf_i$ gives an averaged
iterate that does not enter the update of $\vbf_t$.
Unrolling gives
\[
 e_{t,k}=-\sum_{r=1}^R a_r\rho_r^{t-k},\qquad
 C_E^{\rm EMA}=\sum_{r=1}^R\sum_{j=1}^R
 \frac{a_ra_j\rho_r\rho_j}{1-\rho_r\rho_j},\qquad
 \chi_T\le\sum_{r=1}^R\frac{a_r\rho_r}{1-\rho_r}.
\]
Here $e_{t,k}$ depends on the lag $t-k$ alone. The bound on the row square sums follows
by summing the geometric cross terms; the
operator bound follows from the sum of the absolute convolution weights.
Theorem~\ref{thm:kernel-square-stability} therefore recovers a
convergence guarantee for the averaged iterate under these linear combinations
of estimates of the first moment. This includes the first moment component of
AdEMAMix, which combines two EMA states with fixed decay rates and corresponds
to $a_1=1$, $a_2=\alpha$; the full adaptive AdEMAMix
denominator and schedules are not part of this example.
Aggregated Heavy-Ball analyzes multiple momentum states
\citep{danilova2022aggregated}. Generalized SGDM analyzes a single momentum
state combined with the current gradient \citep{wang2026generalized}.

The analysis builds on online-to-batch conversion
\citep{defazio2024road} and on perturbation arguments that compare the iterates
with a reference sequence, as used in analyses of error feedback, delay, and compression
\citep{karimireddy2019error,stich2020error}.
Analyses of optimizers with stored gradient histories also appear in
\citep{mcrae2022memory}.
Here the uniform bound is imposed directly on the deterministic cumulative weight
deviations $e_{t,k}=K_{t,k}-\Gamma$ in \eqref{eq:kernel-framework}. The analysis
uses the Bregman term to absorb the contribution of these weight deviations and bound the
gap in objective value at the final averaged iterate.
For \hym{}, the slow state has average lag $(t-1)/3$, and its cumulative
weight deviations satisfy the error bound
\eqref{eq:hym-kernel-certificate-main}.

\section{\hymuon{} matrix updates and implementation}
\label{app:hymuon-analysis}
\label{app:fast-hymuon}

\begin{algorithm}[H]
\renewcommand{\algorithmicrequire}{\textbf{Input:}}
\renewcommand{\algorithmicensure}{\textbf{Output:}}
\caption{\hymuon{} matrix updates}\label{alg:hm_muon}
\begin{algorithmic}[1]
\Require $\Ybf_1\in\mathbb R^{m\times n}$, $\eta>0$, $\mu\in(0,1)$,
$\beta\in[0,1)$, $B,T\in\mathbb N$; initialize
$\Mbf_0=\Bbf_0=\zerobf$, $\Xbf_0=\Ybf_1$
\For{$t=1,\ldots,T$}
\State Sample $\mathcal B_t=(\xi_{t,j})_{j=1}^B$ and set
$\Gbf_t=B^{-1}\sum_{j=1}^B\nabla f(\Ybf_t;\xi_{t,j})$
\State $\Mbf_t=(1-1/t)\Mbf_{t-1}+\Gbf_t$
\Comment{Slow state}
\State $\Bbf_t=\mu(1-1/t)\Bbf_{t-1}+\Gbf_t$
\Comment{Fast state}
\State $\mathbf D_t=\frac{\beta(1-\mu^t)}{t(1-\mu)}\Mbf_t
 +(1-\beta)\Bbf_t$
\State $a_t=\frac{1}{1+(1-\beta)(t-1)}$, \quad
$\Xbf_t=(1-a_t)\Xbf_{t-1}+a_t\Ybf_t$
\State $\Ybf_{t+1}=\Ybf_t-\eta\,\operatorname{NS}(\mathbf D_t)$
\EndFor
\Ensure $\Xbf_T$
\end{algorithmic}
\end{algorithm}

Here $2\Mbf_t/(t+1)$ equals the matrix-valued slow state in
Algorithm~\ref{alg:hym}; $\operatorname{NS}(\cdot)$ denotes the output of the
finite Newton--Schulz iteration used to approximate the polar factor.

We extend the matrix updates above to the optimizer used for the entire model in the
\hymuon{} experiments. The combined momentum for hidden matrices follows the
matrix-valued recurrence in Algorithm~\ref{alg:hm_muon}. The implementation for the entire model
uses five NS iterations and additionally includes shape scaling, decoupled weight decay,
the learning rate schedules used in individual experiments, and the adaptive recurrences below. We set
$c_t=1/t$. For a hidden
parameter $\Ybf_t\in\mathbb R^{m\times n}$ with gradient $\Gbf_t$, the
implementation initializes $\Mbf_0=\Bbf_0=\zerobf$ and forms
\begin{align}
\Mbf_t &= (1-c_t)\Mbf_{t-1}+\Gbf_t, \\
\Bbf_t &= \mu(1-c_t)\Bbf_{t-1}+\Gbf_t, \\
\mathbf D_t &= \beta c_t q_t(\mu)\Mbf_t+(1-\beta)\Bbf_t,
\qquad q_t(\rho)=\frac{1-\rho^t}{1-\rho}, \\
\mathbf U_t &= s_{m,n}\operatorname{NS}(\mathbf D_t),
\qquad s_{m,n}=\sqrt{\max\{1,m/n\}}.
\label{eq:fast-matrix-hymuon}
\end{align}
Here $\operatorname{NS}(\cdot)$ denotes the output of five quintic NS iterations;
the shape factor $s_{m,n}$ is applied afterwards.
Both $\Gbf_t$ and the auxiliary gradient $\gbf_t$ are mini-batch gradients
at the current training iterates, as in Algorithm~\ref{alg:hm_muon}.
For an auxiliary parameter tensor, let
$\mathbf h_t^{(1)}=\gbf_t$ and
$\mathbf h_t^{(2)}=\gbf_t\odot\gbf_t$.
The coefficients $\rho_1$ and $\rho_2$, corresponding to Adam's $\beta_1$
and $\beta_2$, control the fast recurrences for estimates of the first and second moments.
With all four unnormalized auxiliary moment states initialized
to zero, the adaptive \hym{} branch computes, for $r\in\{1,2\}$,
\begin{align}
\mbf_t^{(r)} &= (1-c_t)\mbf_{t-1}^{(r)}+\mathbf h_t^{(r)}, \\
\bbf_t^{(r)} &= \rho_r(1-c_t)\bbf_{t-1}^{(r)}+\mathbf h_t^{(r)}, \\
\mathbf d_t^{(r)} &= \beta c_t q_t(\rho_r)\mbf_t^{(r)}
                 +(1-\beta)\bbf_t^{(r)}, \\
\mathbf u_t &= \mathbf d_t^{(1)}\oslash
          \left(\sqrt{\mathbf d_t^{(2)}}+\varepsilon\right).
\label{eq:fast-aux-hymuon}
\end{align}
Here $\odot$ and $\oslash$ denote elementwise multiplication and division.
Write $\ybf_t$ for the auxiliary training iterate and $\xbf_t$ for the
corresponding averaged iterate. Initialize
$\Xbf_0=\Ybf_1$ and $\xbf_0=\ybf_1$. Before each training step, average
both parameter groups using $a_t=1/[1+(1-\beta)(t-1)]$:
\begin{equation}
 \Xbf_t=(1-a_t)\Xbf_{t-1}+a_t\Ybf_t,\qquad
 \xbf_t=(1-a_t)\xbf_{t-1}+a_t\ybf_t.
\end{equation}
Here $\rho_1$ and $\rho_2$ set the decay rates, and hence the effective history
lengths, of the auxiliary estimates of the first and second moments. The mixing weight
$\beta$ combines the slow and fast estimates in both
recurrences and determines the coefficient $a_t$ for iterate averaging.
For a hidden matrix and an auxiliary tensor, respectively, the subsequent
updates with decoupled weight decay are
\begin{align}
\Ybf_{t+1}
&=(1-\eta_t^{\mathrm{mat}}\lambda_{\mathrm{mat}})\Ybf_t
  -\eta_t^{\mathrm{mat}}\mathbf U_t, \\
\ybf_{t+1}
&=(1-\eta_t^{\mathrm{aux}}\lambda_{\mathrm{aux}})\ybf_t
  -\eta_t^{\mathrm{aux}}\mathbf u_t.
\end{align}
For evaluation, all parameter groups are temporarily set to their respective
$\Xbf_t$ and $\xbf_t$ values; the training iterates are restored before the next step.
Since the language models considered here do not use Batch Normalization, evaluation
requires no recalibration of running statistics.

\FloatBarrier
\section{Diagnostic experiment details}
\label{app:mechanistic-experiments}

This section provides the full protocols and additional results for the diagnostic
experiments reported in Section~\ref{sec:hym-understanding}. It covers the comparison
across batch sizes, the construction and statistics of the semantic relevance experiment,
and the interventions on one batch and on the slow momentum state.

\subsection{Dependence of the \hym{} advantage on batch size}
\label{app:hym-batch-size}

Because the variance of a mini-batch gradient generally increases as the batch size
decreases, we examine how the relative performance of \hym{} varies with batch size.
We use CIFAR-10 with WideResNet-16-8 and CIFAR-100 with DenseNet-BC, training each model
for 300 epochs on its full training set.
We use $B\in\{8,16,32,128,256,1024,2048\}$ on CIFAR-10 and
$B\in\{64,128,256,512,1024\}$ on CIFAR-100.
Within each dataset, the model, augmentation, and optimizer settings other than the learning rate
are fixed across batch sizes as in Appendix~\ref{app:vision-runs}; both
methods use a 5\% warmup followed by cosine decay, ending at half the peak
learning rate for \hym{} and zero for SGD.

Peak learning rates use candidates specific to each method and batch size,
obtained by linear scaling with batch size. Figure~\ref{fig:cifar-batch-size} reports test loss
for the selected configuration of each method and batch size.

\begin{figure}[H]
 \centering
 \includegraphics[width=0.49\linewidth]{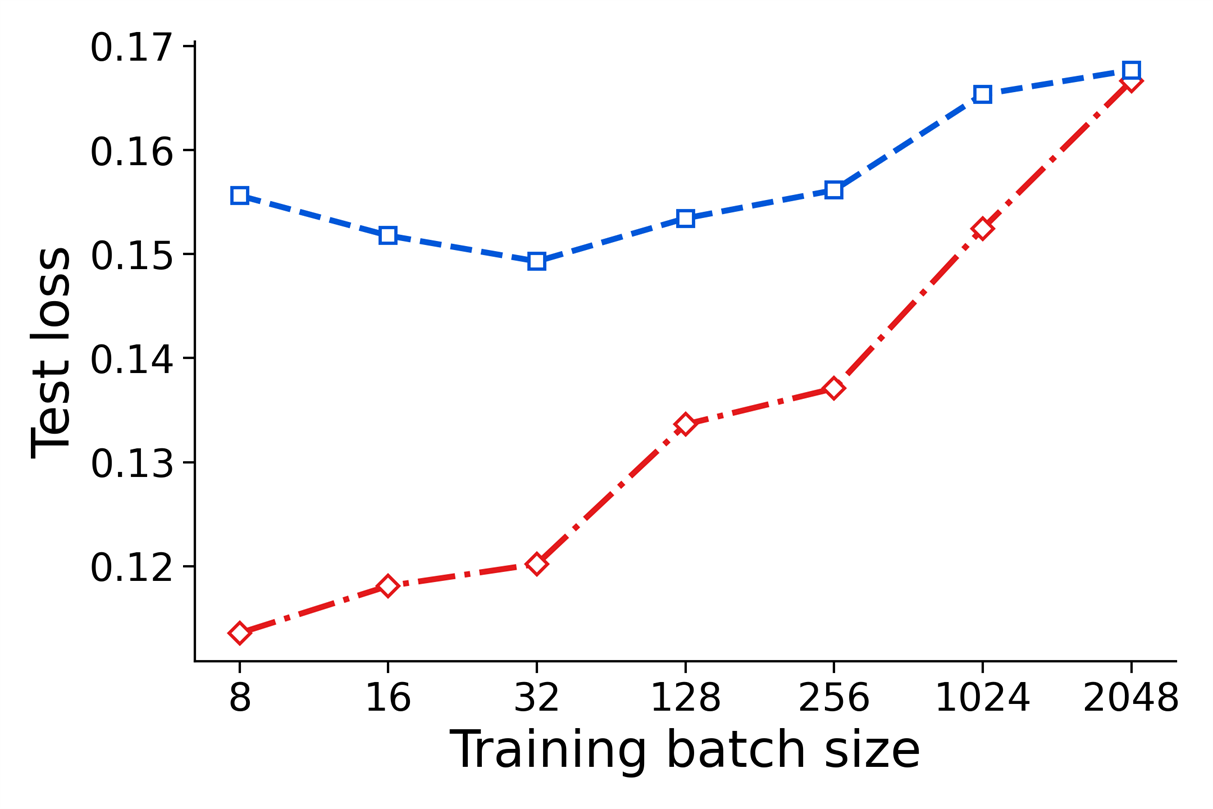}\hfill
 \includegraphics[width=0.49\linewidth]{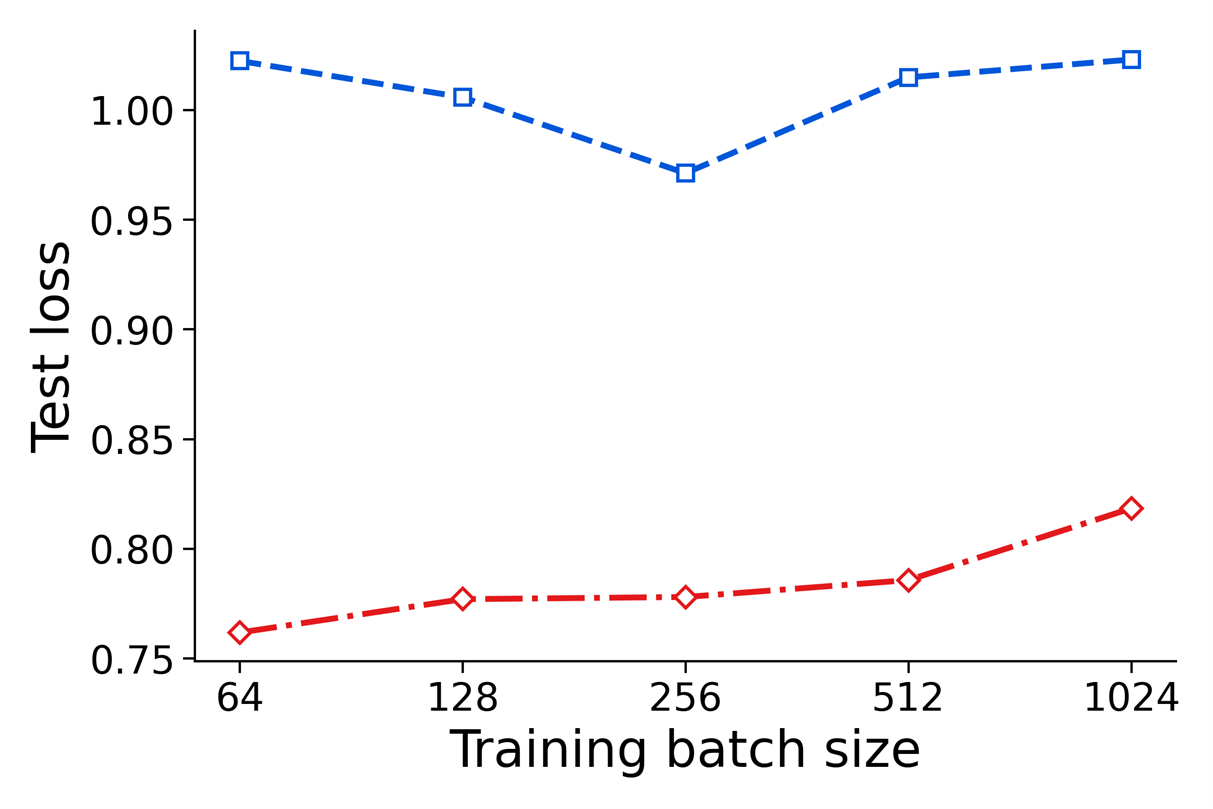}\\[-2pt]
 \includegraphics[width=0.98\linewidth]{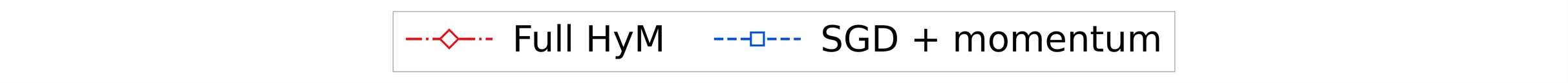}
 \caption{Test loss against training batch size on CIFAR-10 (left) and
 CIFAR-100 (right). The learning rate is chosen separately for each method and
 batch size.}
 \label{fig:cifar-batch-size}
\end{figure}

\hym{} has lower observed test loss at every plotted batch size. On CIFAR-10,
the gap narrows from about $0.042$ at $B=8$ to $0.001$ at $B=2048$ as the
batch size increases. On CIFAR-100, the gap is about $0.261$ at $B=64$ and
$0.205$ at $B=1024$, but does not vary monotonically across the intervening batch sizes.
\subsection{Semantic relevance and improvement specific to each target}
\label{app:narrow-signal}

\paragraph{Checkpoints and data.}
Section~\ref{sec:hym-narrow-signal} uses GPT-2 124M checkpoints at 10k, 50k,
and 100k steps of an AdamW run initialized from public GPT-2 weights and
continued on OpenWebText. Each step processes 491,520 tokens. AdamW uses
$(\beta_1,\beta_2)=(0.9,0.95)$, weight decay $0.1$, a warmup over 2k steps to
$6\times10^{-4}$, and cosine decay to $6\times10^{-5}$ at 100k.  The experiment
loads model parameters only. Its 2,000 candidates are uniformly sampled
training passages containing 256 tokens and lying within document boundaries.
They are screened to exclude passages that are nearly duplicates of the target texts.

\paragraph{Targets and similarity.}
A fixed pretrained \texttt{intfloat/e5-small-v2} encoder~\citep{wang2022e5} embeds
16,384 validation passages, with a \texttt{passage:} prefix and masked mean
pooling.  We subtract the embedding mean and its first three principal
components estimated on 4,096 background training passages, then normalize
each embedding. MiniBatchKMeans partitions the embeddings into 128 clusters.
We select ten clusters and take 64 passages from distinct documents in each
cluster, dividing them evenly into 32 reference and 32 evaluation passages.
Ten controls each mix
64 randomly chosen passages from other documents and use the same split.
All passages contain 256 tokens.  These targets are defined by the encoder
rather than by annotated grammatical or factual categories. Semantic similarity
is computed solely from the reference passages, whereas loss reduction is
evaluated solely on the evaluation passages.

For a candidate's embedding, its E5 score is the mean of its maximum
and mean cosine similarity to the 32 reference embeddings.  We also regress
this score on two measures of lexical overlap within each target: the cosine
similarity between vectors of token frequencies weighted by inverse document frequency,
and the cosine similarity between vectors of bigram frequencies after centering with
background statistics. The residual measures the part not linearly explained by
these two measures. Gradient cosine similarity compares the candidate gradient with
the gradient computed on the reference set at the same checkpoint.

\paragraph{Gradient updates with a fixed norm.}
For each candidate $C$, restore the checkpoint $\ybf$ and evaluate
\[
 \ybf_C=\ybf-\delta\frac{\nabla\mathcal L_C(\ybf)}{\|\nabla\mathcal L_C(\ybf)\|},
 \qquad \delta=3\times10^{-4}.
\]
The update uses no momentum, weight decay, iterate averaging, or optimizer state
carried over between candidates. Dropout is zero. For target $V$, define its loss reduction as
$I(C,V)=\mathcal L_{V_{\rm eval}}(\ybf)-\mathcal L_{V_{\rm eval}}(\ybf_C)$.
We define the improvement specific to target $V$ by subtracting from $I(C,V)$ the mean
of $I(C,V')$ over the other 19 targets. The analysis contains 120,000 measurements
over candidates, targets, and checkpoints.

\paragraph{Aggregation.}
We use the two quantities for different purposes. The percentile curves and net
contribution shares use the loss reduction $I(C,V)$. The correlation analysis uses
the improvement specific to the target. We also use this quantity when computing
shares after setting negative values to zero.
For each target and checkpoint, we rank all candidates by similarity and form
100 groups of 20 candidates. The percentile curves report the mean value of
$I(C,V)$ within each group. We also compute the Spearman correlation between
similarity and the improvement specific to that target, and then average
these correlations across targets. To measure how much of the total loss reduction
is contributed by the most similar candidates, we sum $I(C,V)$ over the top 400
candidates for each semantic target and divide by the corresponding sum over all
2,000 candidates and all ten semantic targets. Negative values are retained in
both sums; therefore, cancellation can increase this ratio. As a complementary statistic,
we set negative values of the improvement specific to the target to zero and average the resulting
shares across targets. Under this definition, the top 20\% accounts for $31.9\%$,
$33.4\%$, and $33.3\%$ at 10k, 50k, and 100k, respectively. The two statistics
measure different quantities.

\paragraph{Additional statistics.}
The randomly mixed controls provide a baseline in which the target passages are not
grouped by semantic coherence. Their mean loss reductions over 2,000 candidates and
ten controls are $9.272\times10^{-7}$, $2.250\times10^{-6}$, and
$1.398\times10^{-6}$ at 10k, 50k, and 100k, respectively. To determine whether the
association with E5 similarity is explained only by surface overlap, we remove the
component linearly explained by token and bigram overlap; the remaining association
is still positive. Gradient alignment provides a more direct comparison between a
candidate gradient and the target gradient. Its Spearman correlation with the
improvement specific to the target is $0.410$, $0.354$, and $0.399$, higher than the
corresponding correlation for semantic similarity. At 100k, $52.67\%$ of candidate
and target combinations have positive loss reduction. The net contribution share of
the top 20\% is $92.15\%$, with positive and negative values canceling in the total.

\paragraph{Percentile curves and contribution shares.}
Figures~\ref{fig:narrow-percentiles-full} and~\ref{fig:narrow-tail-share}
give the complete plots accompanying Section~\ref{sec:hym-narrow-signal}.
Both figures use the loss reduction $I(C,V)$, whereas the reported
correlations use the improvement specific to each target. Both figures rank candidates
by E5 similarity.

\begin{figure}[!htbp]
    \centering
    \includegraphics[width=0.88\linewidth]{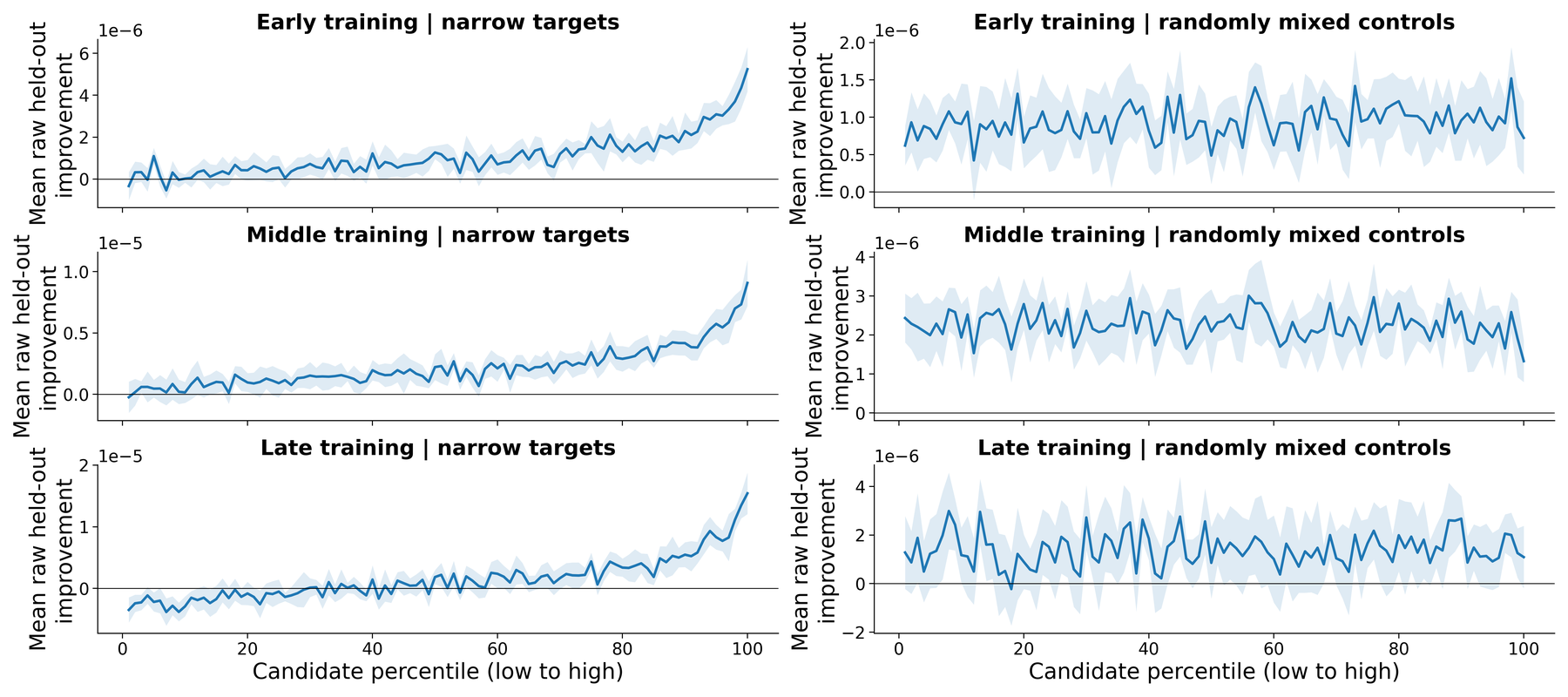}\\[-2pt]
    \includegraphics[width=0.88\linewidth]{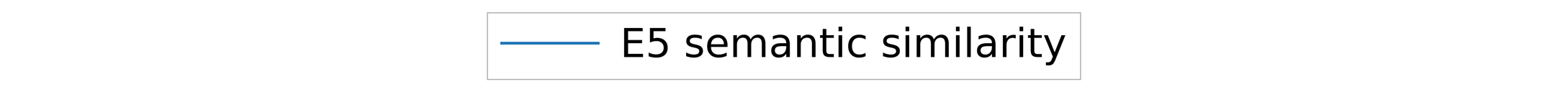}
    \caption{Curves of loss reduction across 100 percentiles of E5 similarity.
    Rows are the 10k, 50k, and 100k checkpoints; columns are semantic targets
    (left) and randomly mixed
    controls (right). Candidates are ranked within each target,
    with 20 candidates per percentile. Curves are means over ten targets of
    each type; shaded bands are based on standard errors across targets. The
    blue curve uses E5 similarity. Positive values indicate loss reduction;
    vertical scales differ. The panels in
    Figure~\ref{fig:narrow-percentiles} are the left column of this figure,
    rearranged in one row.}
    \label{fig:narrow-percentiles-full}
\end{figure}

\begin{figure}[!htbp]
    \centering
    \includegraphics[width=\linewidth]{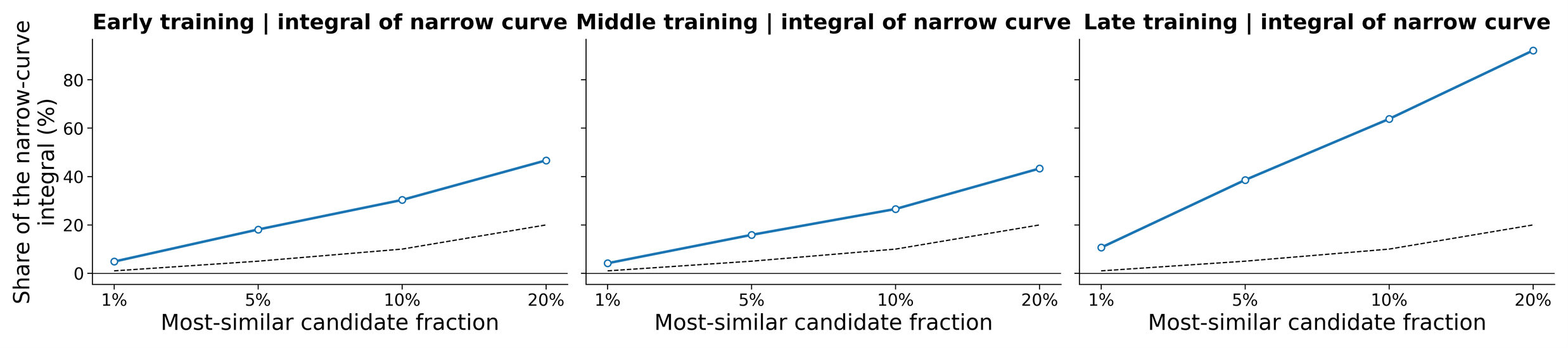}\\[-2pt]
    \includegraphics[width=\linewidth]{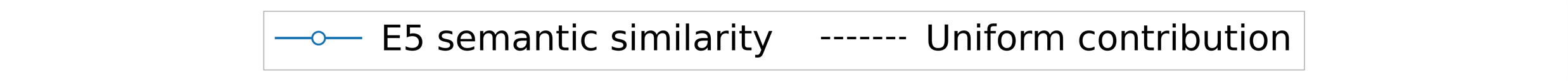}
    \caption{Share of the net loss reduction on semantic targets contributed by the
    candidates in the top $1\%,5\%,10\%$, and $20\%$ by E5 similarity. Panels are the
    10k, 50k, and 100k checkpoints, left to right. The blue curve
    uses the E5 ranking from Figure~\ref{fig:narrow-percentiles-full}. The
    numerator sums $I(C,V)$ over the selected candidates and all
    ten semantic targets; the denominator sums over all candidates and the
    same targets. The dashed line marks the share proportional to
    subset size. Because negative values are retained, cancellation can produce a
    share above $100\%$; the plotted quantities are net contribution shares,
    not fractions of positive loss reductions.}
    \label{fig:narrow-tail-share}
\end{figure}

\FloatBarrier
\subsection{Experiment on the lasting effect of one batch}
\label{app:forgetting}

The experiment in the left panel of Figure~\ref{fig:forgetting} uses the same
GPT-2 124M model, OpenWebText data, context length of 1024, and global batch of
480 sequences as the semantic target experiment above, but runs for 10,000 steps.
We select a fixed global batch $\mathcal B$ containing 491,520 targets for next token prediction.
Every sequence window that overlaps a token in $\mathcal B$ is excluded from the
remaining training batches used by both branches. The control and inject branches
start from the same complete checkpoint immediately before step 5,000. At step
5,000, the control branch uses the batch from the original training sequence,
whereas the inject branch replaces that batch with $\mathcal B$. The control branch
therefore never trains on $\mathcal B$. Apart from this replacement, stateless
sampling indexed by the run configuration, step, rank, and microstep gives the
two branches the same sequence of batches.

Training uses gradient clipping at 1.0, 200 warmup steps, and cosine decay
through step 10,000. Loss on $\mathcal B$ is evaluated every 50 steps and
additionally at steps 4,999, 5,000, and 5,001; distributed ranks evaluate
disjoint subsets whose losses are averaged. For these measurements, \hymuon{}
is temporarily evaluated at the averaged iterate $\xbf_t$, after which the
training iterate is restored. Muon is evaluated directly at its training iterate.
Because the branches share the complete checkpoint before injection and the
subsequent sampling state, their only protocol difference is the batch used at
step 5,000.

For Muon, the branch for hidden matrices uses peak and minimum learning rates
$2\times10^{-2}$ and $3\times10^{-5}$, momentum $0.95$, and zero weight
decay. Its auxiliary AdamW branch uses peak and minimum learning rates
$3\times10^{-3}$ and $3\times10^{-5}$, betas $(0.9,0.95)$, and weight decay
$0.1$; inverse layer scaling is enabled for the learning rate. For \hymuon{}, the
branch for hidden matrices uses peak and minimum learning rates $0.1$ and
$3\times10^{-5}$, Muon momentum $0.9$, \hym{} mixing weight $0.95$, and
weight decay $0.05$. Its adaptive \hym{} branch uses peak and minimum learning
rates $0.01$ and $3\times10^{-5}$, coefficients for the first and second moments
$0.95$ and $0.99$, and weight decay $0.01$. These settings define the two
loss trajectories for the two optimizers in the left panel. The subsequent intervention
on the slow state isolates the stored contribution while holding the remaining model
and optimizer state fixed.

\subsubsection{Intervention on the slow momentum state}
\label{app:forgetting-state}

The right panel of Figure~\ref{fig:forgetting-state} continues the injection protocol
with an intervention on the slow state at step 7,500. Let $\gbf_{\mathcal B}(s)$ and
$\gbf_{\mathcal R}(s)$ denote the gradients at the common parameters at step $s$ on
$\mathcal B$ and on the batch $\mathcal R$ from the original training sequence,
respectively. Because the inject branch replaces $\mathcal R$ with $\mathcal B$
at step $s=5{,}000$, the contribution introduced into its slow state is the signed
difference $\Delta\gbf_s=\gbf_{\mathcal B}(s)-\gbf_{\mathcal R}(s)$ rather than
$\gbf_{\mathcal B}(s)$ alone. In \hymuon{}'s slow state, a
gradient from step $s$ has coefficient $s/t$ at step $t$; consequently,
the coefficient of $\Delta\gbf_s$ is $2/3$ at the intervention.  The Delete
branch subtracts this exact contribution from the slow states of
the matrix and auxiliary parameter groups.  The Refresh branch first performs
the same deletion and then computes $\gbf_{\mathcal B}(7{,}500)$ at the current
training iterate, rescaling its matrix and auxiliary components separately to match
the norm of the corresponding removed component within each parameter group.

Immediately before the intervention, Keep, Delete, and Refresh are initialized from
identical copies of the inject branch. These copies include the model parameters and
buffers, all optimizer states including the averaged iterate $\xbf_t$, the scheduler
for the learning rate, the training counters, and the states of the random number
generators and data samplers on every rank. The three interventions are then applied
to these copies, followed by the same sequence of training batches. The gradient
computation required by Refresh preserves the random states and model buffers used
for subsequent training.

At the same lag, the corresponding coefficients of the matrix momentum and auxiliary first moment
for Muon are approximately $2.0\times10^{-56}$ and
$4.0\times10^{-116}$. Both are numerically negligible at this lag; hence, the intervention
isolates \hymuon{}'s persistent slow state rather than Muon's memory.
The intervention does not modify model parameters, the averaged iterate $\xbf_t$,
the fast state, or either state for the second moment. Immediately before
the next step in the common training sequence, the code applies
the complete NS or adaptive transformation to the original and
intervened states. After multiplication by the learning rate, the differences
between the resulting parameter updates have norms $0.104$ for Delete and $0.111$
for Refresh. At step
10,000, the losses on $\mathcal B$ are $3.047617$ for Delete, $3.038296$ for
Keep, and $3.025494$ for Refresh.

\FloatBarrier

\section{Benchmark experiment details}
\label{app:full-exp}

This section gives the architectures, training configurations, search dimensions, and
selected hyperparameters used in the reported experiments; each task lists the
hyperparameter families we tuned and the settings we selected. The FineWeb baselines
follow the
published benchmark, and the C4 comparison uses a Muon baseline reproduced
with the training configuration of \citet{gu2026mano}, as described below.

\paragraph{Results summary.}
Table~\ref{tab:main-results} collects the numerical results for
Section~\ref{sec:experiments}. Image classification and IWSLT entries report
the mean and sample standard deviation across repeated runs;
OpenWebText, FineWeb, and C4 entries give validation loss at the final training step.

\begin{table}[H]
\centering
\caption{Main experimental results. Image accuracy and IWSLT metrics are
mean $\pm$ sample SD across repeated runs; OpenWebText, FineWeb,
and C4 give the final validation loss.
ImageNet accuracy is validation top-1; CIFAR and SVHN use test accuracy.
A dash marks a method not included in that comparison. FineWeb losses are rounded
to two decimals.}
\label{tab:main-results}
\small
\setlength{\tabcolsep}{4pt}
\renewcommand{\arraystretch}{1.15}
\begin{tabular}{lrrrr}
\hline
\multicolumn{5}{l}{\textbf{Image classification: accuracy (\%) $\uparrow$}} \\
Dataset / model / budget
 & \shortstack{SGD\\Cosine} & \shortstack{SGD\\Step}
 & \shortstack{Schedule-\\Free} & \hym{} \\
\hline
\shortstack[l]{CIFAR-10 / WRN-16-8\\300 epochs}
 & $95.90\pm0.12$ & $95.43\pm0.17$ & $96.11\pm0.10$ & $96.25\pm0.10$ \\
\shortstack[l]{CIFAR-100 / DenseNet-BC\\300 epochs}
 & $77.66\pm0.39$ & $77.22\pm0.20$ & $79.11\pm0.22$ & $79.14\pm0.22$ \\
\shortstack[l]{SVHN / ResNet-3-96\\300 epochs}
 & $98.24\pm0.05$ & -- & $98.20\pm0.08$ & $98.47\pm0.02$ \\
\shortstack[l]{ImageNet-1K / ResNet-50\\100 epochs}
 & $76.68\pm0.04$ & -- & $76.70\pm0.06$ & $76.84\pm0.10$ \\
\hline
\end{tabular}
\par\vspace{0.7em}
\begin{tabular}{@{}llrrrr@{}}
\hline
\multicolumn{6}{l}{\textbf{Language tasks}} \\
Dataset / model / budget & Metric & AdamW & AdEMAMix & Muon & \hymuon{} \\
\hline
\shortstack[l]{IWSLT14 / LSTM\\55 epochs}
 & Test loss $\downarrow$ & -- & -- & $4.234\pm0.006$ & $4.184\pm0.004$ \\
 & BLEU $\uparrow$ & -- & -- & $33.424\pm0.048$ & $33.874\pm0.072$ \\
\shortstack[l]{OpenWebText / GPT-2 124M\\100k steps}
 & Val. loss $\downarrow$ & $2.905$ & -- & $2.915$ & $2.886$ \\
\shortstack[l]{OpenWebText / GPT-2 124M\\600k steps}
 & Val. loss $\downarrow$ & $2.866$ & -- & $2.886$ & $2.823$ \\
\shortstack[l]{FineWeb / LLaMA-124M\\8k steps ($\approx$1B tokens)}
 & Val. loss $\downarrow$ & $3.53$ & $3.48$ & $3.50$ & $3.47$ \\
\shortstack[l]{FineWeb / LLaMA-124M\\16k steps ($\approx$2.1B tokens)}
 & Val. loss $\downarrow$ & $3.39$ & $3.35$ & $3.39$ & $3.35$ \\
\shortstack[l]{C4 / LLaMA-1.3B\\10k steps\\Muon as in \citet{gu2026mano}}
 & Val. loss $\downarrow$ & -- & -- & $2.903$ & $2.777$ \\
\hline
\end{tabular}
\end{table}

\subsection{Image classification runs}
\label{app:vision-runs}

\paragraph{Models and training budgets.}
Table~\ref{tab:vision-training} gives the common settings for each dataset.
CIFAR-10 uses a WideResNet of depth 16 and width multiplier 8, without
dropout.  CIFAR-100 uses DenseNet-BC with block sizes $(6,12,24,16)$,
growth rate 12, 24 initial channels, bottleneck multiplier 4, compression
factor $0.5$, and no dropout.  The SVHN ResNet has three stages with
$(96,192,384)$ channels and three residual blocks, each with two convolutions, per stage.
ImageNet uses ResNet-50 initialized from scratch.  All four tasks use
cross-entropy loss without label smoothing, mixup, or CutMix; evaluation is
performed after every epoch.

\begin{table}[htbp]
\centering
\caption{Training settings for image classification. Batch sizes are global
numbers of images per optimizer step.}
\label{tab:vision-training}
\small
\begin{tabular}{lrrr}
\hline
Dataset & Epochs & Batch size & Input size \\
\hline
CIFAR-10 & 300 & 128 & $32\times32$ \\
CIFAR-100 & 300 & 64 & $32\times32$ \\
SVHN (train + extra) & 300 & 32 & $32\times32$ \\
ImageNet-1K & 100 & 256 & $224\times224$ \\
\hline
\end{tabular}
\end{table}

\paragraph{Preprocessing.}
CIFAR and SVHN training images use reflection padding of four pixels, random
$32\times32$ crops, and horizontal flips with probability $0.5$; test images
are not augmented.  SVHN training includes both the official training and
extra splits. After scaling pixels to $[0,1]$, CIFAR-10 and SVHN use
mean and standard deviation $(0.5,0.5,0.5)$ for each channel. CIFAR-100 uses
mean $(0.5074,0.4867,0.4411)$ and standard deviation
$(0.2011,0.1987,0.2025)$.  ImageNet training uses random resized crops to
$224\times224$ and random horizontal flips; evaluation resizes the shorter
edge to 256 and takes a $224\times224$ center crop. Its mean and standard deviation
for each channel are also $(0.5,0.5,0.5)$.

\paragraph{Optimizers and schedules.}
Table~\ref{tab:vision-optimizers} lists the settings held fixed across repeated
runs.  SGD uses standard momentum with zero dampening and no Nesterov term;
\hym{} also uses no additional Nesterov term.  Both apply coupled weight decay
to all parameters.  Cosine schedules have linear
warmup over the first $5\%$ of optimizer steps.  The cosine endpoint is
zero for SGD on all four datasets and for \hym{} on SVHN.
For \hym{} on CIFAR-10, CIFAR-100, and ImageNet, the endpoint is instead half the peak
learning rate: $0.025$, $0.25$, and $0.003$, respectively.  The Step baseline has no
warmup and multiplies its learning rate by $0.1$ at epochs 150 and 225, when
epochs are numbered from zero. Schedule-Free uses a $5\%$ linear warmup followed by a
constant learning rate, using the \texttt{SGDScheduleFree} implementation.

\begin{table}[htbp]
\centering
\caption{Fixed optimizer settings for image classification. LR denotes the
peak learning rate. The momentum column gives the SGD or \hym{} momentum
coefficient; $\beta$ gives the Schedule-Free interpolation coefficient or the
\hym{} mixing weight.}
\label{tab:vision-optimizers}
\small
\renewcommand{\arraystretch}{1.1}
\begin{tabular}{llrrrr}
\hline
Dataset & Optimizer & LR & Momentum & $\beta$ & Weight decay \\
\hline
CIFAR-10 & SGD + Cosine & 0.2 & 0.9 & -- & $10^{-4}$ \\
 & SGD + Step & 0.1 & 0.9 & -- & $10^{-4}$ \\
 & Schedule-Free & 10 & -- & 0.9 & $10^{-4}$ \\
 & \hym{} & 0.05 & 0.98 & 0.9 & $5\times10^{-4}$ \\
\hline
CIFAR-100 & SGD + Cosine & 0.05 & 0.9 & -- & $2\times10^{-4}$ \\
 & SGD + Step & 0.05 & 0.9 & -- & $2\times10^{-4}$ \\
 & Schedule-Free & 5 & -- & 0.9 & $2\times10^{-4}$ \\
 & \hym{} & 0.5 & 0.9 & 0.9 & $3\times10^{-4}$ \\
\hline
SVHN & SGD + Cosine & 0.1 & 0.9 & -- & $10^{-4}$ \\
 & Schedule-Free & 1 & -- & 0.9 & $2\times10^{-4}$ \\
 & \hym{} & 0.05 & 0.99 & 0.95 & $3\times10^{-4}$ \\
\hline
ImageNet-1K & SGD + Cosine & 0.05 & 0.9 & -- & $10^{-4}$ \\
 & Schedule-Free & 1.5 & -- & 0.9 & $5\times10^{-5}$ \\
 & \hym{} & 0.006 & 0.99 & 0.90 & $2\times10^{-4}$ \\
\hline
\end{tabular}
\end{table}

\paragraph{Evaluation and repeated runs.}
All displayed methods on CIFAR-10, CIFAR-100, SVHN, and ImageNet are evaluated
over repeated runs. Curves show means across runs; the bands show one standard
error, and the legends report sample standard deviations. ImageNet is evaluated
on the official validation split.

\hym{} and Schedule-Free refresh Batch Normalization statistics before evaluation.
The refresh uses 50 training mini-batches. All ImageNet runs use a global
batch of 256.

\paragraph{Training time and memory.}
We profiled \hym{} and cosine SGD in separate short runs on an NVIDIA H100
80GB GPU. Within each row of Table~\ref{tab:vision-cost}, the model, profiling
batch size, data pipeline, worker count, and optimizer steps per epoch match.
The table reports the mean elapsed time of the training loop over epochs 2--5 and the
maximum PyTorch CUDA allocated memory over those epochs.

\begin{table}[htbp]
\centering
\caption{H100 resource profiles for \hym{} versus cosine SGD. Batch sizes
in parentheses are the profiling batch sizes. Time is seconds per training
epoch; memory is peak CUDA allocated MiB. Changes are \hym{} relative to
cosine SGD.}
\label{tab:vision-cost}
\small
\begin{tabular}{lrrrrrr}
\hline
 & \multicolumn{3}{c}{Training time} & \multicolumn{3}{c}{Peak allocated memory} \\
Dataset (batch) & SGD & \hym{} & Change & SGD & \hym{} & Change \\
\hline
CIFAR-10 (128) & 4.71 & 4.98 & $+5.7\%$ & 1122 & 1254 & $+132$ \\
CIFAR-100 (512) & 6.80 & 6.73 & $-0.9\%$ & 7410 & 7423 & $+12$ \\
ImageNet-1K (256) & 591.62 & 601.43 & $+1.7\%$ & 11257 & 11561 & $+304$ \\
\hline
\end{tabular}
\end{table}

Across the three configurations, \hym{}'s measured training time differs from
cosine SGD by $-0.9\%$ to $+5.7\%$, while peak allocated memory increases by
12--304 MiB.

\begin{figure}[!htbp]
    \centering
    \includegraphics[width=0.49\linewidth]{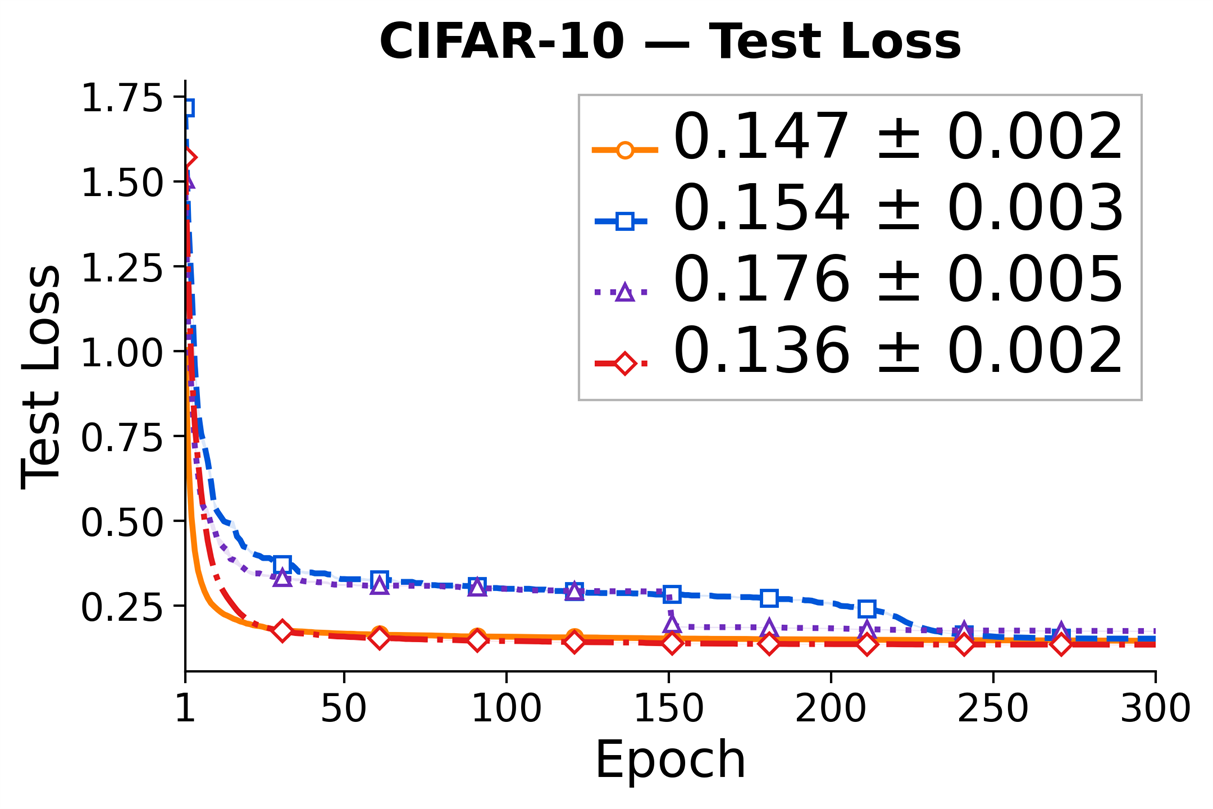}\hfill
    \includegraphics[width=0.49\linewidth]{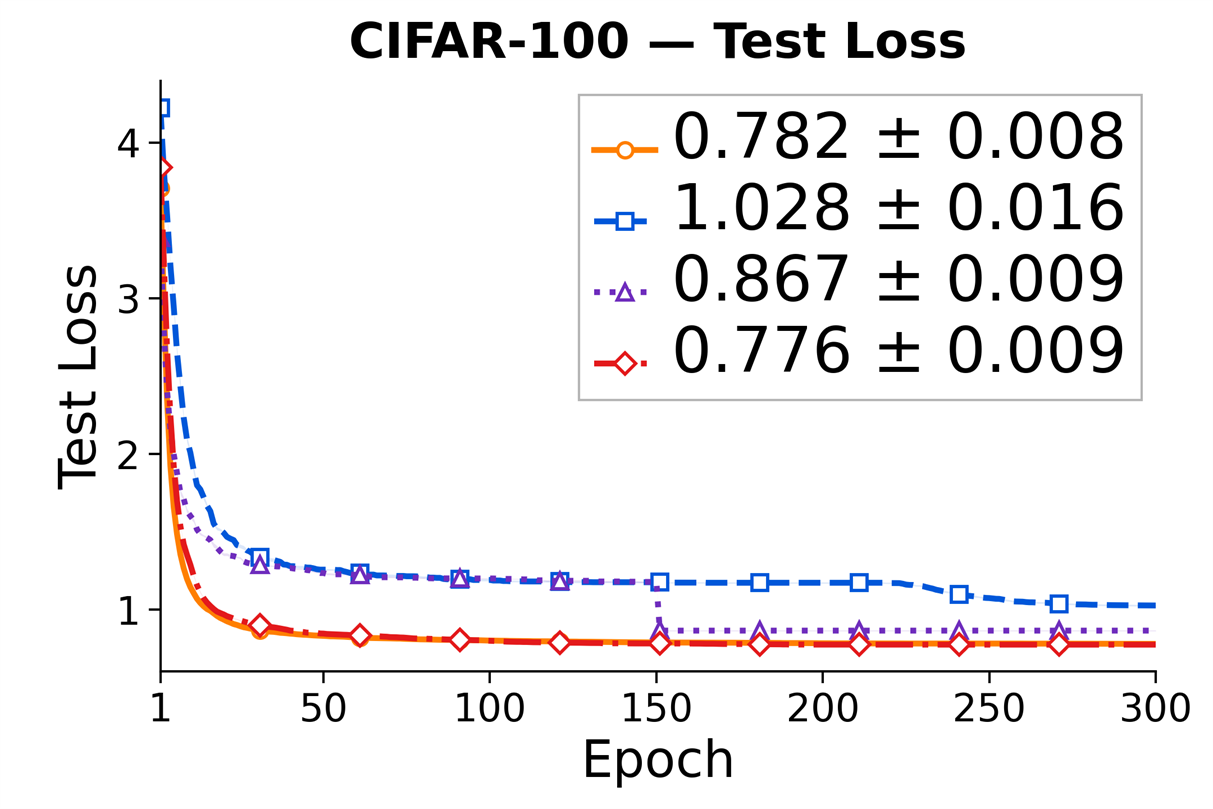}\\[-2pt]
    \includegraphics[width=0.98\linewidth]{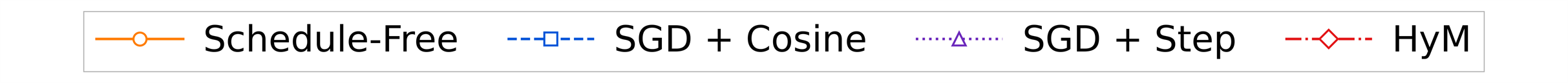}
    \caption{Curves of CIFAR test loss for the runs in
    Figure~\ref{fig:image-classification}. Left: CIFAR-10; right: CIFAR-100.
    Curves are means over repeated runs, and the bands show one standard error; the
    legend gives the sample SD. \hym{} is evaluated at the averaged iterate $\xbf_t$.}
    \label{fig:cifar-test-loss}
\end{figure}

\FloatBarrier

\subsection{Evaluation of \hym{} at the training and averaged iterates}
\label{app:cifar-readout-comparison}

Figure~\ref{fig:cifar-readout-comparison} compares a \hym{} configuration
evaluated at the training iterate \(y\) with a \hym{} configuration evaluated
at the averaged iterate \(x\) on CIFAR-10 and CIFAR-100. Both configurations
train for 300 epochs with the architectures and batch sizes in
Table~\ref{tab:vision-training}.

The configuration evaluated at the training iterate uses momentum \(0.9\), mixing weight
\(\beta=0.5\), and weight decay \(5\times10^{-4}\) on CIFAR-10; the
corresponding CIFAR-100 values are \(0.9\), \(0.3\), and
\(5\times10^{-4}\). Its peak learning rates are \(0.2\) and \(0.05\),
respectively, and both decay to approximately zero. The configuration evaluated at the averaged iterate
is given in Table~\ref{tab:vision-optimizers} and retains half
its peak learning rate. Each configuration uses the best hyperparameters found
in its own search; the comparison therefore evaluates each iterate under a
separately selected setting.

\begin{figure}[htbp]
\centering
\includegraphics[width=\linewidth]{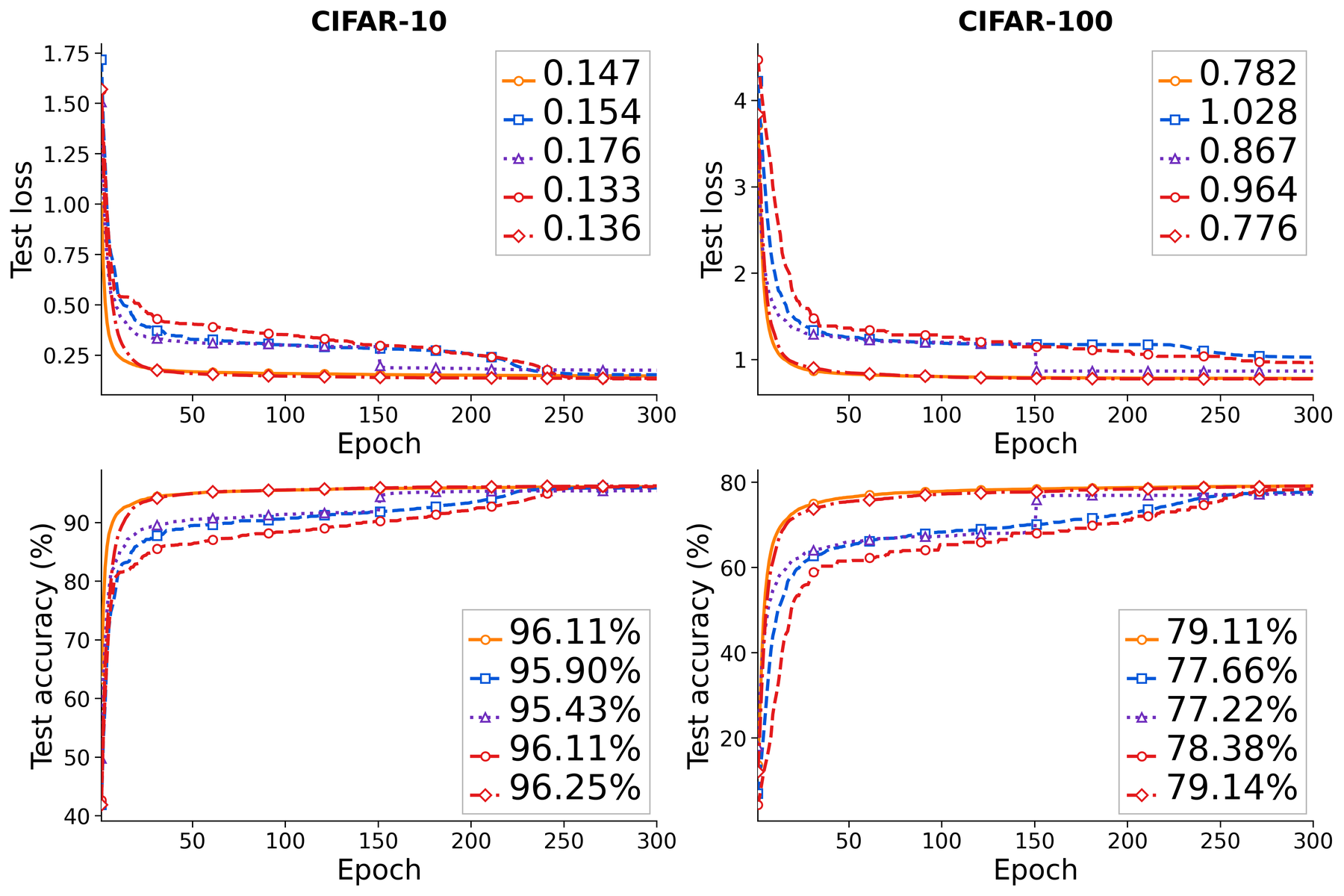}\\[-2pt]
\includegraphics[width=\linewidth]{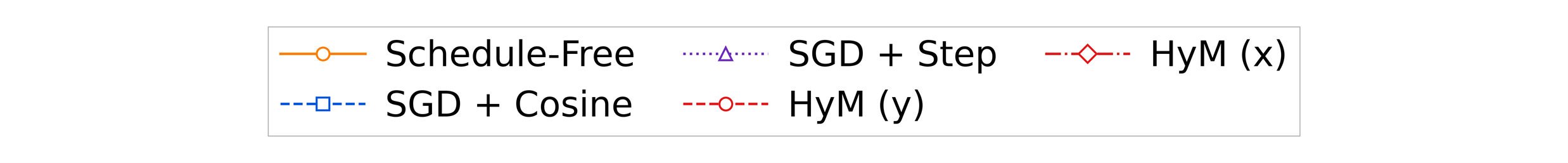}
\caption{CIFAR comparison of the training and averaged iterates. Columns
show CIFAR-10 and CIFAR-100; rows show test loss and test accuracy. \hym{}
\((y)\) is evaluated at the training iterate without iterate averaging, while
\hym{} \((x)\) is evaluated at the averaged iterate. The other curves reuse the
baselines from Figure~\ref{fig:image-classification}.}
\label{fig:cifar-readout-comparison}
\end{figure}

At the training iterate \(y\), \hym{} attains lower test loss than cosine SGD on both
datasets: \(0.1329\) versus \(0.1536\) on CIFAR-10, and \(0.9645\)
versus \(1.0279\) on CIFAR-100. At the averaged iterate \(x\), the corresponding
\hym{} losses are \(0.1358\) and \(0.7764\). Evaluating at \(y\) gives the lower loss on CIFAR-10. Evaluating at \(x\) gives the lower loss on CIFAR-100 and the higher accuracy
on both datasets. Thus, the improvement over cosine SGD is already present at
the training iterate. The paired component ablation in
Appendix~\ref{app:cifar10-component-ablation} evaluates every training trajectory
at both iterates to isolate the additional effect of iterate averaging.
\FloatBarrier

\subsection{Component ablation of \hym{} on CIFAR-10}
\label{app:cifar10-component-ablation}

We compare the fast momentum, slow momentum, their combination,
and three replacements of the slow momentum by an EMA with a fixed decay rate. Each
training trajectory is evaluated at both the training iterate $\ybf$ and the
averaged iterate $\xbf$, which allows the effect of iterate averaging to be separated
from the choice of gradient history.

\paragraph{Training and evaluation.}
We train WideResNet-16-8 for 300 epochs on a fixed split stratified by class
of CIFAR-10: 45,000 training images and 5,000 validation images, with 500
validation images per class. Batch size is 128 without dropping the final batch,
giving 352 steps per epoch and $T=105{,}600$ steps in total. The three peak learning
rates $\eta_{\max}\in\{0.05,0.1,0.2\}$ define separate comparison groups,
all of which are reported. To distinguish the two roles of the mixing weight in
this ablation, we denote the direction and averaging weights by
$\beta_{\mathrm{dir}}$ and $\beta_{\mathrm{avg}}$, respectively; both are set to
$0.95$. All groups use $\mu=0.99$ and coupled L2 decay
$\lambda_{\mathrm{wd}}=5\times10^{-4}$ on all parameters. The learning rate warms up
linearly for 5,280 steps, then follows a cosine decay to
$\eta_{\max}/2$. Training uses cross-entropy, deterministic cuDNN, and no cuDNN benchmarking. Augmentation is a random
$32\times32$ crop after reflection padding of four pixels and a horizontal
flip with probability $0.5$; each channel is normalized as
$(\mathrm{pixel}/255-0.5)/0.5$.

Initialization, the order of training images, and the randomness in cropping and flipping are shared
across configurations. After every epoch, $\ybf$ and $\xbf$ are copied
in turn into a separate evaluation model. For each evaluated iterate, Batch Normalization
statistics are reset and recalibrated by cumulative averaging on the same
6,400 unaugmented training images (50 batches). Neither parameters nor BN
buffers are copied back into the training model. We report epoch-300
validation metrics.

\paragraph{Matching the total weight across directions.}
To compare how the variants distribute weight over the gradient history without
also changing the total scale of the direction, we choose their coefficients so
that every variant gives the same direction when the input gradient remains constant.
Let $\mbf_t$ and $\bbf_t$ be the states of Algorithm~\ref{alg:hym},
with input $\gbf_t=\nabla f(\ybf_t;\xi_t)+\lambda_{\mathrm{wd}}\ybf_t$.
Use the quantities $Z_t,A_t,C_t$ and total coefficient sum $S_t=A_t+C_t$ defined in
\eqref{eq:hym-component-masses}--\eqref{eq:hym-total-gain}, with
$\beta=\beta_{\mathrm{dir}}$. The normalized fast direction is
$\widehat{\bbf}_t:=\bbf_t/Z_t$. For the EMA replacements, define the
direction after bias correction
$\widehat{\gbf}_t^{(\rho)}=\frac{1-\rho}{1-\rho^t}
\sum_{i=1}^t\rho^{t-i}\gbf_i$.
All variants update $\ybf_{t+1}=\ybf_t-\eta_t\mathbf D_t$, where
\[
\mathbf D_t=
\begin{cases}
 S_t\widehat{\bbf}_t, & \text{fast momentum only},\\
 S_t\mbf_t, & \text{slow momentum only},\\
 A_t\mbf_t+C_t\widehat{\bbf}_t, & \text{full \hym{}},\\
 A_t\widehat{\gbf}_t^{(\rho)}+C_t\widehat{\bbf}_t,
   & \text{replacement of the slow state by an EMA},\quad
     \rho\in\{0.99,0.999,0.9999\}.
\end{cases}
\]
For a constant input gradient $\gbf$, every variant gives
$\mathbf D_t=S_t\gbf$. This construction matches the total gradient weight across
variants, while their realized update norms may differ. The fast momentum variant retains
\hym{}'s factor $\mu(t-1)/t$. The EMA variants replace the slow state while retaining the
fast state and the other update rules; they are partial replacements used to isolate the
role of the slow state.
All variants use the same average computed before the update,
$\xbf_t=(1-a_t)\xbf_{t-1}+a_t\ybf_t$, with
$a_t=[1+(1-\beta_{\mathrm{avg}})(t-1)]^{-1}$. After $T$
steps, the paired training and averaged iterates are $\ybf_{T+1}$ and $\xbf_T$.

\begin{table}[htbp]
\centering
\caption{Final CIFAR-10 validation accuracy (\%) in the component ablation.
Each group defined by a learning rate compares the same six configurations; $y$ and $x$
are paired evaluations at epoch 300. EMA rows replace only the slow momentum.
These are validation results on the 45k/5k split, not the test accuracies
in the main image classification experiment.}
\label{tab:cifar10-component-ablation}
\small
\setlength{\tabcolsep}{5pt}
\renewcommand{\arraystretch}{1.08}
\begin{tabular}{lrrrrrr}
\hline
 & \multicolumn{2}{c}{$\eta_{\max}=0.05$}
 & \multicolumn{2}{c}{$\eta_{\max}=0.1$}
 & \multicolumn{2}{c}{$\eta_{\max}=0.2$}\\
Direction & $y$ & $x$ & $y$ & $x$ & $y$ & $x$\\
\hline
Fast momentum only & 84.26 & 90.44 & 76.78 & 83.76 & 65.68 & 72.18\\
Slow momentum only & 89.64 & 92.06 & 86.80 & 89.02 & 81.82 & 84.48\\
Full \hym{} & 93.94 & 96.22 & 93.44 & 96.10 & 91.92 & 95.52\\
EMA, $\rho=0.99$ & 85.04 & 90.86 & 77.60 & 83.40 & 68.50 & 73.18\\
EMA, $\rho=0.999$ & 89.30 & 93.40 & 84.18 & 89.58 & 79.88 & 82.74\\
EMA, $\rho=0.9999$ & 92.94 & 95.62 & 92.56 & 95.64 & 89.46 & 93.76\\
\hline
\end{tabular}
\end{table}

\paragraph{Results.}
Full \hym{} has the highest final validation accuracy at both iterates in
each group defined by a learning rate (Table~\ref{tab:cifar10-component-ablation}).
The advantage over either individual momentum state is already present at the training iterate;
therefore, iterate averaging alone does not account for this difference. Averaging
improves final accuracy in all 18 configurations, by 2.22--6.98 percentage
points; for full \hym{} the gains are 2.28, 2.66, and 3.60 points at the three
learning rates. Among the EMA replacements, $\rho=0.9999$ is closest to
full \hym{}, with accuracies at the averaged iterate of $95.62\%$ versus $96.22\%$ at
$\eta_{\max}=0.05$, and $95.64\%$ versus $96.10\%$ at
$\eta_{\max}=0.1$.

Figure~\ref{fig:cifar10-component-loss} shows the corresponding validation
loss histories. Fast momentum only and the $\rho=0.99$ replacement exhibit a
pronounced intermediate loss increase at the larger learning rates;
full \hym{} and the $\rho=0.9999$ replacement avoid this large rebound.
Full \hym{} also has the lowest final loss at both iterates in each group.
These comparisons support using the two momentum states together under the common
experimental protocol and show an additional improvement from iterate averaging.

\begin{figure}[htbp]
\centering
\includegraphics[width=0.98\linewidth]{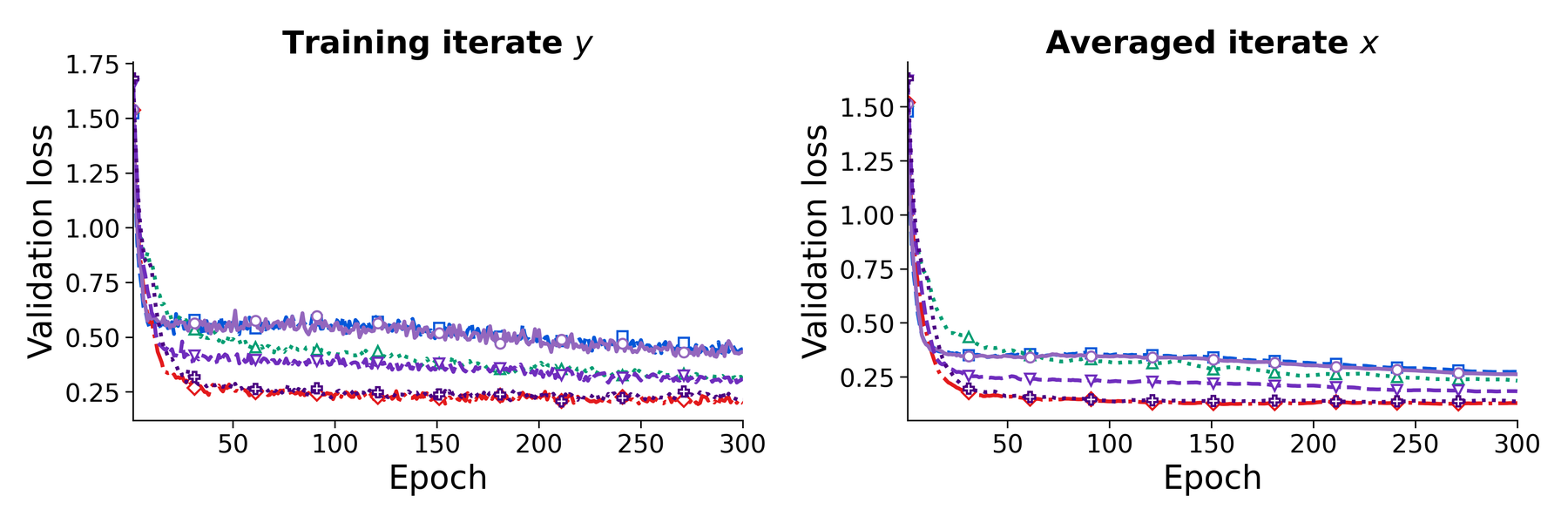}\\[3pt]
\includegraphics[width=0.98\linewidth]{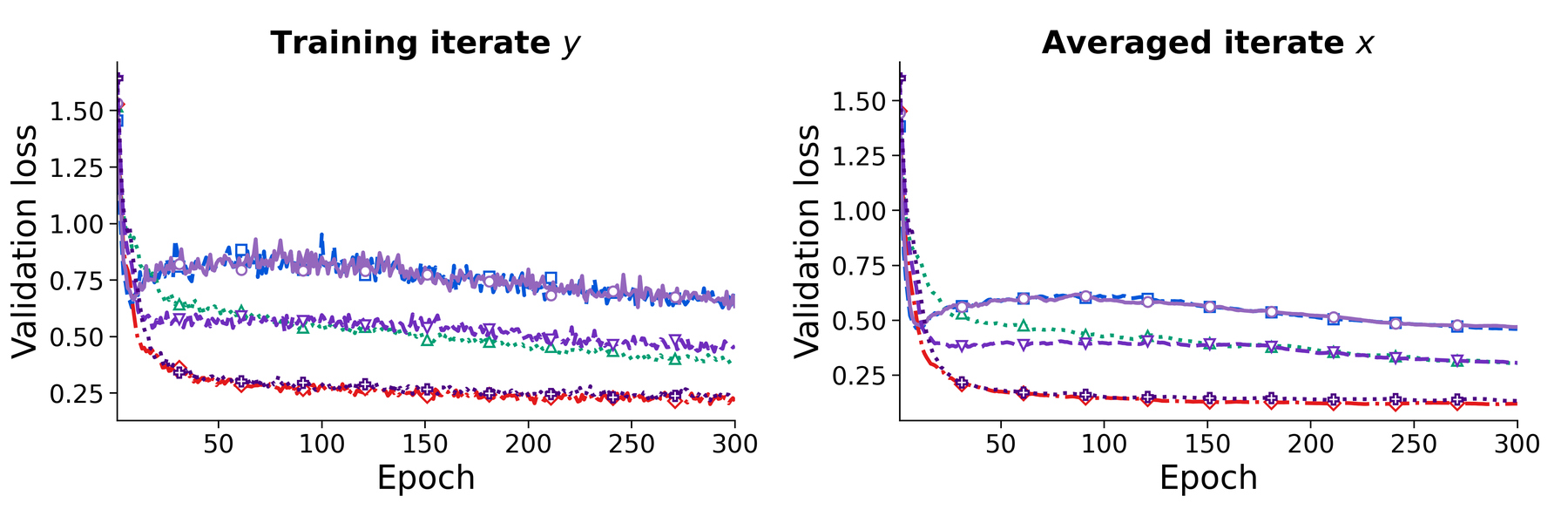}\\[3pt]
\includegraphics[width=0.98\linewidth]{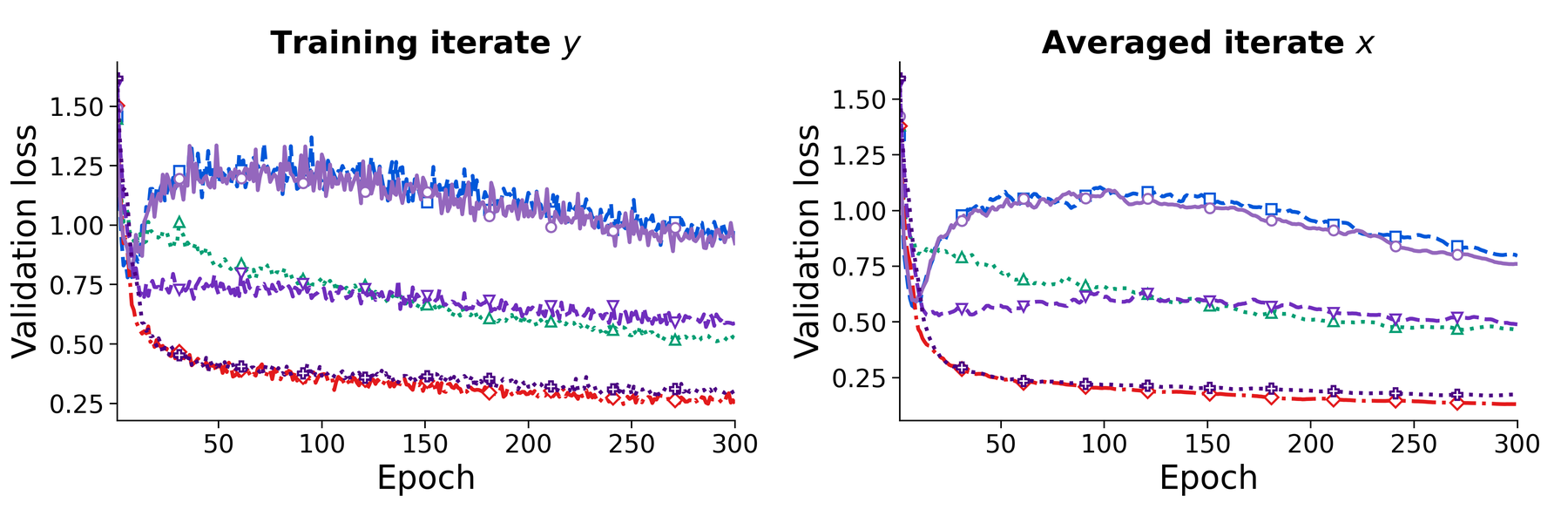}\\[-2pt]
\includegraphics[width=0.98\linewidth]{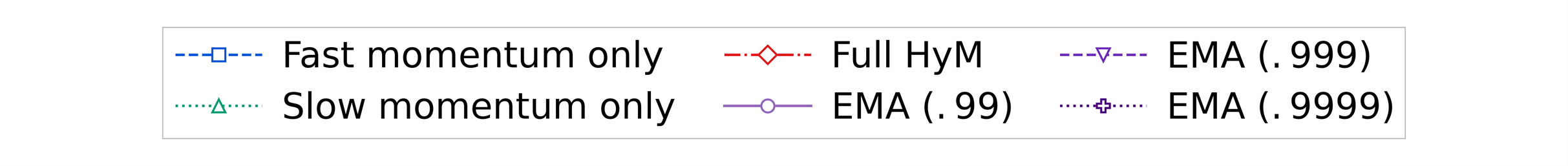}
    \caption{CIFAR-10 component ablation: validation cross-entropy at the
training iterate $y$ (left) and averaged iterate $x$ (right). Rows use peak learning rates
$0.05$, $0.1$, and $0.2$. Curves show values for each epoch.
Every evaluation uses the same BN calibration protocol. ``Slow EMA''
retains the fast momentum and replaces only the
slow momentum.}
\label{fig:cifar10-component-loss}
\end{figure}
\FloatBarrier

\subsection{IWSLT14 German--English}
\label{app:iwslt-runs}

We use the IWSLT14 German--English corpus \citep{cettolo2014iwslt}, with 160,239 training, 7,283 validation, and 6,750 test
sentence pairs. The model uses Fairseq's LSTM encoder--decoder preset with attention
and one layer on each side \citep{ott2019fairseq},
\texttt{lstm\_wiseman\_iwslt\_de\_en}, with
embeddings and hidden states of dimension 256, shared decoder input/output
embeddings, dropout $0.3$, and label smoothing $0.1$. Each task uses one GPU,
a dynamic limit of 4096 tokens per batch and a gradient accumulation factor of one.
Runs are capped at 55 epochs and 60,000 steps, with evaluation after every
epoch. Tokenized BLEU uses beam size 5, length parameters
$1.2$ and 10, space detokenization, and removal of BPE continuation markers.

\paragraph{Optimizer configuration.}
We search the learning rates for the hidden matrices and auxiliary parameters,
weight decay, and the momentum, mixing weight, and coefficient for the auxiliary
second moment used by each optimizer.
Table~\ref{tab:iwslt-optimizers} reports the selected configurations. Both
parameter groups use linear warmup over 4,000 steps followed by
linear decay to zero at step 60,000, retaining their peak learning rate
ratio throughout the schedule.  Gradient clipping is disabled.
Hidden weight matrices use five quintic NS iterations and the shape
factor $\sqrt{\max\{1,m/n\}}$ for an $m\times n$ matrix, without an
additional Nesterov term. Embeddings, output heads, and parameters with one dimension
belong to the auxiliary group. Muon uses AdamW for this group;
\hymuon{} uses the adaptive \hym{} moments described in
Appendix~\ref{app:fast-hymuon} and evaluates both groups at their averaged iterates.

\begin{table}[htbp]
\centering
\caption{IWSLT14 optimizer hyperparameters. The auxiliary coefficients are
AdamW for Muon and the adaptive \hym{} branch for \hymuon{}.}
\label{tab:iwslt-optimizers}
\small
\renewcommand{\arraystretch}{1.1}
\begin{tabular}{lrr}
\hline
Hyperparameter & Muon & \hymuon{} \\
\hline
Peak LR for hidden matrices & 0.05 & 0.05 \\
Momentum $\mu$ for hidden matrices & 0.99 & 0.9 \\
\hym{} mixing weight $\beta$ & -- & 0.9 \\
Weight decay for hidden matrices & 0.05 & 0.05 \\
Auxiliary peak LR & 0.01 & 0.05 \\
Coefficient for the auxiliary first moment & 0.9 & 0.9 \\
Coefficient for the auxiliary second moment & 0.98 & 0.95 \\
Auxiliary weight decay & 0.05 & 0.05 \\
Auxiliary $\varepsilon$ & $10^{-8}$ & $10^{-8}$ \\
\hline
\end{tabular}
\end{table}

\begin{figure}[!htbp]
    \centering
    \includegraphics[width=0.62\linewidth]{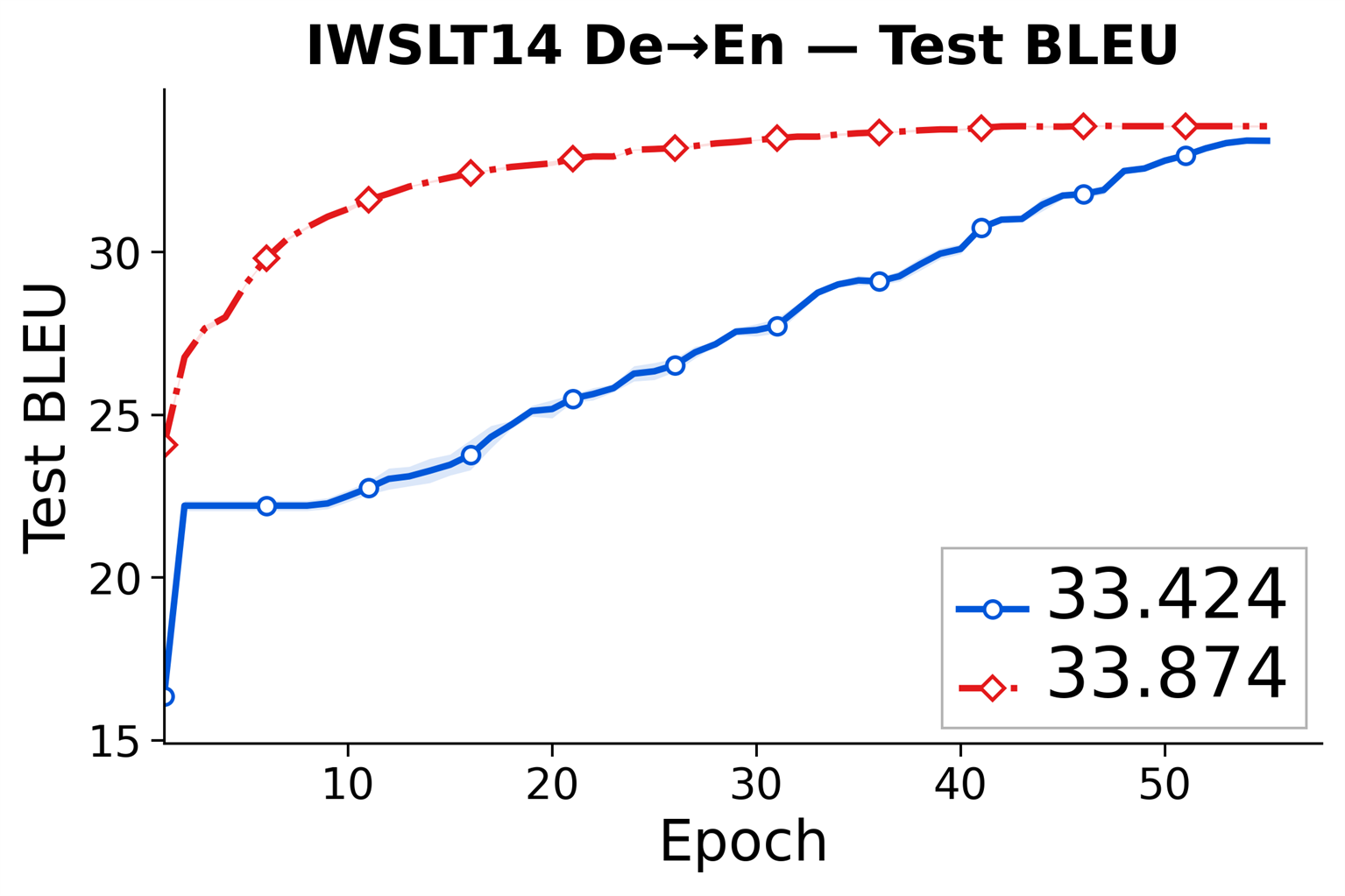}\\[-2pt]
    \includegraphics[width=0.62\linewidth]{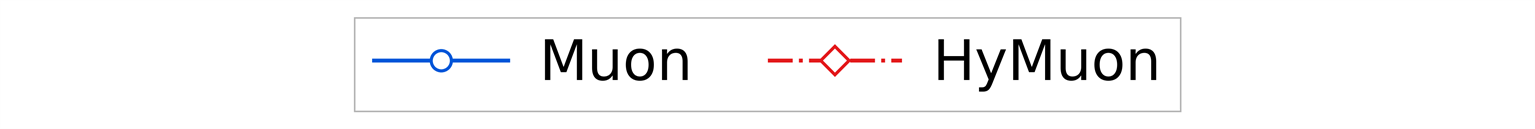}
    \caption{IWSLT14 German--English test BLEU for the repeated runs in
    Figure~\ref{fig:language-tasks}. Curves are means over repeated runs with
    bands showing one standard error; the panel legend gives the endpoint means.}
    \label{fig:iwslt-bleu}
\end{figure}

For the IWSLT14 test loss panel in Figure~\ref{fig:language-tasks} and the
BLEU panel in Figure~\ref{fig:iwslt-bleu}, the selected Muon and \hymuon{}
configurations are each evaluated over repeated runs. At each epoch, the curves
report test metrics from the checkpoint with the highest validation BLEU
observed so far. Curves are aligned by epoch and show the mean with bands
indicating one standard error.

\paragraph{Ablation of the averaged iterate.}
We additionally evaluate a \hymuon{} variant that retains the same matrix and auxiliary updates formed from the two momentum states
as the full method but evaluates and checkpoints
the training iterate \(y\), without constructing the averaged iterate \(x\). The search
covers the matrix and auxiliary learning rates, weight decay, matrix momentum,
mixing weight, and coefficient for the auxiliary second moment. The selected
configuration uses matrix and auxiliary peak learning rates \(0.05\) and
\(0.1\), matrix momentum \(0.9\), mixing weight \(0.9\), auxiliary
coefficients \((0.9,0.9)\), and weight decay \(0.05\) and \(0.1\) for the
matrix and auxiliary groups, respectively, with warmup over 4,000 steps followed
by linear decay. It attains test BLEU \(33.580\) and test loss \(4.233\).
The selected Muon and full \hymuon{} configurations attain test BLEU $33.370$ and
$33.930$, with test losses $4.229$ and $4.186$. These results show that the updates
formed from the two momentum states improve BLEU over Muon without the averaged
iterate, and that evaluation at the averaged iterate yields a further improvement.

\begin{figure}[!htbp]
    \centering
    \includegraphics[width=0.62\linewidth]{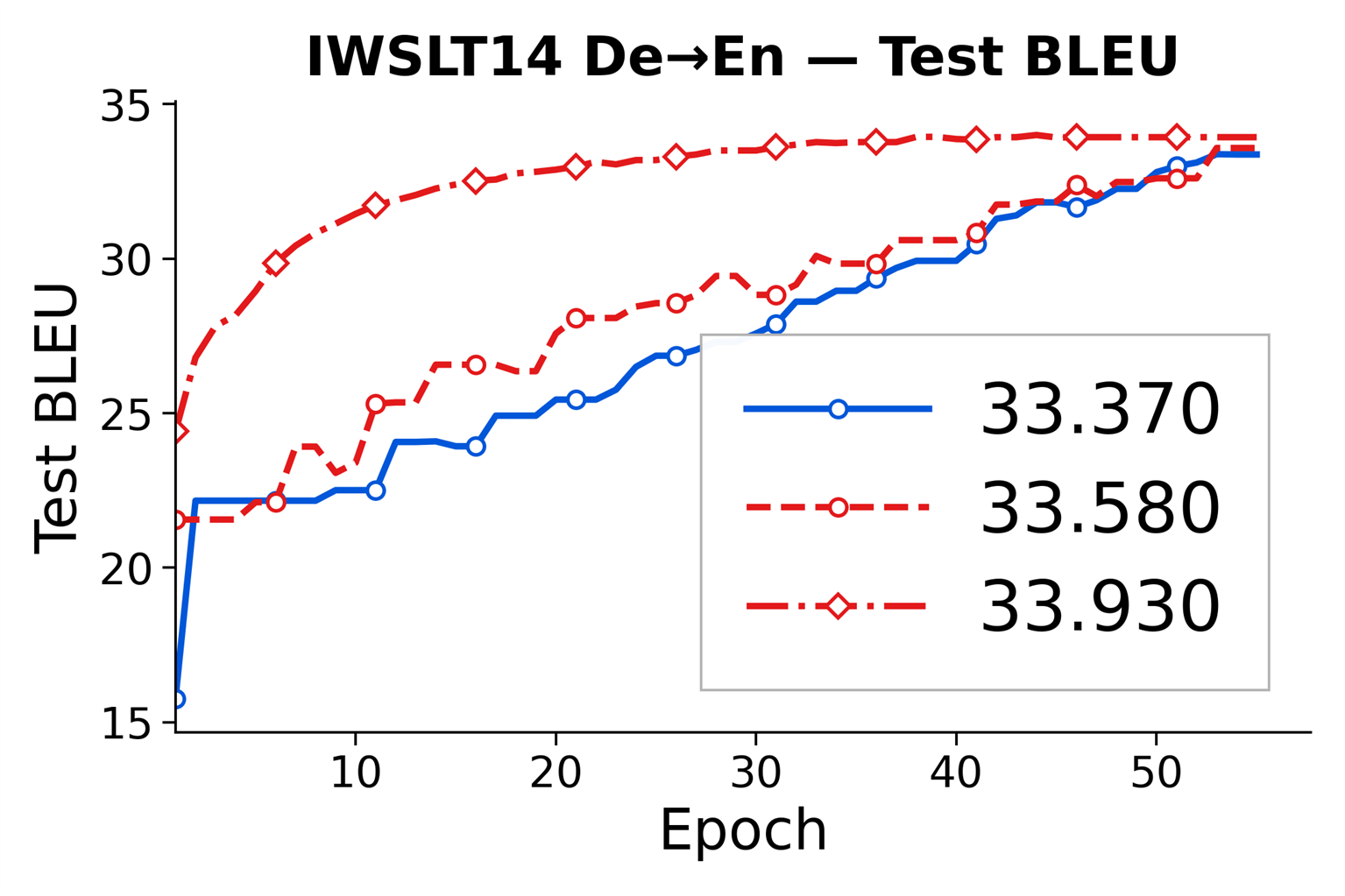}\\[-2pt]
    \includegraphics[width=0.62\linewidth]{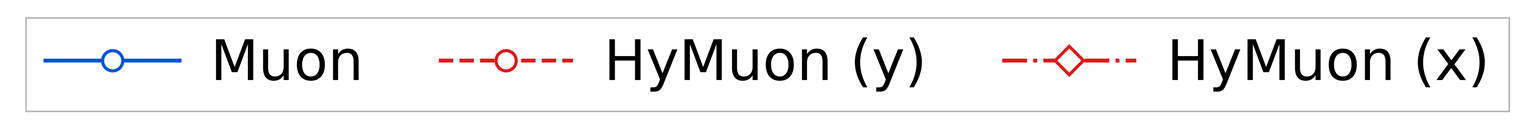}
    \caption{IWSLT14 ablation of the averaged iterate in test BLEU. \hymuon{}
    \((y)\) retains the matrix and auxiliary updates formed from the two momentum states
    and evaluates the training iterate; \hymuon{} \((x)\) evaluates the averaged
    iterate. Curves correspond to the selected configuration of each method.}
    \label{fig:iwslt-readout-ablation}
\end{figure}

\subsection{NanoGPT on OpenWebText}
\label{app:openwebtext-runs}

We train a GPT-2 Small model \citep{radford2019language} with 12 layers, 12
attention heads, hidden size 768, context length 1024, zero dropout, and 124M
parameters using nanoGPT \citep{karpathy2022nanogpt} on OpenWebText
\citep{gokaslan2019openwebtext}. A microbatch contains 12 sequences and the
global gradient accumulation factor
is 40, giving 480 sequences or 491,520 tokens per optimizer step.  Validation
is run every 1000 steps and averages 200 randomly sampled mini-batches on the
master process (2,457,600 validation tokens per evaluation). The two panels
use separate regimes of 100k and 600k steps, corresponding to 49.152B and 294.912B
processed training tokens. Table~\ref{tab:nanogpt-hyper} records the \hymuon{}
configurations used in the two regimes.
Training uses gradient clipping at norm 1 and linear warmup over 2,000 steps. Cosine decay reaches its stated minimum at the end of each
regime.  \hymuon{} uses five NS iterations on hidden matrices and
$\varepsilon=10^{-8}$ in its auxiliary adaptive branch; its auxiliary
coefficient for the first moment equals the mixing weight, $0.99$.

\begin{table}[htbp]
\centering
\caption{\hymuon{} hyperparameters in the GPT-2 124M OpenWebText experiments.}
\label{tab:nanogpt-hyper}
\renewcommand{\arraystretch}{1.15}
\begin{tabular}{|l|c|c|}
\hline
\textbf{Hyperparameter} & \textbf{100k regime} & \textbf{600k regime} \\
\hline
LR for hidden matrices & 0.01 & 0.01 \\
\hline
Minimum LR for hidden matrices & $3\times10^{-5}$ & $6\times10^{-5}$ \\
\hline
Momentum $\mu$ for hidden matrices & 0.95 & 0.95 \\
\hline
\hym{} mixing weight $\beta$ & 0.99 & 0.99 \\
\hline
Weight decay for hidden matrices & 0.2 & 0.2 \\
\hline
Auxiliary LR & 0.005 & 0.005 \\
\hline
Auxiliary minimum LR & $3\times10^{-5}$ & $6\times10^{-5}$ \\
\hline
Auxiliary $\beta_2$ & 0.999 & 0.999 \\
\hline
Auxiliary weight decay & 0.1 & 0.1 \\
\hline
Warmup steps & 2000 & 2000 \\
\hline
Learning rate schedule & cosine & cosine \\
\hline
\end{tabular}
\end{table}

In the 100k setting, AdamW uses learning rate
$6\times10^{-4}\!\to\!3\times10^{-5}$, $(\beta_1,\beta_2)=(0.9,0.95)$,
and weight decay $0.1$. Muon uses a learning rate for hidden matrices of
$2\times10^{-2}\!\to\!3\times10^{-5}$, momentum $0.95$, and zero weight decay;
its auxiliary AdamW branch uses learning rate
$3\times10^{-3}\!\to\!3\times10^{-5}$, $(\beta_1,\beta_2)=(0.9,0.95)$, and
weight decay $0.1$.

In the 600k setting, AdamW uses learning rate
$6\times10^{-4}\!\to\!6\times10^{-5}$, $(\beta_1,\beta_2)=(0.9,0.95)$,
and weight decay $0.1$. Muon uses an auxiliary learning rate of
$6\times10^{-4}$, a matrix learning rate of $0.02$, and a minimum learning
rate of $6\times10^{-5}$. Matrix momentum is $0.95$; the auxiliary
Adam branch uses
$(\beta_1,\beta_2)=(0.9,0.95)$ and $\varepsilon=10^{-8}$.  Both groups use
weight decay $0.01$.  The matrix group contains parameters with at least
two axes whose names contain neither \texttt{embed} nor \texttt{lm\_head};
the remaining parameters use the auxiliary branch. Both groups use
warmup over 2,000 steps and cosine decay through 600k steps.

Each OpenWebText panel in Figure~\ref{fig:language-tasks} shows the run using
the configuration listed above through its stated training budget. The 100k
and 600k panels are separate training regimes rather than continuations of
the same runs. The final validation losses are collected in
Table~\ref{tab:main-results}.

\subsection{LLaMA-1.3B pretraining on C4}
\label{app:c4-runs}

This experiment compares \hymuon{} with Muon for LLaMA-1.3B pretraining on C4.
We reproduce the Muon baseline reported in Table~2 of \citet{gu2026mano}
using the training configuration in their Table~5 and Appendix~B.1.
Muon's peak learning rate $3\times10^{-4}$ follows that configuration.
The model has 1.339B parameters, 24 layers, hidden size 2048, feed-forward size
5461, 32 attention heads, and a 32k vocabulary. C4/en is streamed with
document packing and a \texttt{t5-base} tokenizer; each step contains
512 sequences of length 1024.
Both methods train for 10,000 steps, 1,000 linear warmup steps, cosine
decay to one tenth of the peak learning rate, gradient clipping at 1.0, and an
evaluation on 10,017,216 validation tokens every 1,000 steps. Each method
processes 5,242,880,000 input tokens.

Muon uses the hyperparameters selected by \citet{gu2026mano}: five NS iterations,
momentum 0.95, matrix and auxiliary learning rates $3\times10^{-4}$, and
weight decay 0.1. For \hymuon{}, we search the matrix and auxiliary learning
rates, matrix momentum, mixing weight, the coefficient for the auxiliary second moment,
and weight decay for each parameter group. The selected configuration uses matrix and
auxiliary learning rates
$3\times10^{-2}$, mixing weight 0.99, Muon momentum 0.95,
coefficient for the auxiliary second moment 0.999, and weight decay 0.2 for each group.
Both parameter groups are evaluated at their averaged iterates.

At 10k steps, validation loss and perplexity are $2.903$ and $18.221$
for Muon, and $2.777$ and $16.078$ for \hymuon{}. Figure~\ref{fig:c4-perplexity}
shows the perplexity curves; they are the exponential of the loss curves in
Figure~\ref{fig:language-tasks}.

\begin{figure}[!htbp]
    \centering
    \includegraphics[width=0.62\linewidth]{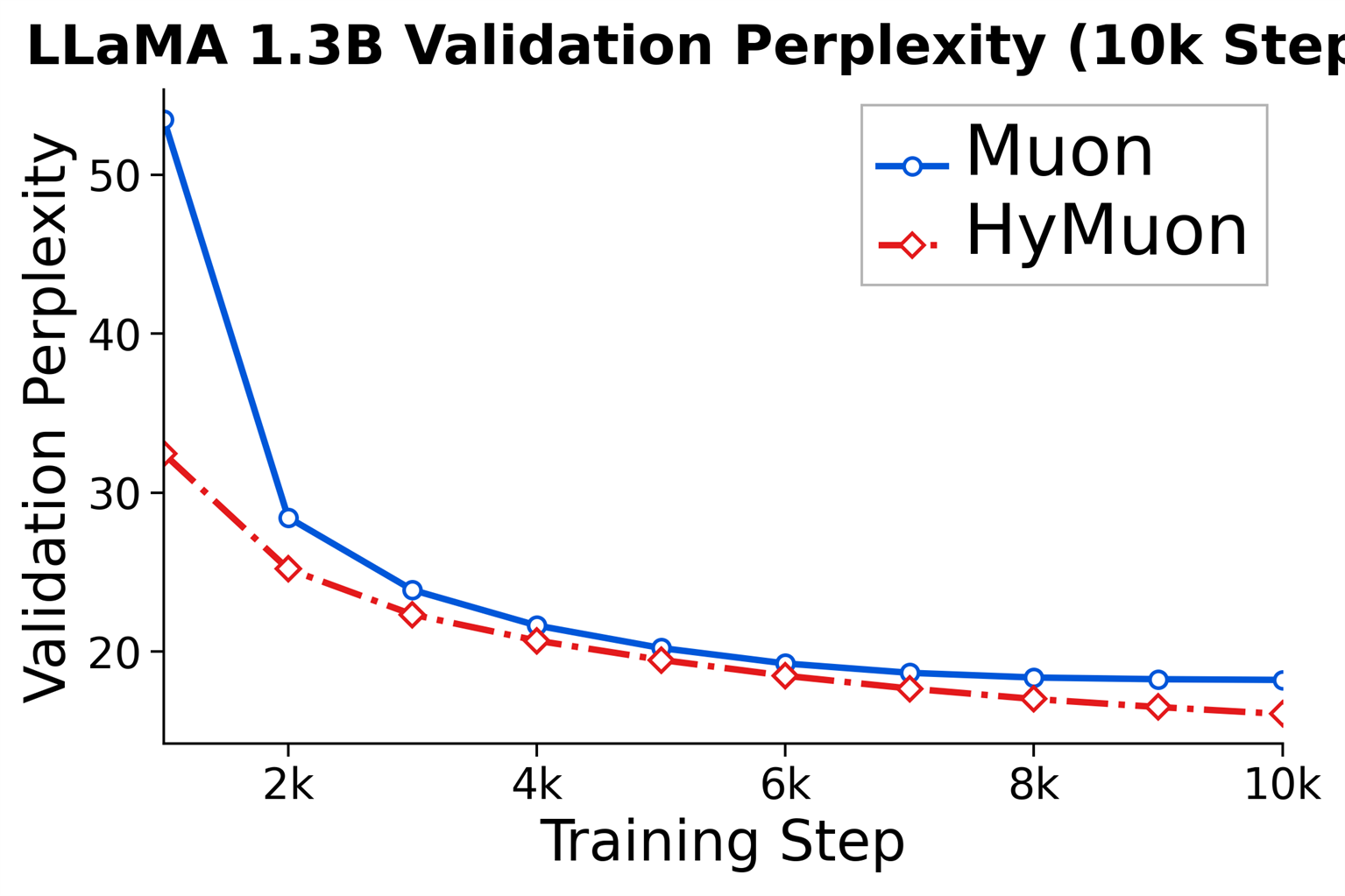}
    \caption{LLaMA-1.3B validation perplexity on C4 with document packing through 10k steps
    in the comparison against the reproduced Muon baseline. This
    is the exponential transform of the curves of validation loss in
    Figure~\ref{fig:language-tasks}; \hymuon{} is evaluated at its averaged iterate.}
    \label{fig:c4-perplexity}
\end{figure}

\subsection{LLaMA-124M FineWeb benchmark}
\label{app:fineweb-124m}

\paragraph{Protocol and data.}
We use the official framework of \citet{semenov2025benchmarking} and its
124M protocol for large batches (Section~4.1 and Figure~3(b) of that work).
We run separate experiments for 8k steps with approximately 1B tokens and
16k steps with 2.1B tokens. Each budget starts a new training run with its
own schedule for the learning rate. The model has 123.59M parameters,
12 layers, hidden size 768, 12 attention heads, SwiGLU feed-forward size
2048, RMSNorm, RoPE, and tied input/output embeddings. GPT-2 tokenization
uses a padded vocabulary of 50,304 and context length 512.

All four methods use a common FineWeb \texttt{sample-10BT} subset
and the same deterministic split. Documents are processed in shard order with an
end of text token appended; the first 10,000,001 token IDs form validation,
and subsequent documents supply 8,400,000,001 training token IDs.
Data parallelism across four ranks uses 64 sequences per rank, giving a global
batch of 256 sequences and 131,072 tokens per step. The two budgets
process 1,048,576,000 and 2,097,152,000 training tokens, respectively.
Dropout is zero, and gradients are clipped at norm 0.5. Each parameter group's learning rate warms up over
2,000 steps and then decays by a cosine schedule to 1\% of its peak.

\paragraph{Optimizer configurations.}
AdamW and AdEMAMix use the benchmark's published 124M configurations; Muon uses
its implementation of basic Muon and configuration for large batches (Figure~4
and Table~12 of \citet{semenov2025benchmarking}). The AdEMAMix algorithm
itself is from \citet{pagliardini2024ademamix}. Table~\ref{tab:fineweb-hparams}
lists the fixed configurations used for Figure~\ref{fig:fineweb-124m}.
AdamW and AdEMAMix apply weight decay to weights in linear layers, excluding
the tied embedding/output weights and normalization parameters.
For Muon and \hymuon{}, hidden attention/MLP matrices form the matrix group;
the tied embedding/output head and RMSNorm parameters form the auxiliary
group. Both matrix optimizers use five NS iterations. Basic Muon uses
Nesterov momentum and applies decay only to its
AdamW auxiliary group. \hymuon{} uses standard momentum, decoupled decay
in both groups, and the adaptive auxiliary recurrence of
Appendix~\ref{app:fast-hymuon}. Its \hym{} mixing weight is $\beta=0.9$.
The auxiliary decay coefficients are $(\rho_1,\rho_2)=(0.97,0.999)$ at
1B tokens and $(0.99,0.999)$ at 2.1B tokens. Since the model uses RMSNorm,
no BatchNorm recalibration is required.

\begin{table}[htbp]
\centering
\caption{Fixed optimizer hyperparameters for the 124M FineWeb comparison.
Baseline columns apply to both budgets; the \hymuon{} column is the 1B
configuration, and the 2.1B changes appear below the table.
Learning rates are peak values. An entry of the form $a/b$ gives the matrix
and auxiliary values; for AdamW and AdEMAMix the single rate applies to all
parameters.}
\label{tab:fineweb-hparams}
\small
\setlength{\tabcolsep}{4pt}
\begin{tabular}{lcccc}
\hline
 & AdamW & AdEMAMix & Muon & \hymuon{} \\
\hline
Learning rate & $0.001$ & $0.001$ & $0.01/0.001$ & $0.045/0.045$ \\
Weight decay & $0.1$ & $0.1$ & $0/0.1$ & $0.1/0.1$ \\
Matrix momentum & -- & -- & $0.95$ & $0.85$ \\
Adaptive $(\beta_1,\beta_2)$ & $(0.8,0.999)$ & $(0.9,0.999)$ & $(0.8,0.999)$ & $(0.97,0.999)$ \\
Adaptive $\varepsilon$ & $10^{-8}$ & $10^{-8}$ & $10^{-8}$ & $10^{-10}$ \\
\hym{} mixing weight $\beta$ & -- & -- & -- & $0.9$ \\
Slow EMA $\beta_3$ & -- & $0.999$ & -- & -- \\
Slow EMA multiplier $\alpha$ & -- & $8$ & -- & -- \\
\hline
\end{tabular}
\end{table}

At 2.1B tokens, \hymuon{} uses matrix/auxiliary learning rates
$0.015/0.015$ and auxiliary betas $(0.99,0.999)$; its remaining settings
are unchanged. AdEMAMix warms up both its slow decay and mixing multiplier
over the full training horizon: 8,000 or 16,000 steps, respectively.
\hymuon{} uses the adaptive \hym{} recurrence in
Appendix~\ref{app:fast-hymuon} for both token budgets. It computes gradients at the training iterate and
evaluates all parameter groups at the averaged iterate; after evaluation, the
training iterates are restored. AdamW,
AdEMAMix, and Muon evaluate their current training parameters.

\paragraph{Evaluation and curves.}
Validation is evaluated at initialization, every 115 steps, and at
the final step (8,000 or 16,000). Intermediate evaluations use a fixed set
of 64 batches of 64 sequences; the final evaluation uses all complete batches
of the validation reader. Figure~\ref{fig:fineweb-124m} shows the resulting
curves of validation loss.

\FloatBarrier

\clearpage
\section{Optimization efficiency on modded-nanoGPT Track 3}
\label{app:track3}

We evaluate \hymuon{} on modded-nanoGPT Track~3
\citep{yau2026emanesterov}. The evaluation compares the number of updates required
by \hymuon{} and a publicly tuned clean Muon baseline to reach a given validation
loss under the same architecture, dataset, and batch size.\footnote{Benchmark setup:
\href{https://github.com/KellerJordan/modded-nanogpt/blob/452cc392f6af4f29834d1af64edeac55d1315397/records/track_3_optimization/README.md}{modded-nanoGPT Track~3}.}

\paragraph{Setup.}
We train a GPT with 162.35M parameters (12 layers, hidden size 768, 6 heads)
on FineWeb10B with context length 1,024, 524,288 tokens per update, and
10,485,760 validation tokens. We run eight seeds (636--643) for
3,200 updates each on one H100 80GB GPU and compare against all ten
public runs of the clean Muon baseline tuned for 3,250 steps.\footnote{Baseline source:
\href{https://github.com/KellerJordan/modded-nanogpt/pull/323}{PR~\#323}
and its \href{https://github.com/KellerJordan/modded-nanogpt/tree/452cc392f6af4f29834d1af64edeac55d1315397/records/track_3_optimization/results/20260610_tuned_baseline_3250}{archived run logs}.}
For this benchmark, we use the \hymuon{} configuration developed for Track~3.
It applies the slow and fast momentum construction to hidden matrices but evaluates
the training parameters directly, without iterate averaging. The clean Muon baseline
is also evaluated at its training parameters.

\paragraph{Optimizer configuration.}
For hidden attention and MLP matrices, we combine Muon's fast EMA with
a linearly weighted average of all past gradients. The slow average has
mixing weight $0.05$ and is scaled to match the EMA's accumulated weight.
We retain Nesterov mixing, momentum $0.95$, and twelve NS iterations.
Embeddings, the output head, and parameters with one dimension use the
baseline AdamW settings, with peak learning rates $0.7$, $0.004$, and $0.015$.
All three groups use betas $(0.8,0.95)$,
$\varepsilon=10^{-10}$, and weight decay $0.001$.
Both configurations keep learning rates constant for the first $30\%$ of
training and decay them linearly over the remaining $70\%$. For matrix
parameters, \hymuon{} uses peak learning rate $0.030$ and weight decay
$0.05$, with the latter decaying alongside the learning rate. Clean Muon
uses peak learning rate $0.025$ and fixed weight decay $0.05$.

\begin{table}[H]
\centering
\caption{Track~3 results. Final validation loss is reported as
mean $\pm$ sample SD.}
\label{tab:track3}
\small
\setlength{\tabcolsep}{5pt}
\renewcommand{\arraystretch}{1.15}
\begin{tabular}{lrrrr}
\hline
Method & Runs & Updates & Tokens (B) & Validation loss \\
\hline
Clean Muon & 10 & 3,250 & 1.704 & $3.27866\pm0.00134$ \\
\hymuon{} & 8 & 3,200 & 1.678 & $3.27832\pm0.00100$ \\
\hline
\end{tabular}
\end{table}

\begin{figure}[H]
    \centering
    \includegraphics[width=0.98\linewidth]{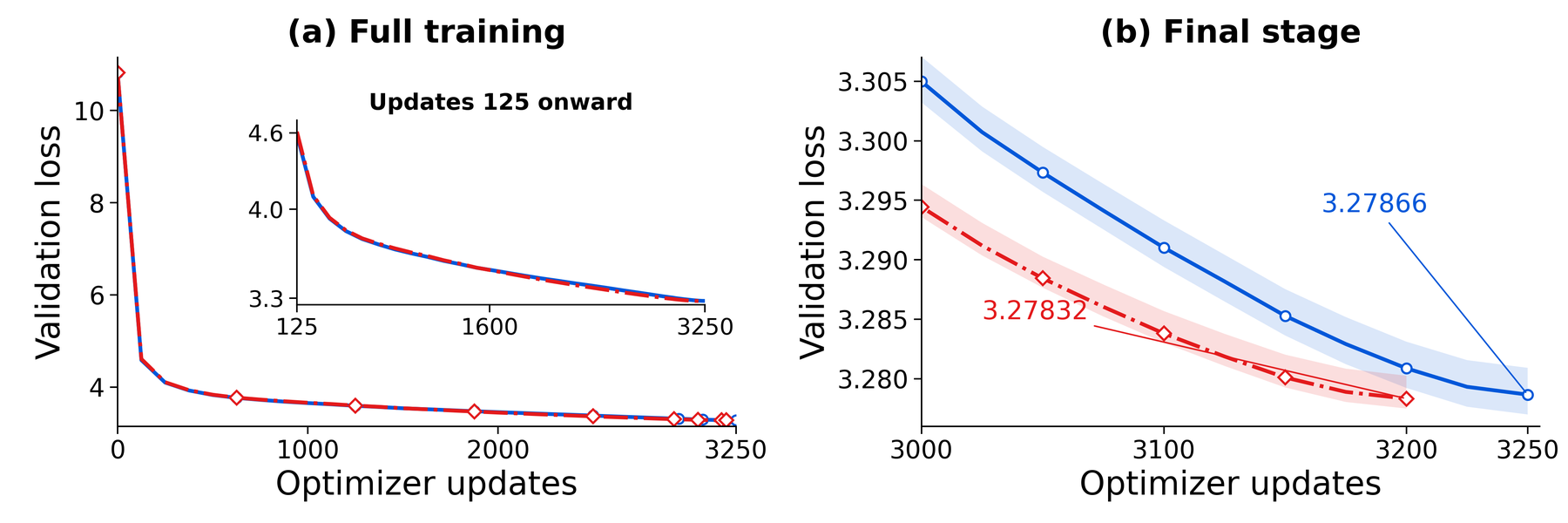}\\[-2pt]
    \includegraphics[width=0.98\linewidth]{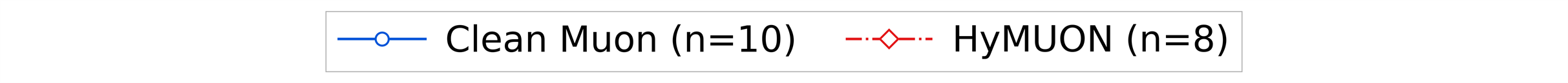}
    \caption{Track~3 validation loss over the full training runs (left)
    and the final stage (right). Curves show run means; shading indicates
    the range across runs.}
    \label{fig:track3-loss}
\end{figure}

\paragraph{Results.}
\hymuon{} achieves a mean validation loss of $3.27832$ after 3,200 updates,
compared with $3.27866$ after 3,250 updates for the publicly tuned clean
Muon baseline (Table~\ref{tab:track3}; Figure~\ref{fig:track3-loss}).
At comparable final validation loss, \hymuon{} uses 50 fewer updates
and 26.2M fewer training tokens, reducing the training budget by $1.54\%$.

\FloatBarrier